\newtheorem{theorem}{Theorem}
\newtheorem{definition}[theorem]{Definition}
\newtheorem{proposition}[theorem]{Proposition}
\DeclareMathOperator{\Tr}{tr}
\DeclareMathOperator{\cov}{cov}
\DeclareMathOperator{\maximize}{maximize}
\DeclareMathOperator{\minimize}{minimize}
\newcommand*\samethanks[1][\value{footnote}]{\footnotemark[#1]}
\title{Learning to Transfer with von Neumann Conditional Divergence\\
(Supplementary Material)
}
\title{Learning to Transfer with von Neumann Conditional Divergence}
\author{Ammar Shaker\textsuperscript{\rm 1}\thanks{A. Shaker and S. Yu are the corresponding authors} , Shujian Yu\textsuperscript{\rm 2,3}{\samethanks} , and Daniel O{\~n}oro-Rubio\textsuperscript{\rm 1}\\[3mm]
\textsuperscript{\rm 1} NEC Laboratories Europe, Heidelberg, Germany\\
\textsuperscript{\rm 2} UiT - The Arctic University of Norway, Tromsø, Norway\\
\textsuperscript{\rm 3} Xi'an Jiaotong University, Xi'an, Shaanxi, China\\
ammar.shaker@neclab.eu, yusj9011@gmail.com, daniel.onoro@neclab.eu}
\def\@maketitle{%
  \vbox to 2.25in{%
    \hsize\textwidth
    \linewidth\hsize
    \vfil
    \centering
    {\LARGE \@title \par}
    \vskip 2em
    {\large \begin{tabular}[t]{c}\@author \end{tabular}\par}
    \vfil}}
\begin{document}
\maketitle 

\begin{abstract}
The similarity of feature representations plays a pivotal role in the success of problems related to domain adaptation. Feature similarity includes both the invariance of marginal distributions and the closeness of conditional distributions given the desired response $y$ (e.g., class labels). Unfortunately, traditional methods always learn such features without fully taking into consideration the information in $y$, which in turn may lead to a mismatch of the conditional distributions or the mix-up of discriminative structures underlying data distributions. In this work, we introduce the recently proposed von Neumann conditional divergence to improve the transferability across multiple domains. 
We show that this new divergence is differentiable and eligible to easily quantify the functional dependence between features and $y$. Given multiple source tasks, we integrate this divergence to capture discriminative information in $y$ and design novel learning objectives assuming those source tasks are observed either simultaneously or sequentially. In both scenarios, we obtain favorable performance against state-of-the-art methods in terms of smaller generalization error on new tasks and less catastrophic forgetting on source tasks (in the sequential setup).
\end{abstract}

\section{Introduction}

Deep learning has achieved remarkable successes in diverse machine learning problems and applications~\cite{pouyanfar2018survey}. However, most of deep learning applications are limited to a single or isolated task, in which a network is usually trained from scratch based on a large scale labeled dataset~\cite{donahue2014decaf}. As a result, the training of deep neural networks becomes frustrating when labeled data is scarce or expensive to obtain. In these scenarios, the efficient transfer of information from one or multiple tasks to another and the prevention of negative transfer amongst all tasks become fundamental techniques for the successful deployment of a deep learning system~\cite{yosinski2014transferable,riemer_learning_2018}. 

Different problems arise depending on the number of tasks and how tasks arrive (e.g., concurrently or sequentially). These problems range from the standard domain adaptation from a single source domain to a target domain~\cite{pan2010domain}, up to the continual learning which trains a single network on a series of interrelated tasks~\cite{parisi2019continual,de2019continual}, with the goal of improving positive transfer and mitigating negative interference~\cite{riemer_learning_2018}.


Tremendous efforts have been made to improve transferability across multiple domains~\cite{ganin2016domain,zhao2018adversarial,zhao2019learning}. Most of the works aim to learn domain-invariant features $\mathbf{t}$ without the knowledge of class label or desired response $y$. Common techniques to match feature marginal distributions include the maximum mean discrepancy (MMD)~\cite{pan2010domain,zhu2019aligning}, the moment matching~\cite{zellinger2017central}, 
the $\mathcal{H}$ divergence~\cite{zhao2018adversarial}, the Wasserstein distance~\cite{wang2019tmda}, etc.
For classification, $p(y|\mathbf{t})$ can be modeled with a multinomial distribution~\cite{pei2018multi,zhao2020domain}. However, it is still an open problem to explicitly capture the functional dependence between $\mathbf{t}$ and $y$ for regression.    

Let us consider a network that consists of a feature extractor $f_\theta:\mathcal{X}\rightarrow \mathcal{T}$ (parametrized by $\theta$) and a predictor $h_\varphi:\mathcal{T}\rightarrow \mathcal{Y}$ (parameterized by $\varphi$); the similarity of latent representation $\mathbf{t}$ includes two aspects: the invariance of marginal distributions (i.e., $p(f_\theta(\mathbf{x}))$) across different domains and the functional closeness of using $\mathbf{t}$ to predict $y$. The predictive power of $h_\varphi$ can be characterized by the conditional distribution $p(y|\mathbf{t})$. From an information-theoretic perspective, the conditional entropy $H(y|\mathbf{t})=-\mathbb{E}(\log(p(y|\mathbf{x})))$ also measures the dependence between $y$ and $\mathbf{t}$. 

Our main contributions are summarized as follows:
\begin{itemize}
  \item{We introduce the von Neumann conditional divergence $D_{vN}$~\cite{yu2020measuring} to the problems of domain adaptation. This new divergence can easily quantify the functional dependence between latent features $\mathbf{t}$ and the desired response $y$, in both classification and regression.}
  \item{We show the utility of $D_{vN}$ in a standard domain adaptation setup in which multiple source tasks are observed either simultaneously (\textit{a.k.a.}, multi-source domain adaptation) or sequentially (\textit{a.k.a.}, continual learning).}
  \item{For multi-source domain adaptation (MSDA),}
    \begin{itemize}
      \item{Given a hypothesis set $\mathcal{H}$ and the new loss function induced by $D_{vN}$, we define a new domain discrepancy distance $\mathcal{D}_{\text{M-disc}}(P,Q)$ to measure the closeness of two distributions $P$ and $Q$.}
      \item{By generating a weighted source domain $D_\alpha$ with probability $P_\alpha=\sum_{i=1}^{K}{w_i P_{s_i}}$ (subject to $\sum_{i=1}^{K}{w_i=1}$), in which $P_{s_i}$ denotes the distribution of the $i$-th source domain, we derive a new generalization bound based on $\mathcal{D}_{\text{M-disc}}$ for MSDA.}
      \item{We design a new objective based on the derived bound and optimize it as a min-max game. Compared to four state-of-the-art (SOTA) methods, our approach reduces the generalization error and identifies meaningful strength of “relatedness” from each source to the target domain.}
    \end{itemize}
  \item{For the problem of continual learning (CL),}
    \begin{itemize}
        \item{We show that the functional similarity of latent features $\mathbf{t}$ to the desired response $y$ is able to quantify the importance of network parameters to previous tasks. Based on this observation, we develop a new regularization-based CL approach by network modularization~\cite{watanabe2018modular}.}
        \item{We compare our approach with the baseline elastic weight consolidation (EWC)~\cite{kirkpatrick2017overcoming} and three other SOTA methods on five benchmark datasets. Empirical results demonstrate that our approach reduces catastrophic forgetting and is less sensitive to the choice of hyper-parameters.}
    \end{itemize}
    
\end{itemize}

\section{Background Knowledge}
\label{Background_Knowledge}

\subsection{Problem Setup}
Let $\mathcal{X}$ and $\mathcal{Y}$ be the input and the desired response (e.g., class labels) spaces. Given $K$ source domains (or tasks) $\{D_i\}_{i=1}^K$, we obtain $N_i$ training samples $\{\mathbf{x}_i^j,y_i^j\}_{j=1}^{N_i}$ in the $i$-th source $D_i$, which follows a distribution $P_i(\mathbf{x},y)$ (defined over $\mathcal{X}\times\mathcal{Y})$.

In a typical (unsupervised) domain adaptation setup, the goal is to generalize a parametric model learned from data samples in $\{D_i\}_{i=1}^K$ to a different, but related, target domain $D_{K+1}$ following a new distribution $P_{K+1}(\mathbf{x},y)$, in which we assume no access to the true response $y$ in the data sampled from $P_{K+1}(\mathbf{x},y)$, i.e., minimizing the objective
\begin{equation}
    \mathbb{E}_{(\mathbf{x},y)\sim D_{K+1}}\left[\ell(w;\mathbf{x},y)\right],
\end{equation}
where $\ell(w;\mathbf{x},y):\mathcal{W}\rightarrow\mathbb{R}$ is the loss function of $w$ associated with sample $(\mathbf{x},y)$, and $\mathcal{W}\subseteq\mathbb{R}^d$ is the model parameter space.

In an online scenario where tasks arrive sequentially, lifelong learning searches for models minimizing the population loss over all seen $(K+1)$ tasks, where access to previous tasks $\{D_i\}_{i=1}^K$ is either limited or prohibited:
\begin{equation}
    \sum_{i=1}^{K+1}{\mathbb{E}_{(\mathbf{x},y)\sim D_i}\left[\ell(w;\mathbf{x},y)\right]}.
\end{equation}

Obviously, this poses new challenges, as the network is required to ensure positive transfer from $\{D_i\}_{i=1}^K$ to $D_{K+1}$, and, at the same time, avoid negative interference to its performance on $\{D_i\}_{i=1}^K$.

In this work, we consider multi-source domain adaptation for regression (i.e., $y\in\mathbb{R}$) and a standard continual learning setup on image classification (i.e., $y$ contains $m$ unique categories $\{c_1,\dots,c_{m}\}$).

\subsection{von Neumann Conditional Divergence}
Let us draw $N$ samples from two joint distributions $P_1(\mathbf{x},y)$ and $P_2(\mathbf{x},y)$, i.e., $\{\mathbf{x}_1^i,y_1^i\}_{i=1}^{N}$ and $\{\mathbf{x}_2^i,y_2^i\}_{i=1}^{N}$. Here, $y$ refers to the response variable, and $\mathbf{x}$ can be either the raw input variable or the feature vector $\mathbf{z}=f_\theta(\mathbf{x})$ after a feature extractor $f_\theta:\mathcal{X}\rightarrow\mathcal{Z}$ parameterized by $\theta$.

Yu et al. \cite{yu2020measuring} define the relative divergence from $P_1\left(y|\mathbf{x}\right)$ to $P_2\left(y|\mathbf{x}\right)$ as:
\begin{equation}\label{eq:VN_conditional_divergence}
D(P_1(y|\mathbf{x})\|P_2(y|\mathbf{x})) = D_{vN}(\sigma_{\mathbf{x}y}\|\rho_{\mathbf{x}y}) - D_{vN}(\sigma_{\mathbf{x}}\|\rho_{\mathbf{x}}), 
\end{equation}
where $\sigma_{\mathbf{x}y}$ and $\rho_{\mathbf{x}y}$ denote the sample covariance matrices evaluated on $\{\mathbf{x}_1^i,y_1^i\}_{i=1}^{N}$ and $\{\mathbf{x}_2^i,y_2^i\}_{i=1}^{N}$, respectively. 
Similarly, $\sigma_{x}$ and $\rho_{x}$ refer to the sample covariance matrices evaluated on $\{\mathbf{x}_1^i\}_{i=1}^{N}$ and $\{\mathbf{x}_2^i\}_{i=1}^{N}$, respectively. 
$D_{vN}$ is the von Neumann divergence~\cite{nielsen2002quantum,kulis2009low}, $D_{vN}(\sigma\|\rho) = \Tr(\sigma \log \sigma - \sigma \log \rho - \sigma + \rho)$, which operates on two symmetric positive definite (SPD) matrices, $\sigma$ and $\rho$. Eq.~(\ref{eq:VN_conditional_divergence}) is not symmetric. To achieve symmetry, one can simply take the form:
\begin{equation}\label{eq:VN_conditional_divergence_sym}
\begin{aligned}
D(P_1(y|\mathbf{x}):P_2(y|\mathbf{x}))  &=& \frac{1}{2}\large(D(P_1(y|\mathbf{x})\|P_2(y|\mathbf{x})) + D(P_2(y|\mathbf{x})\|P_1(y|\mathbf{x}))\large).
\end{aligned}
\end{equation}



{\color{black}As a complement to~\cite{yu2020measuring}, we additionally provide the convergence behavior analysis of the matrix-based von Neumann divergence on sample covariance matrix to the true distributional distance (see supplementary material), although this is not the main contribution of this work.

Note that, aligning distributions or conditional distributions always plays a pivotal role in different domain adaptation related problems. Before our work, the MMD has been extensively investigated. However, there is no universal agreement on the definition of conditional MMD~\cite{park2020measure}, and most of existing operator-based approaches on conditional MMD depend on stringent assumptions which are usually violated in practice (e.g.,~\cite{ren2016conditional}). This unfortunate fact urges the need for exploring the possibility of a new divergence measure that is both simple to compute and differentiable. Moreover, compared to MMD that relies on a kernel function with width $\sigma$ which is always hard to tune in practice, Eqs.~(\ref{eq:VN_conditional_divergence}) and (\ref{eq:VN_conditional_divergence_sym}) defined over sample covariance matrix are hyper-parameter free.   
}

\section{Interpreting the von Neumann Conditional Divergence as a Loss Function}
In case $P_1(\mathbf{x},y)$ and $P_2(\mathbf{x},y)$ have the same marginal distribution $P(\mathbf{x})$ or share the same input variable $\mathbf{x}$ (i.e., $\sigma_\mathbf{x}=\rho_\mathbf{x}$), the symmetric von Neumann conditional divergence (Eq.~(\ref{eq:VN_conditional_divergence_sym})) reduces to:
{\fontsize{9.pt}{10.8pt} \selectfont 
\begin{equation}\label{eq:Jeffery_VN}
    D(P_1(y|\mathbf{x}):P_2(y|\mathbf{x}))=\frac{1}{2}
    \Tr {\left(\left(\sigma_{\mathbf{x}y}-\rho_{\mathbf{x}y}\right)\left(\log{\sigma_{\mathbf{x}y}}-\log{\rho_{\mathbf{x}y}}\right)\right)}.
\end{equation}
}
We term the r.h.s. of Eq.~(\ref{eq:Jeffery_VN}) as the Jeffery von Neumann divergence on $\sigma_{\mathbf{x}y}$ and $\rho_{\mathbf{x}y}$, and denote it as $J_{vN}(\sigma_{\mathbf{x}y}:\rho_{\mathbf{x}y})$. 

Taking $X=\sigma_{\mathbf{x},f(\mathbf{x})}$ and $Y=\sigma_{\mathbf{x},\hat{f}(\mathbf{x})}$, $\sqrt{J_{vN}(\sigma_{\mathbf{x},f(\mathbf{x})}:\sigma_{\mathbf{x},\hat{f}(\mathbf{x})})}$ can be interpreted and used as a loss function to train a deep neural network. Here, $\mathbf{x}$ refers to the input variable, $f:\mathbf{x}\rightarrow y$ is the true labeling or mapping function, $\hat{f}$ is the estimated predictor, $f(\mathbf{x})=y$ is the true label or response variable, and $\hat{f}(\mathbf{x})=\hat{y}$ is the predicted output. $\sigma_{\mathbf{x},f(\mathbf{x})}$ and $\sigma_{\mathbf{x},\hat{f}(\mathbf{x})}$ denote the covariance matrices for the pairs of variables $\{\mathbf{x},f(\mathbf{x})\}$ and $\{\mathbf{x},\hat{f}(\mathbf{x})\}$, respectively. Fig.~\ref{fig:TRE_loss_geometry} depicts an illustrative explanation. 

Before presenting our methodology in both multi-source domain adaptation and continual learning, we show three appealing properties associated with $\sqrt{J_{vN}}$ (see supplementary material for proofs and empirical justifications):
\begin{itemize}
    \item $\sqrt{J_{vN}}$ has an analytical gradient and is automatically differentiable;
    \item Compared with the mean square error (MSE) loss, $\sqrt{J_{vN}(\sigma_{\mathbf{x},f(\mathbf{x})}:\sigma_{\mathbf{x},\hat{f}(\mathbf{x})})}$ enjoys improved robustness. 
    \item Compared with the cross-entropy (CE) loss, $\sqrt{J_{vN}(\sigma_{\mathbf{x},f(\mathbf{x})}:\sigma_{\mathbf{x},\hat{f}(\mathbf{x})})}$ satisfies the triangle inequality. That is, given three models $f_1$, $f_2$ and $f_3$, we have:\\
    $\sqrt{J_{vN}(\sigma_{\mathbf{x},f_1(\mathbf{x})}:\sigma_{\mathbf{x},f_2(\mathbf{x})})}\leq \sqrt{J_{vN}(\sigma_{\mathbf{x},f_1(\mathbf{x})}:\sigma_{\mathbf{x}, f_3(\mathbf{x})})}+ \sqrt{J_{vN}(\sigma_{\mathbf{x},f_3(\mathbf{x})}:\sigma_{\mathbf{x},f_2(\mathbf{x})})}$.
\end{itemize}

\begin{figure}
 \centering
\includegraphics[width=0.5\linewidth]{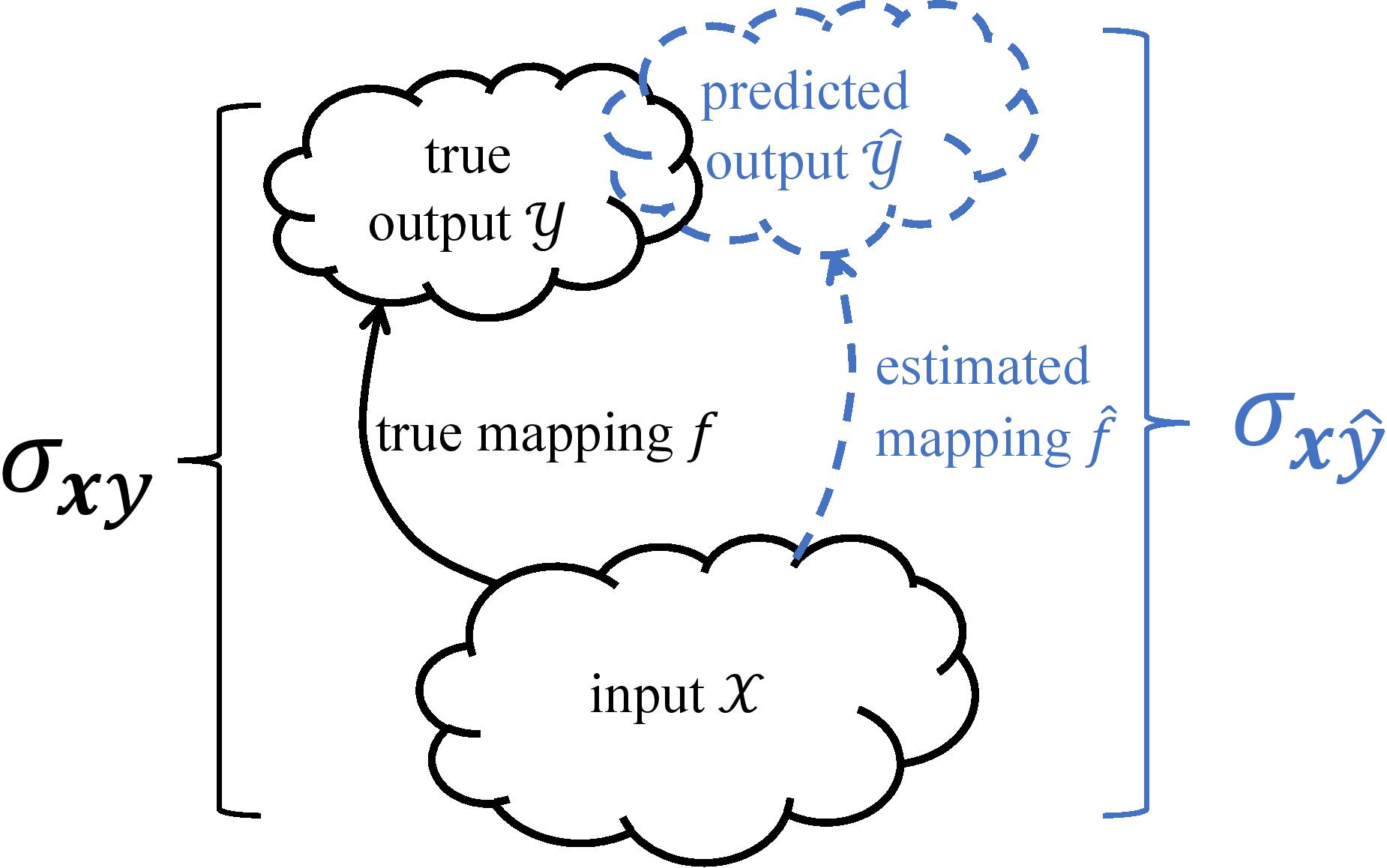}
\caption{The geometry of loss $\mathcal{L}:\mathbb{S}_{++}^p\times\mathbb{S}_{++}^p\rightarrow\mathbb{R}_+$: our $\sqrt{J_{vN}}$ searches for an ``optimal" predictor $\hat{f}$ that minimizes the discrepancy between the two covariance matrices $\sigma_{\mathbf{x},f(\mathbf{x})}$ and $\sigma_{\mathbf{x},\hat{f}(\mathbf{x})}$.}
\label{fig:TRE_loss_geometry}
\end{figure}

\section{MSDA by Matrix-based Discrepancy Distance}
\subsection{Bounding the von Neumann Conditional Divergence in Target Domain}

Motivated by the discrepancy distance $D_{disc}$ \cite{cortes2014domain} based on a loss function $\mathcal{L}:\mathcal{Y}\times\mathcal{Y}\rightarrow\mathbb{R}_+$, we first present our matrix-based discrepancy distance $D_{\text{M-disc}}$ to quantify the discrepancy between two distributions $P$ and $Q$ over $\mathcal{X}$ based on our new loss $\mathcal{L}:\mathbb{S}_{++}^p\times\mathbb{S}_{++}^p\rightarrow\mathbb{R}_+$ (i.e., $\sqrt{J_{vN}(\sigma_{\mathbf{x},f(\mathbf{x})}:\sigma_{\mathbf{x},\hat{f}(\mathbf{x})})}$).

\begin{definition}{}\label{def:matrix-based_hypothesis_divergence}
	The matrix-based discrepancy distance ($D_{\text{M-disc}}$) measures the longest distance between two domains (with respect to the hypothesis space $\mathcal{H}$) in a metric space equipped with the square root of Jeffery von Neumann divergence $J_{vN}$ as a distance function. Given domains $D_s$ and $D_t$ and their corresponding distributions $P_s$ and $P_t$, for any two hypotheses $h,h'\in \mathcal{H}$,  $D_{\text{M-disc}}$ takes the form:
	\begin{equation}
	\begin{aligned}
	D_{\text{M-disc}}(P_s,P_t) = \max_{h,h^\prime \in \mathcal{H}} \Big| \sqrt{J_{vN}(\sigma^s_{x,h(x)}:\sigma^s_{x,h^\prime(x)})} - \sqrt{J_{vN}(\sigma^t_{x,h(x)}:\sigma^t_{x,h^\prime(x)})} \Big|, 		
	\label{eq:quantumD_H}
	\end{aligned}
	\end{equation}
	with $a\in \{s,t\}$ and $g \in \{h,h^\prime\}$, the matrix $\sigma^a_{\mathbf{x},g(\mathbf{x})}$ is the covariance matrix for the pair of variable ${\mathbf{x},g(\mathbf{x})}$ in domain $D_a$. 
\end{definition}

Same to the notable $\mathcal{H}\Delta\mathcal{H}$ divergence in binary classification~\cite{ben2010theory}, $D_{\text{M-disc}}$ reaches the maximum value if a predictor $h'$ is very close to $h$ on the source domain but far on the target domain (or vice-versa). 
When fixing $h$, $D_{\text{M-disc}}(P_s,P_t;h)$ simply searches only for $h^\prime \in \mathcal{H}$ maximizing Eq.~(\ref{eq:quantumD_H}). The following theorem presents a new generalization upper bound for the square root of $J_{vN}$ on the target domain with respect to that of multiple sources.

\begin{theorem}
	\label{theorem:Q_{B}_bound}	
	Let $S=\{D_{s_1},\dots,D_{s_K}\}$ be the a set of $K$ source domains, and denote the ground truth mapping function in $D_{s_i}$ as $f_{s_i}$. Assign the weight $w_i$ to source $D_{s_i}$ (subject to $\sum_{i=1}^{K}{w_i=1}$) and generate a weighted source domain $D_\alpha$, such that the source distribution $P_\alpha=\sum_{i=1}^{K}{w_i P_{s_i}}$ and the mapping function $f_\alpha: x\rightarrow \left(\sum_{i=1}^{K}w_i P_{s_i}(x)f_{s_i}(x)\right)/\left(\sum_{i=1}^{K}w_i P_{s_i}(x)\right)$. For any hypothesis $h\in \mathcal{H}$, the square root of $J_{vN}$ on the target domain $D_t$ is bound in the following way:
{\fontsize{9.pt}{10.8pt} \selectfont 
	\begin{equation}
	\begin{aligned}
	\sqrt{J_{vN}(\sigma^t_{x,h(x)}:\sigma^t_{x,f_t(x)})} \leq \sum_{i=1}^K w_i \left( \sqrt{J_{vN}(\sigma^{s_i}_{x,h(x)}:\sigma^{s_i}_{x,f_{s_i}(x)})} \right) + D_{\text{M-disc}}(P_t,P_{\alpha};h) + \eta_{Q}(f_\alpha,f_t),
	\end{aligned}
	\end{equation}
}
where $\eta_{Q}(f_\alpha,f_t)= \min_{h^{*} \in \mathcal{H}} \sqrt{J_{vN}(\sigma^t_{x,h^{*}(x)}:\sigma^t_{x,f_t(x)})} +  \sqrt{J_{vN}(\sigma^\alpha_{x,h^{*}(x)}:\sigma^\alpha_{x,f_{\alpha}(x)})}$ is the minimum joint empirical losses on the combined source $D_\alpha$ and the target $D_t$, achieved by an optimal hypothesis $h^{*}$.
\end{theorem}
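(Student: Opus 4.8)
The plan is to treat the scalar $\sqrt{J_{vN}(\sigma^a_{x,g_1(x)}:\sigma^a_{x,g_2(x)})}$, for a fixed domain $D_a$ and hypotheses $g_1,g_2$, as a pseudometric which I will denote $d_a(g_1,g_2)$, and to assemble the bound from three ingredients: the triangle inequality for $\sqrt{J_{vN}}$ (the third property stated above), the definition of $D_{\text{M-disc}}(P_t,P_\alpha;h)$ with $h$ held fixed, and a convexity/subadditivity argument linking the loss on the mixture $D_\alpha$ to the weighted losses on the individual sources. I would begin by fixing the optimal joint hypothesis $h^{*}$ that realizes $\eta_{Q}(f_\alpha,f_t)=d_t(h^{*},f_t)+d_\alpha(h^{*},f_\alpha)$, so that this constant is available to absorb the two ``ideal'' terms at the end. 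Symmetry of $J_{vN}$ (the Jeffery, already-symmetrized form) guarantees $d_a(g_1,g_2)=d_a(g_2,g_1)$, which I will use freely.

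The backbone is a telescoping chain routed through $h^{*}$ and $f_\alpha$. Applying the triangle inequality on the target, inserting and subtracting the discrepancy on the mixture, and applying the triangle inequality once more yields
\begin{align}
d_t(h,f_t) &\le d_t(h,h^{*}) + d_t(h^{*},f_t) \notag\\
&\le d_\alpha(h,h^{*}) + \big|d_t(h,h^{*})-d_\alpha(h,h^{*})\big| + d_t(h^{*},f_t) \notag\\
&\le d_\alpha(h,f_\alpha) + d_\alpha(f_\alpha,h^{*}) + D_{\text{M-disc}}(P_t,P_\alpha;h) + d_t(h^{*},f_t).
\end{align}
In passing to the last line I would bound the absolute difference by $D_{\text{M-disc}}(P_t,P_\alpha;h)=\max_{h'\in\mathcal{H}}\big|d_t(h,h')-d_\alpha(h,h')\big|$, which dominates the instance $h'=h^{*}$, and then recognize $d_\alpha(f_\alpha,h^{*})+d_t(h^{*},f_t)$ as exactly $\eta_{Q}(f_\alpha,f_t)$. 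What remains is to show $d_\alpha(h,f_\alpha)\le \sum_{i=1}^{K} w_i\, d_{s_i}(h,f_{s_i})$.

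This last reduction is where I expect the genuine difficulty to sit, and it is precisely the step that motivates the chosen form of $f_\alpha$. The definition $f_\alpha(x)=\big(\sum_i w_i P_{s_i}(x)f_{s_i}(x)\big)/\big(\sum_i w_i P_{s_i}(x)\big)$ makes $f_\alpha(x)$ a pointwise convex combination of the source labels with weights $\lambda_i(x)=w_i P_{s_i}(x)/P_\alpha(x)$. The classical template (as in the discrepancy-based multi-source bounds) would invoke convexity of the loss in its label argument so that, pointwise, the loss at $f_\alpha(x)$ is at most $\sum_i\lambda_i(x)$ times the loss at $f_{s_i}(x)$; integrating against $P_\alpha$ then collapses $P_\alpha(x)\lambda_i(x)=w_iP_{s_i}(x)$ and produces the weighted sum. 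The obstacle is that $\sqrt{J_{vN}}$ is \emph{not} a pointwise expectation but a functional of the covariance matrices, so this convexity has to be established at the level of the mixture covariance $\sigma^\alpha$. I would therefore invoke the law of total covariance to decompose $\sigma^\alpha$ into the within-source contributions $\sigma^{s_i}$ plus a positive semidefinite between-source term, and argue that $\sqrt{J_{vN}}$ is jointly convex (or at least subadditive) under this mixing so that $d_\alpha(h,f_\alpha)\le\sum_i w_i\, d_{s_i}(h,f_{s_i})$. Securing that convexity property for the matrix-based divergence is the crux of the proof; once it is available, substituting it into the chain above delivers the stated bound directly.
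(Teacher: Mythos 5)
Your telescoping chain through $h^{*}$ is correct and is essentially the same backbone as the paper's proof: the paper likewise bounds the target loss by the loss on $D_\alpha$ plus absolute-difference terms, controls the two terms involving $f_t$ and $f_\alpha$ by the reverse triangle inequality (absorbing them into $\eta_Q(f_\alpha,f_t)$), and bounds $\bigl|\sqrt{J_{vN}(\sigma^t_{x,h(x)}:\sigma^t_{x,h^{*}(x)})}-\sqrt{J_{vN}(\sigma^\alpha_{x,h(x)}:\sigma^\alpha_{x,h^{*}(x)})}\bigr|$ by $D_{\text{M-disc}}(P_t,P_\alpha;h)$ using the definition with $h$ fixed. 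Both routes arrive at the same intermediate inequality
\begin{equation}
\sqrt{J_{vN}(\sigma^t_{x,h(x)}:\sigma^t_{x,f_t(x)})}\;\le\;\sqrt{J_{vN}(\sigma^\alpha_{x,h(x)}:\sigma^\alpha_{x,f_\alpha(x)})}+D_{\text{M-disc}}(P_t,P_\alpha;h)+\eta_Q(f_\alpha,f_t).
\end{equation}

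The genuine gap is the step you yourself flag as the crux and then leave open: you never establish $\sqrt{J_{vN}(\sigma^\alpha_{x,h(x)}:\sigma^\alpha_{x,f_\alpha(x)})}\le\sum_{i=1}^{K}w_i\sqrt{J_{vN}(\sigma^{s_i}_{x,h(x)}:\sigma^{s_i}_{x,f_{s_i}(x)})}$, only name two properties (law of total covariance plus joint convexity or subadditivity of $\sqrt{J_{vN}}$ under mixing) that might imply it. Without that lemma the bound does not follow, so the proposal is an incomplete proof rather than a complete alternative. The paper closes this step by a different, residual-based argument: from the definition of $f_\alpha$ it writes the pointwise identity $f_\alpha(x)-h(x)=\sum_{i}w_i\bigl(f_{s_i}(x)-h(x)\bigr)$, observes that the prediction residual on $D_\alpha$ is therefore a weighted combination of the per-source residuals, and then invokes convexity of the loss functional induced by the von Neumann divergence (citing joint-convexity results for such matrix divergences) to split the mixture loss into the weighted sum of source losses. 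Note also that your proposed route is, if anything, harder than the paper's: the law of total covariance gives $\sigma^\alpha$ as the $w_i$-weighted sum of the $\sigma^{s_i}$ \emph{plus} a positive semidefinite between-source mean-dispersion term, and controlling the effect of that extra term on $\sqrt{J_{vN}}$ is precisely what would need to be proved — joint convexity of the divergence in its two arguments does not by itself handle it, since the covariance of the mixture is not the mixture of the covariances.
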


The result presented in Theorem~\ref{theorem:Q_{B}_bound} can be interpreted as bounding the square root of $J_{vN}$ on the target domain $D_t$ by quantities controlled by \textit{(i)} a convex combination over the square root of $J_{vN}$ in each of the sources, i.e., $\sqrt{J_{vN}(\sigma_{x,h(x)}^{s_i}:\sigma_{x,f_{s_i}(x)}^{s_i})}$; \textit{(ii)} the mismatch between the weighted distribution $P_\alpha$ and the target distribution $P_t$, i.e., $D_{\text{M-disc}}(P_t,P_\alpha;h)$; and \textit{(iii)} the optimal joint empirical risk on source and target, i.e., $\eta_Q(f_\alpha,f_t)$.
\textcolor{black}{The last term is irrelevant to the optimization and is expected to be small~\cite{zhao2019learning}. Notice that $\eta_Q$ is constant and only depends on $h^*$ in the case of a single source. For multiple source domains, the quantity $\eta_Q$ does include the weights $\textbf{w}$, yet it is constant for a given $\textbf{w}$.}

\subsection{Optimization by Adversarial Min-Max Game}

Similar to the notable Domain-Adversarial Neural Networks (DANN)~\cite{ganin2016domain} that implicitly performs distribution matching by an adversarial min-max game, we explicitly implement the idea exhibited by Theorem~\ref{theorem:Q_{B}_bound} and combine a feature extractor $f_\theta:\mathcal{X}\rightarrow\mathcal{T}$ and a class of predictor $\mathcal{H}:\mathcal{T}\rightarrow\mathcal{Y}$ in a unified learning framework:
\begin{align}
    \min\limits_{\substack{f_\theta,h\in \mathcal{H}\\ ||\mathbf{w}||_1=1}} \max\limits_{h^{\prime} \in \mathcal{H}} & \left(  \sum_{i=1}^{K} w_i \sqrt{J_{vN}(\sigma^{s_i}_{x,h(f_\theta(x))}:\sigma^{s_i}_{x,y})} + \right.  \nonumber\\ 
     & \left. \Big|\sqrt{J_{vN}(\sigma^t_{f_\theta(\mathbf{x}),h(f_\theta(\mathbf{x}))}:\sigma^t_{f_\theta(\mathbf{x}),h^\prime(f_\theta(\mathbf{x}))})}	 -\sum_{i=1}^{K}	
	w_k \sqrt{J_{vN}(\sigma^{s_i}_{f_\theta(\mathbf{x}),h(f_\theta(x))}:\sigma^{s_i}_{f_\theta(\mathbf{x}),h^\prime(f_\theta(x))})}\Big| \right) .
	\label{eq:MDD_objective}
\end{align}
The first term of Eq.~(\ref{eq:MDD_objective}) enforces $h$ to be a good predictor on all source tasks\footnote{In practice, one can replace the $J_{vN}$ loss with the root mean square error (RMSE) loss.}; the second term is an explicit instantiation of our $D_{\text{M-dist}}(P_t,P_\alpha)$. The general idea is to find a feature extractor $f_\theta(\mathbf{x})$ that for any given pair of hypotheses $h$ and $h'$, it is hard to discriminate the target domain $P_t$ from $P_\alpha$, the weighted combination of the source distributions.

We term our method the multi-source domain adaptation with matrix-based discrepancy distance (MDD) (pseudo-code in the supplementary material). We also noticed that a similar min-max training strategy has been used in~\cite{pei2018multi,saito2019semi,richard2020unsupervised}.

\subsection{Comparison with State-of-the-Art Methods}
\textcolor{black}{
We evaluate our MDD on four real-world datasets \textit{(i)} Amazon review dataset\footnote{\url{https://www.cs.jhu.edu/~mdredze/datasets/sentiment/}}, \textit{(ii)} TRANCOS which is a public benchmark for extremely overlapping vehicle counting, \textit{(iii)} the YearPredictionMSD data \cite{bertin2011yearpredictionmsd}, and 
\textit{(iv)} the relative location of CT slices on the axial axis dataset~\cite{graf20112d}. We keep the description and results of the last two datasets in the supplementary material.}

The following six methods are used for comparison:
\textit{(1)} DANN~\cite{ganin2016domain} is used by merging all sources into a single one; \textit{(2)} MDAN-Max and \textit{(3)} MDAN-Dyn, where MDAN refers to the multisource domain adversarial networks by~\cite{zhao2018adversarial}. It also applies a weighting scheme to all sources.
\textit{(4)} Adversarial Hypothesis-Discrepancy Multi-Source Domain Adaptation (AHD-MSDA)~\cite{richard2020unsupervised} and its baseline \textit{(5)} AHD-1S that merges all sources into one and then applies AHD-MSDA between the single combined source and the target domain. (6) Domain AggRegation Network (DARN) \cite{wen2020domain} after implementing the automatically differentiable maximum eigenvalue computation for the discrepancy computation.

In the first experiment, following \cite{richard2020unsupervised}, we employ a shallow neural network with two fully-connected hidden layers of size $500$ with ReLU activation, and a dropout rate of $10\%$. The Adam optimizer is used with learning rate $lr=0.001$, and batch size of $300$. We use $30$ training epochs, and perform $5$ independent runs. Each domain is used once as target and the remaining as sources.

The Amazon review dataset is introduced in~\cite{blitzer2007biographies}; it contains review texts and ratings of bought products. Products are grouped into categories. Following \cite{zhao2018adversarial,richard2020unsupervised}, we perform tf-idf transformation and select the top $1,000$ frequent words. Ratings are used as the target labels.

\begin{table*}
	\caption{Performance comparison in terms of mean absolute error (MAE) over five iterations on the Amazon rating data (with standard error in brackets). The best performance is marked in boldface. The categories are abbreviated as follows, \textbf{ba}:baby, \textbf{be}:beauty, \textbf{ca}:camera\&photo, \textbf{co}:computer\&video-games, \textbf{al}:electronics, \textbf{go}:gourmet-food, \textbf{gr}:grocery.}\label{tab:Amazon}
	\centering
	\begin{tabular}{p{0.005\textwidth}p{0.114\textwidth}p{0.08\textwidth}p{0.114\textwidth}p{0.114\textwidth}|p{0.114\textwidth}p{0.114\textwidth}|p{0.115\textwidth}}
		\toprule
        &AHD&DANN&AHD-&DARN&\multicolumn{2}{c|}{MDAN}&MDD\\
		&-1S&-1S&MSDA&&-Max&-Dyn& \\
		\midrule
		ba&0.627 \small{(.003)}&2.9 \small{(1.3)}&0.586 \small{(.003)}&0.755 \small{(.001)}&0.591 \small{(.015)}&0.711 \small{(.006)}&\textbf{0.581} \small{(.003)}\\
		be&0.614 \small{(.003)}&1.1 \small{(.2)}&0.608 \small{(.005)}&0.69 \small{(.001)}&0.628 \small{(.003)}&0.656 \small{(.004)}&\textbf{0.588} \small{(.003)}\\
		ca&0.559 \small{(.003)}&1.0 \small{(.1)}&0.534 \small{(.006)}&0.643 \small{(.002)}&0.522 \small{(.005)}&0.598 \small{(.006)}&\textbf{0.508} \small{(.003)}\\
		co&0.617 \small{(.005)}&2.2 \small{(.8)}&0.61 \small{(.004)}&0.665 \small{(.001)}&0.682 \small{(.016)}&0.829 \small{(.055)}&\textbf{0.584} \small{(.003)}\\
		el&0.669 \small{(.002)}&0.7 \small{(.01)}&0.657 \small{(.002)}&0.776 \small{(.000)}&0.654 \small{(.001)}&0.670 \small{(.003)}&\textbf{0.65} \small{(.001)}\\
		go&0.585 \small{(.002)}&0.9 \small{(.3)}&0.566 \small{(.003)}&0.639 \small{(.002)}&0.552 \small{(.003)}&0.553 \small{(.003)}&\textbf{0.537} \small{(.003)}\\
		gr&0.543 \small{(.003)}&1.5 \small{(.8)}&0.527 \small{(.002)}&0.627 \small{(.002)}&0.519 \small{(.002)}&0.538 \small{(.003)}&\textbf{0.513} \small{(.009)}\\
		\hline
	\end{tabular}
\end{table*}

\textcolor{black}{
The TRaffic ANd COngestionS (TRANCOS) \cite{guerrero2015extremely} dataset is a public benchmark dataset for extremely overlapping vehicle counting with $1,244$ images and $46,700$ manually annotated vehicles via the dotting method \cite{lempitsky2010learning}. It contains images that were collected from 11 video surveillance cameras. We apply hierarchical clustering to formulate five domains over the cameras. The hourglass network \cite{newell2016stacked} is used such that the encoder plays the role of the feature extractor, and the predictor and discriminator follow the decoder design. The predicted vehicle count is computed by integrating over the predicted density map after applying the ground truth mask, thereafter, the mean absolute error is computed on the predicted count. See the supplementary material for more details. 
The quantitative results on these two datasets are summarized in Table~\ref{tab:Amazon} and Table~\ref{tab:Camer_counting}, respectively. Our MDD always achieves the smallest mean absolute error on all target domains, except for "Dom2" of the counting problem. It is worth mentioning that DARN fails to generalize on source domains of TRANCOS and, hence, performs poorly on the target domain, as discussed in the supplementary material. 
}

We also analyse the weights $\mathbf{w}$ learned by our MDD (plots and discussion in supplementary material). In general, our learned weights  reflect the strength of relatedness from each source to the target. Moreover, we observe that our weights are much more stable across training epochs, whereas the weights learned by DARN always oscillate and are less linked in successive epochs.

\begin{table*}[t]
	\caption{Performance comparison in terms of mean absolute error (MAE) over three iterations on TRANCOS data (with standard error in brackets). The best performance is marked in boldface. DARN fails to generalize on the source domains, hence, performs very poorly on the target domains.}\label{tab:Camer_counting}
	  \centering
	\begin{tabular}{p{0.04\textwidth}cccc|cc|c}
		\toprule
		&AHD&DANN&AHD-&DARN&\multicolumn{2}{c|}{MDAN}&MDD\\
		&-1S&-1S&MSDA&&-Max&-Dyn& \\
		\midrule
		Dom1&46.87 \small{(12.89)}&16.19 \small{(0.42)}&57.19 \small{(22.93)}&---&32.17 \small{(7.98)}&29.35 \small{(3.96)}&\textbf{14.73} \small{(0.52)}\\
		Dom2&27.39 \small{(4.8)}&21.7 \small{(0.86)}&33.8 \small{(6.51)}&---&18.02 \small{(0.34)}&\textbf{14.34}\small{(0.24)}&15.27 \small{(0.92)}\\
		Dom3&63.69 \small{(31.62)}&28.43 \small{(5.63)}&63.27 \small{(24.77)}&---&38.5 \small{(11.77)}&26.81 \small{(4.61)}&\textbf{24.67} \small{(3.43)}\\
		Dom4&23.02 \small{(3.71)}&21.54 \small{(5.64)}&88.07\small{(52.72)} &---&19.89 \small{(3.83)}&22.86 \small{(1.04)}&\textbf{14.25} \small{(1.64)}\\
		Dom5&65.89 \small{(22.71)}&57.12 \small{(29.74)}&38.02 \small{(11.7)}&---&57.28 \small{(36.24)}&22.73\small{(4.72)}&\textbf{17.34} \small{(1.43)}\\
		\hline
	\end{tabular}
\end{table*}

\subsection{Visualizing Domain Importance in Synthetic Data}

We further evaluate the ability of MDD to discover the correct strength of relatedness from each source on a synthetic data, in which the ``ground truth" of relatedness is known. We construct a synthetic data set with six domains each with features from $\mathbf{x} \in [-1,1]^{12}$, and the Friedman target function \cite{friedman1991multivariate} $y(\mathbf{x}) = 10\sin(\pi x_1 x_3) + 20(x_5-0.5)^2 +10x_7 + 5x_9 + \epsilon$, $\epsilon \sim \mathcal{N}(0,1)$.
The six generated domains are equally distributed in the diagonal of the space $[-1,1]^{12}$. To this end, each domain $s_i \in \{s_1,\dots,s_6\}$ is sampled from $\mathcal{N}(\mu^{(i)},\Sigma^{(i)})$, such that $\mu^{(i)} = c_i \textbf{1}_{12}$, where $c_i =(-1+(2i-2)/5)$ and $\textbf{1}_{12}$ is the the all-one vector of size 12. The element of the covariance matrix $\Sigma^{(i)}$ are set to zero except for $\Sigma^{(i)}_{2i-1,2i}=\Sigma^{(i)}_{2i,2i-1}=0.1$, $\Sigma^{(i)}_{2i,2i+1}=\Sigma^{(i)}_{2i+1,2i}=0.07$ if $i<6$, $\Sigma^{(i)}_{2i-1,2i-2}=\Sigma^{(i)}_{2i-2,2i-1}=0.07$ if $i>1$, and 
$\Sigma^{(i)}_{2k-1,2k}=\Sigma^{(i)}_{2k,2k-1}=0.5$ where $k=i-1$ or $i+1$. This way, the neighboring domains will have a gradual covariate shift in terms of both mean and covariance. 

The distribution of the first two dimensions of $\mathbf{x}$ is depicted in Fig.~\ref{fig:MDD_synthetic_Data} and the covariance matrix $\Sigma^{(i)}$ for domain $i$ is illustrated in Fig.~\ref{fig:MDD_synthetic_Sigma}. 
Fig.~\ref{fig:MDD_synthetic_DARN} to \ref{fig:MDD_synthetic_MDD} show the weights learned by DARN, AHD-MSDA and MDD, respectively. The value in the $(i,j)$-th entry is the weight from source $j$, when the target is domain $i$. As can be seen, our MMD learns an almost symmetric weight matrix with high weights centered around the diagonal and smoothly fading weights in the anti-diagonal direction. AHD-MSDA seems to learn uniform weights. DARN learns sparse weights while often failing in ranking the sources in agreement with the ground truth. 


\begin{figure*}
     \centering
     \begin{subfigure}[b]{0.17\textwidth}
         \centering
         \includegraphics[width=1.1\textwidth]{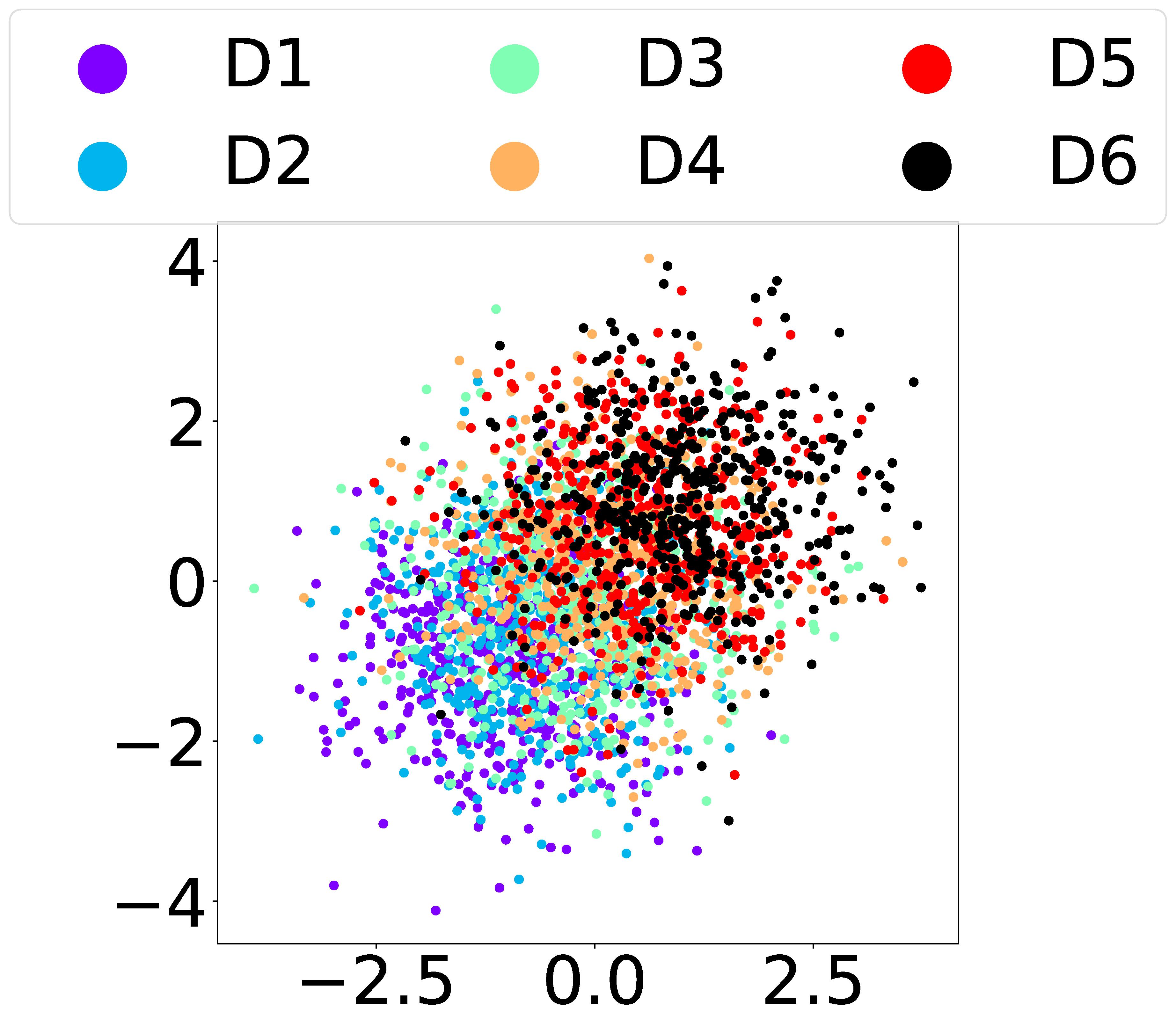}
         \caption{Data}
         \label{fig:MDD_synthetic_Data}
     \end{subfigure}
     \hfill
     \begin{subfigure}[b]{0.2\textwidth}
         \centering
         \includegraphics[width=0.9\linewidth]{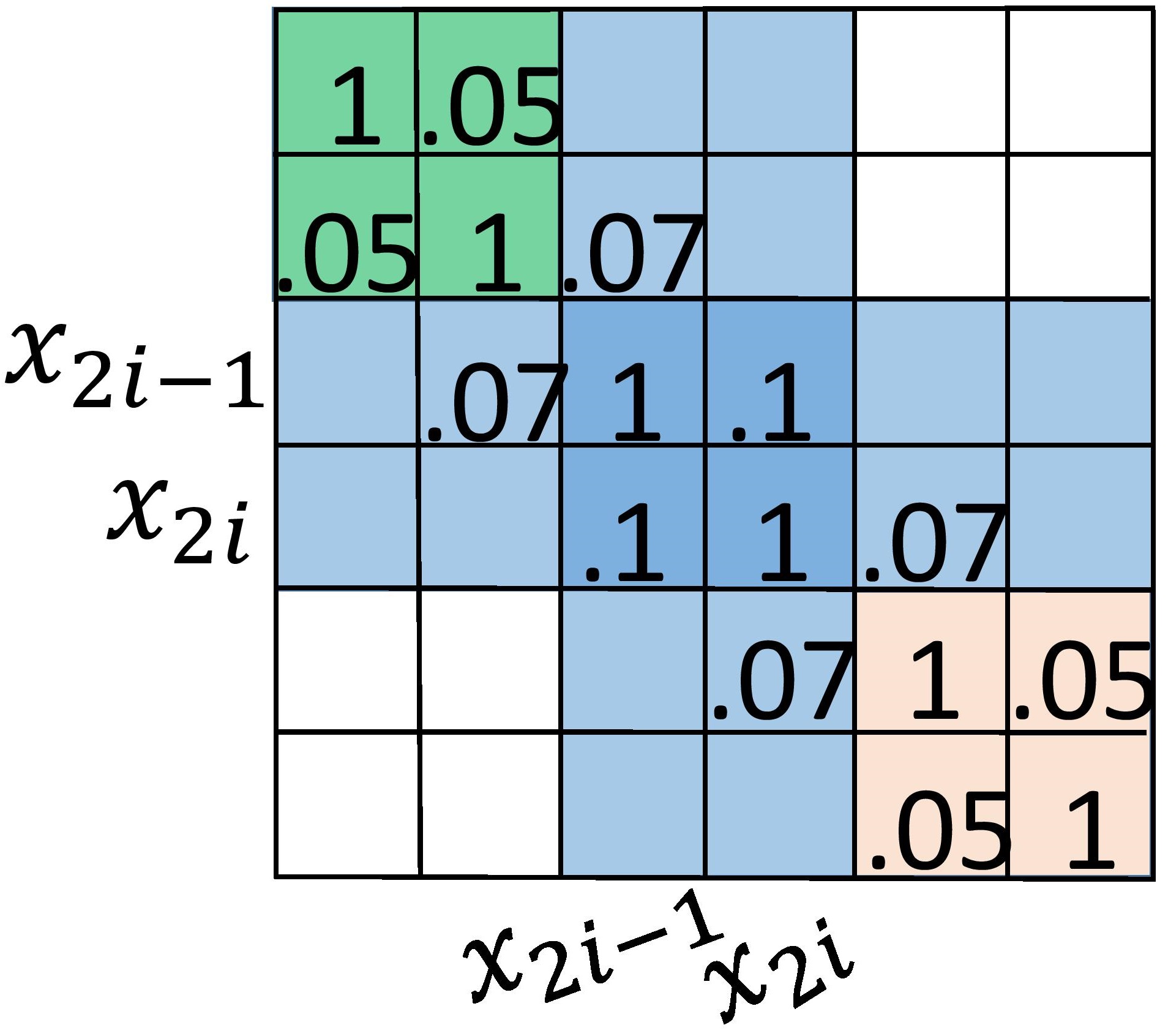}
         \caption{Submatrix of $\Sigma^{(i)}$}
         \label{fig:MDD_synthetic_Sigma}
     \end{subfigure}
     \hfill
     \begin{subfigure}[b]{0.195\textwidth}
         \centering
         \includegraphics[width=1.05\textwidth]{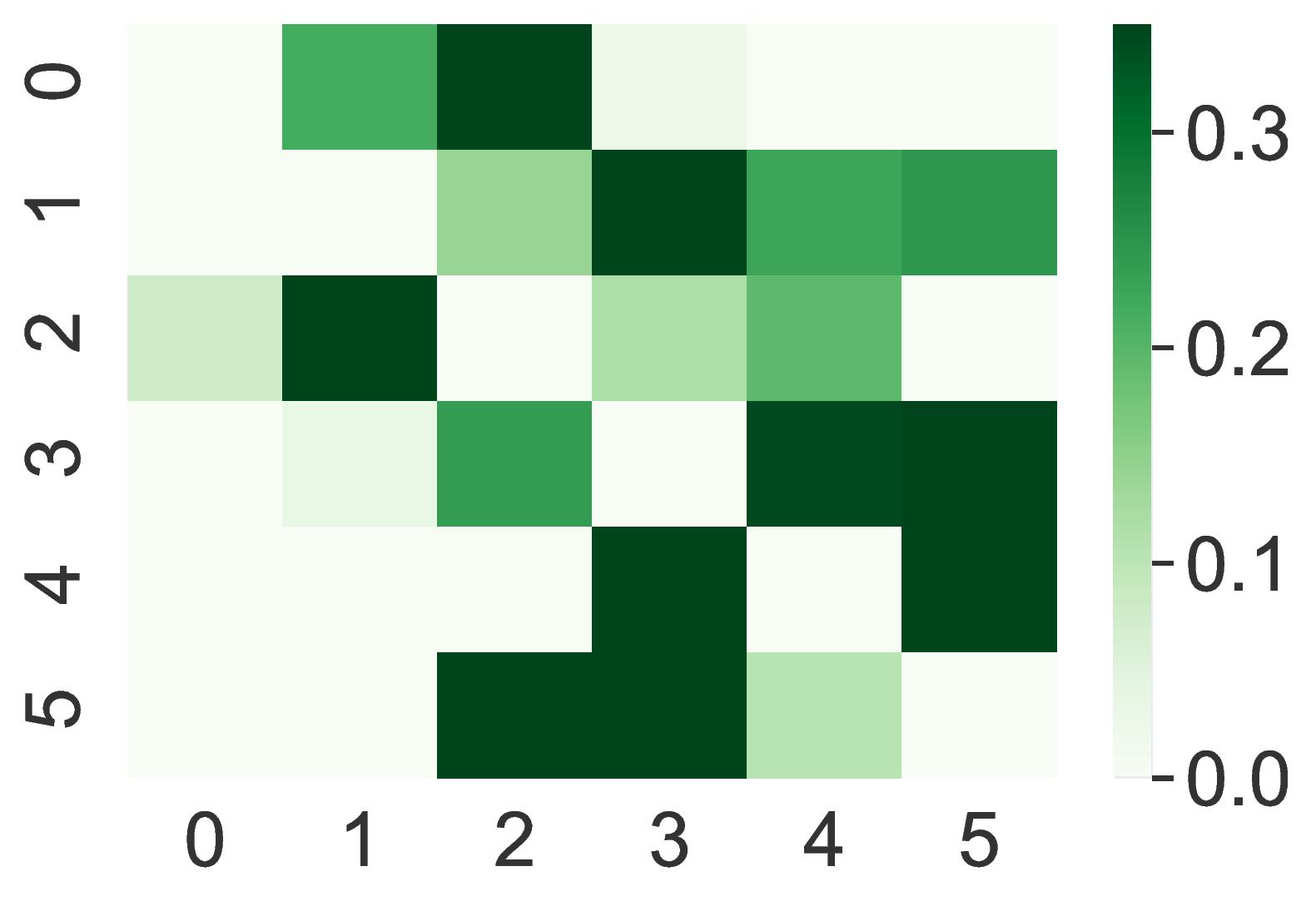}
         \caption{DARN}
         \label{fig:MDD_synthetic_DARN}
     \end{subfigure}
     \hfill
     \begin{subfigure}[b]{0.195\textwidth}
         \centering
         \includegraphics[width=1.05\textwidth]{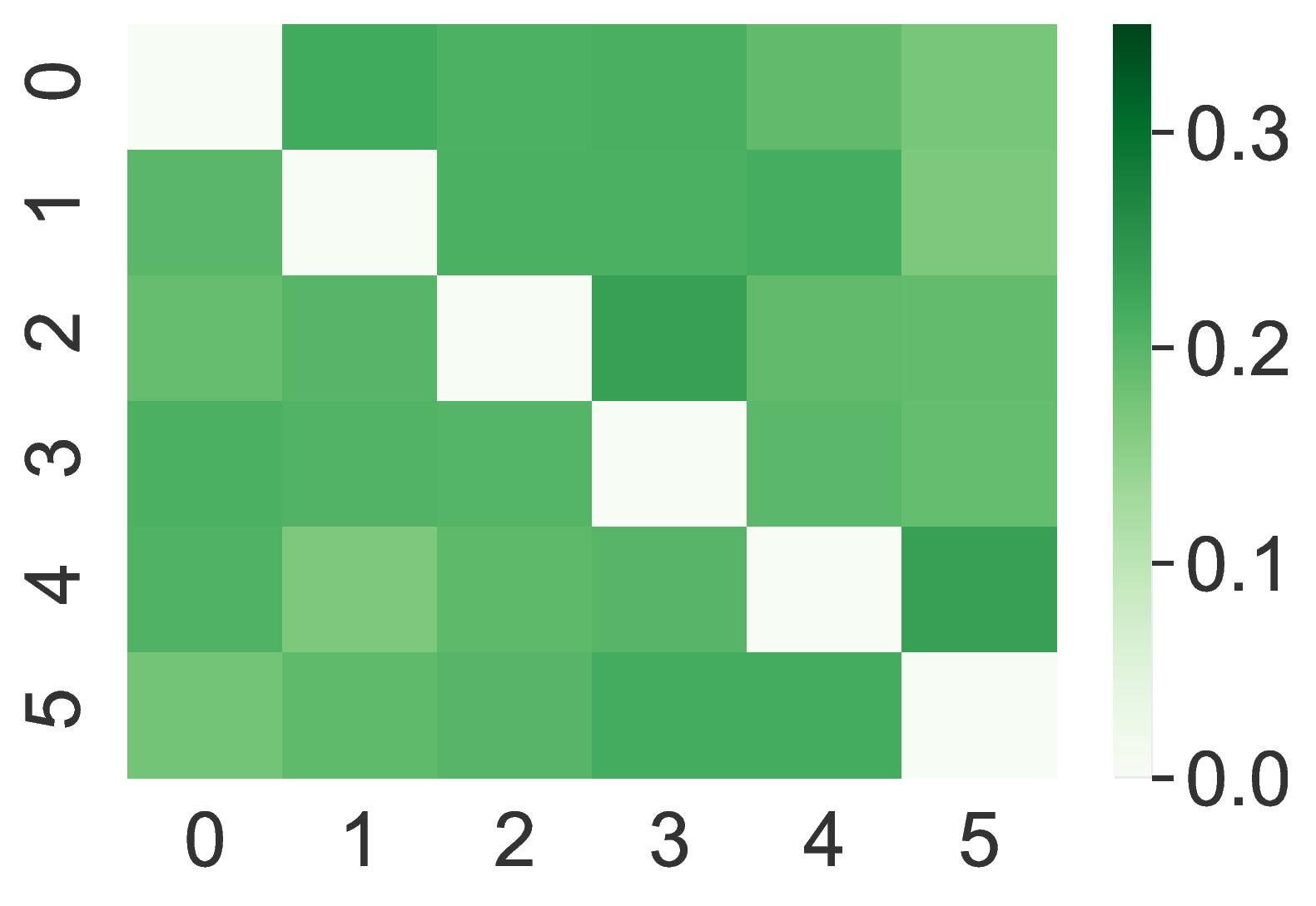}
         \caption{AHD-MSDA}
         \label{fig:MDD_synthetic_AHD-MSDA}
     \end{subfigure}
     \hfill
     \begin{subfigure}[b]{0.195\textwidth}
         \centering
         \includegraphics[width=1.05\textwidth]{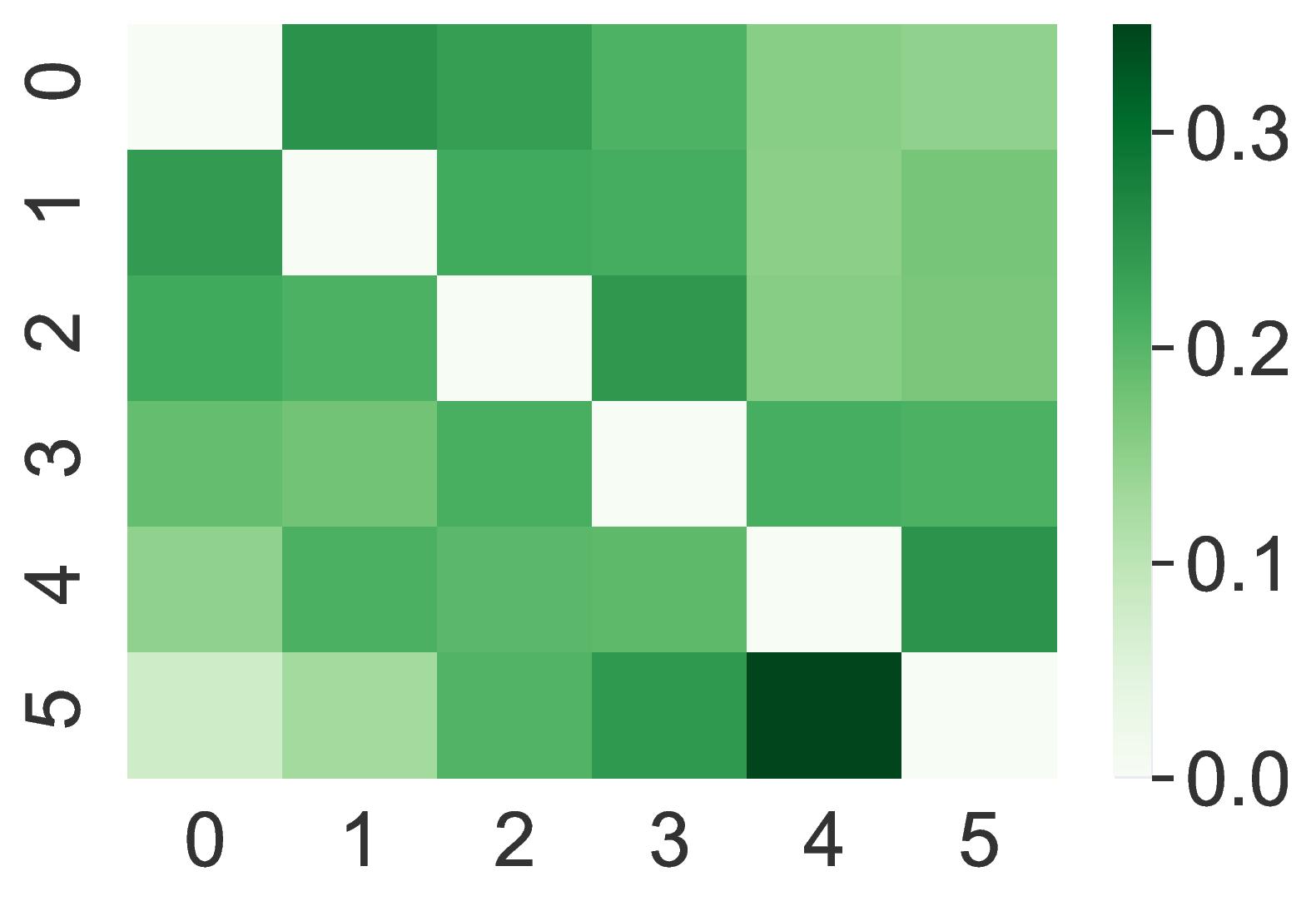}
         \caption{MDD}
         \label{fig:MDD_synthetic_MDD}
     \end{subfigure}
        \caption{(a) The data used for the weight learning over synthetic data. (b) The submatrix of the covariance matrix $\Sigma^{(i)}$ of domain $i$. Figures (c), (d), and (e) show heatmaps of the learned weights of DARN, AHD-MSDA, and MDD, respectively. The rows and columns represent the target and source domains, respectively.}
        \label{fig:MDD_synthetic}
\end{figure*}

\section{Continual Learning by Representation Similarity Penalty}

We demonstrate, in this section, that the von Neumann conditional divergence is also suitable to alleviate negative backward transfer or catastrophic forgetting in continual learning (CL). We exemplify our argument by proposing a new regularization-based CL approach.

\subsection{Elastic Weight Consolidation (EWC) and its Extensions}\label{sec:EWC}

Regularization approaches mitigate catastrophic forgetting by imposing penalties on the updates of the important neural weights (to previous tasks)~\cite{parisi2019continual,de2019continual}. 
As a notable example in this category, EWC~\cite{kirkpatrick2017overcoming} consists of a quadratic penalty on the difference between the parameters $\theta$ for the old and the new tasks. The objective to be minimized when observing task $T_B$ after learning on task $T_A$ is:
\begin{equation}
\mathcal{L}(\theta) = \mathcal{L}_B(\theta) + \sum_i \frac{\lambda}{2} \mathcal{F}_{\theta_i} (\theta_i - \theta_{A,i}^*)^2,
\label{eq:EWC}
\end{equation}
$\mathcal{L}_B(\theta)$ is the loss for task $T_B$, $\lambda$ is the regularization strength, $\{\theta_{A,i}^*\}$ is the set of parameters after learning on task $A$, and $F_\theta$ is the diagonal Fisher information matrix (FIM).
The $i$-th diagonal element of $F_\theta$ is computed as $F_{\theta_i}=\mathbb{E}[(\frac{\partial\mathcal{L}}{\partial \theta_i})^2]$.
The supplementary material shows the derivation of Eq.~(\ref{eq:EWC}).

EWC assumes all weights in $\theta$ are independent, which leads to a diagonal FIM. To make this assumption more practical, R-EWC~\cite{liu2018rotate} takes a factorized rotation of parameter space that leads to the desired diagonal FIM. \cite{chaudhry2018riemannian} reformulates the objective of EWC by KL-divergence in the Riemannian Manifold and suggests an efficient and online version of EWC. As an alternative to computing FIM, synaptic intelligence (SI)~\cite{zenke2017continual} measures each parameter's importance by its accumulative contribution to the loss changes.

\subsection{Measuring Weight Significance by Representation Similarity}

In this section, we introduce a new form of regularization that measures the significance of a group of weights (rather than individual ones) to $T_A$ by the (dis)similarity of local representations between $T_A$ and $T_{B}$ induced by these weights. Our method's essence comes from observing that tasks with similar representations are more prone to overwrite or negatively affect each other. A similar observation has been recently discovered by~\cite{ramasesh2020anatomy}.

Specifically, in the $d$-th hidden layer, suppose we identified $K$ groups of neurons ($g_1^d, g_2^d, \cdots, g_K^d$) that are functionally mutually independent. Each group can be viewed as a module that operates independently. Therefore, changes to parameters belonging to the same module should be regularized together taking into account \textit{(i)} their relatedness to the different tasks (through the von Neumann conditional divergence), and \textit{(ii)} the parameter’s interdependence through the network modularization. Taking these two aspects into consideration, we define a new regularization-based CL objective as:
\begin{align}
	\mathcal{L}(\theta) &= \mathcal{L}_{T_B}(\theta) + \sum_{T_A \in \mathbb{T}\setminus\{T_B\}} \sum_{k,d} r_{k,d}^{A} \sum_{\theta_i \in g_k^d}(\theta_i - \theta_{T_A,i}^*)^2	, 		\label{eq:ModularEWC1} \\
	r_{k,d}^A &=	\frac{1}{Z} \frac{\lambda}{2} \exp(-D(P_{T_A}(y|g_k^d(x)):P_{T_B}(y|g_k^d(x)))). 		
	\label{eq:ModularEWC2}
\end{align}
Objective~(\ref{eq:ModularEWC1}) iterates over each group $g_k^d$ (second sum), and computes the representation similarity~(\ref{eq:ModularEWC2}), induced by the sub-network associated by the group of neurons $g_k^d$, between the current task $T_B$ and each previous task $T_A \in \mathbb{T}\setminus\{T_B\}$. This similarity takes the form of the softmax of the negative divergence with $Z$ being the normalization term, and $D$ is the symmetric von Neumann conditional divergence, i.e., Eq.~(\ref{eq:VN_conditional_divergence_sym}). Based on this similarity, the change in the parameters of each group $g_k^d$ is penalized by the representation indifference between the two tasks caused by that group. Hence, we call our method representation similarity penalty (RSP).
For an architecture with $R$ layers, RSP computes the groups for layers $d \in \{2,\dots,R-1\}$, which leaves the parameters and bias of the first layer without assigned groups; for these parameters the Fisher index is used to weight the penalty.

\subsection{Implementation Details and Empirical Evaluation}


RSP employs the modularization strategy in~\cite{watanabe2018modular} to construct groups of neurons in each layer that are mutually independent. In our experiments, we fix the number of groups to be $K_d=20$.


\subsubsection{Setting, Datasets and Performance Measures}
The following empirical evaluations follow the continual learning setting described in \cite{riemer_learning_2018}, where each sample of each task is observed in a single pass sequence. As for the neural network architecture, we use a single head fully-connected neural network with two hidden layers, each with $100$ neurons, a $28 \times 28$ input layer, and an output layer with a single head with $10$ units. This architecture is similar to the one used in \cite{lopez-paz_gradient_2017}. The hidden layers employ the ReLU activation, and SGD is used to minimize the softmax cross-entropy on the online training data.

We evaluate on the following datasets: \textit{(i)}  MNIST Permutations \textbf{(mnistP)} \cite{kirkpatrick2017overcoming}, \textit{(ii)} MNIST Rotations \textbf{(mnistR)} \cite{lopez-paz_gradient_2017},	\textit{(iii)} Permuted Fashion-MNIST \textbf{(fashionP)} \cite{xiao2017_online}, and
\textit{(iv)} Permuted notMNIST \textbf{(notmnistP)} \cite{bulatov_machine_2011}.
All these datasets contain images of size $28\times 28$ pixels. Additionally, we also perform a comparison on the Omniglot dataset~\cite{lake2011one} using the first ten alphabets and a convolutional neural network; the setting and results are explained in the supplementary material.

To measure the learnability and resistance to forgetting, we compute three performance measures: \textit{(i)} Learning accuracy (LA) is the average accuracy on each task after learning it. \textit{(ii)} Retained accuracy (RA) is the average performance on all tasks after observing the last one. \textit{(iii)} Backward transfer (BT)  represents the loss in performance due to forgetting, i.e., the difference between LA and RA~\cite{chaudhry2018riemannian}.

\begin{table*}[h]
    \centering
    \caption{Performance comparison between RSP, AGEM, MER, R-EWC and EWC. The numbers in parentheses are the standard errors (SE) of the means in the former row.
    D1: not-mnistP, D2: fashionP, D3: mnistR, D4: mnistP. BT is rounded to the nearest integer when it is larger than 10.}      
    \begin{tabularx}{0.9\textwidth}{p{0.01\textwidth} 
    p{0.03\textwidth} p{0.03\textwidth} p{0.045\textwidth}
    |p{0.03\textwidth} p{0.03\textwidth} p{0.044\textwidth}
    |p{0.03\textwidth} p{0.03\textwidth} p{0.045\textwidth}
    |p{0.03\textwidth} p{0.03\textwidth} p{0.045\textwidth} 
    |p{0.03\textwidth} p{0.03\textwidth} p{0.045\textwidth} }
        \hline
        & \multicolumn{3}{c}{AGEM} & \multicolumn{3}{c}{MER} & \multicolumn{3}{c}{R-EWC} & \multicolumn{3}{c}{EWC} & \multicolumn{3}{c}{RSP} \\
        \text{} & \text{RA} &  \text{LA} &  \text{BT}& \text{RA} &  \text{LA} &  \text{BT} & \text{RA} &  \text{LA} &  \text{BT} & \text{RA} &  \text{LA} &  \text{BT} & \text{RA} &  \text{LA} &  \text{BT} \\
        \cmidrule(lr){2-4}\cmidrule(lr){5-7}\cmidrule(lr){8-10}\cmidrule(lr){11-13}\cmidrule(lr){14-16}
        \multirow{2}{*}{D1}&66.6 & 78.5 & -12&50.6 & 55.1 & -4.6         &69.8&\textbf{83.8}&-14&68.7&81&-12&\textbf{72.3}&79.5&-7.2\\
        &\small{(1.5)} &\small{(0.6)} &\small{(1.5)}&\small{(0.7)} &\small{(0.8)} &\small{(0.7)}&\small{(0.5)}&\small{(0.1)}&\small{(0.5)}&\small{(0.3)}&\small{(0.1)}&\small{(0.2)}&\small{(0.3)}&\small{(0.1)}&\small{(0.2)}\\            
        \hline
        \multirow{2}{*}{D2}&59.5 & 65.4 & -5.9&53.3 & 61.2 & -7.8         &58.5&64.0&-5.4&42.2&56.2&-14&\textbf{62.5}&\textbf{66.6}&-4.2\\
        &\small{(0.5)} &\small{(0.3)} &\small{(0.5)}&\small{(0.1)} &\small{(0.8)} &\small{(0.9)}&\small{(0.8)}&\small{(0.1)}&\small{(0.7)}&\small{(2.1)}&\small{(1.4)}&\small{(0.8)}&\small{(0.3)}&\small{(0.1)}&\small{(0.3)}\\            
        \hline
        \multirow{2}{*}{D3}&75.0 & 85.6 & -11&\textbf{81.2} & 81.3 & -0.2         &60.9&\textbf{87.8}&-27&62.1&85.6&-24&62.9&83.6&-21\\
        &\small{(0.3)} &\small{(0.1)} &\small{(0.3)}&\small{(0.2)} &\small{(0.2)} &\small{(0.2)}&\small{(0.8)}&\small{(0.1)}&\small{(0.8)}&\small{(0.3)}&\small{(0.1)}&\small{(0.3)}&\small{(0.2)}&\small{(0.1)}&\small{(0.2)}\\          
        \hline
        \multirow{2}{*}{D4}&67 & 78.7 & -12&68.9 & 75.9 & -7.0         &64.8&79.1&-14&66.1&77&-12&\textbf{71.8}&\textbf{80.8}&-9\\
        &\small{(0.4)} &\small{(0.3)} &\small{(0.6)}&\small{(0.3)} &\small{(0.2)} &\small{(0.3)}&\small{(0.5)}&\small{(0.2)}&\small{(0.4)}&\small{(1.9)}&\small{(0.7)}&\small{(1.3)}&\small{(0.2)}&\small{(0.1)}&\small{(0.2)}\\        \hline
    \end{tabularx}
    \label{tb:exp1_comparisonEWC}
\end{table*}

\subsubsection{Comparison Protocol and Results}
We compare the performance of our RSP against that of EWC, R-EWC, and two popular replay-based CL methods, namely the Averaged Gradient Episodic Memory (AGEM)~\cite{Chaudhry:2019:ELL-A-GEM}, and the Meta-Experience Replay (MER) \cite{riemer_learning_2018}.
A grid-based hyperparameter search is carried on for each method on each dataset as explained in the supplementary material.
The ten datasets form a stream of ten tasks, each of which contains a sequence of only 1000 samples. Every time an evaluation is performed on a task, it is done on its test data of $10,000$ samples.

We employ the aforementioned online setting with a restricted memory budget of ten samples per task. Table~\ref{tb:exp1_comparisonEWC} shows that RSP outperforms all other methods in terms of RA on all data sets, except for mnistP. RSP also shows the highest LA on fashionP and mnistP. Only on mnistR, RSP performs worse than MER on RA, and worse than R-EWC on LA.

Compared only to EWC, RSP improves RA by $20\%$ on the fashionP, and around $6\%$ and $4\%$ on notmnistP and mnistP, respectively. In terms of LA, both methods perform similarly on notmnistP and mnistR, whereas RSP shows substantial improvement on fashionP and mnistP. This result indicates that RSP performs better than EWC in encouraging positive forward transfer under the circumstances of limited memory. The gain in both LA and RA that our modification causes to EWC is accompanied by less negative backward transfer (BT) on all datasets. Under the setting adopted in this experiment, R-EWC performs similarly or slightly better than EWC, but it is still worse than RSP in most cases.

\section{Related Work}

\textbf{Multi-Source Domain Adaptation (MSDA)}
Existing domain adaptation methods mainly focus on the single-source scenario. \cite{Mansour:2009:DALBA} assumes that the target distribution can be approximated by a mixture of given source distributions, which also partially motivated our MDD. There are other theoretical analyses to the design of MSDA methods, with the purpose of either developing more accurate measures of domain discrepancy or deriving tighter generalization bounds~\cite{redko2019advances,zhao2020multi}. Most existing bounds are based on the seminal work~\cite{blitzer2008learning,ben2010theory}. For example, \cite{zhao2018adversarial} extends the generalization bound in \cite{blitzer2008learning} to multiple sources. \cite{li2018extracting} considered the relationship between pairwise sources and derived a tighter bound on weighted multi-source discrepancy based on a Wasserstein-like metric. Calculating such pairwise weights can be computationally demanding when the number of sources is large.  Recently, \cite{wen2020domain} extends the upper-bound on the target domain loss, developed by~\cite{cortes2019adaptation}, to MSDA. The new bound depends on the discrepancy distance between two domains~\cite{Mansour:2009:DALBA}. 
\cite{richard2020unsupervised} uses the hypothesis distance for regression~\cite{cortes2014domain} and derives a similar bound. 





Distinct from these methods, our discrepancy measure does not align the distribution of feature $p(\mathbf{t})$. Rather, it aims to match the dependence between $\mathbf{t}$ and $y$ across domains, such that the conditional distributions $p(y|\mathbf{t})$ remain similar. To the best of our knowledge, we are also the first to derive a new generalization bound based on the matrix-based divergence~\cite{kulis2009low,yu2020measuring}.


\textbf{Regularization-based Continual Learning and Network Modularizaton} The general idea and popular regularization-based continual learning methods have been discussed in the previous section.
Recently, network modularization is becoming a popular paradigm for efficient network training~\cite{hadsell2020embracing,duan2021modularizing}. 
Indeed, biological brains are modular, with distinct yet interacting subsystems. Introducing modularization to prevent forgetting dates back to~\cite{pape2011modular} on the training of deep belief networks (DBN)~\cite{hinton2006fast}. Recently, \cite{Veniat:2021:ECLMN} suggests a modular solution by identifying the trained modules (groups of neurons) to be re-used and extending the network with new modules for each new task. 





\section{Conclusion}

{\color{black}
We introduced von Neumann conditional divergence $D_{vN}$ to align the dependence between latent representation $\mathbf{t}$ and response variable $y$ across different domains and exemplified this idea in domain adaptation, assuming multiple source tasks are observed either simultaneously or sequentially. For the former, we consider multi-source domain adaptation (MSDA) and developed a new generalization bound as well as a new learning objective based on the loss induced by $D_{vN}$. 
For the latter, we focus on continual learning (CL) and demonstrated that such dependence can be formulated as a penalty to regularize the changes of network parameters. Empirical results justify the superiority of our methods. 



Our point of departure is how learning, in general, can benefit from the conditional von Neumann divergence. 
At the same time, more than promoting a specific method, we aim at investigating a suitable distance measure for aligning representations. The perfect testbed for this is MSDA and CL. While the techniques we propose are deeply rooted and shaped by these domains, we hope them to be seen as an example of how the divergence can be beneficial.}








\clearpage
\newpage

\bibliographystyle{abbrv}
\bibliography{ijcai21.bib}

\clearpage
\newpage

This document contains the supplementary material for the \textit{``Learning to Transfer with von Neumann Conditional Divergence"} manuscript. It is organized into the following topics and sections:

\begin{enumerate}
\item Ethics Statement and Potential Societal Impacts
\item Proofs and Additional Remarks to the Jeffery von Neumann Divergence $J_{vN}$
\begin{enumerate}
\item $J_{vN}$ as a Loss Function
\item Differentiability of $\sqrt{J_{vN}(X;Y)}$
\item Triangle Inequality of $\sqrt{J_{vN}(\sigma_{\mathbf{x},f(\mathbf{x})}:\sigma_{\mathbf{x},\hat{f}(\mathbf{x})})}$
\item Robustness of $\sqrt{J_{vN}(\sigma_{\mathbf{x},f(\mathbf{x})}:\sigma_{\mathbf{x},\hat{f}(\mathbf{x})})}$
\end{enumerate}
\item Convergence Behavior of the Matrix-based von Neumann Divergence
\item Multi-Source Domain Adaptation with Matrix-based Discrepancy Distance
\item Further Note on EWC
\begin{enumerate}
\item Elastic Weight Consolidation
\item Special Relation to EWC and Fisher Information
\end{enumerate}
\item Illustrations and Complexity Analysis
\begin{enumerate}
    \item Multi-Source Domain Adaptation with Matrix-Based Discrepancy Distance
    \item Continual Learning by Representation Similarity Penalty
\end{enumerate}
\item Evaluation Details and Additional Experiments
\begin{enumerate}
\item Additional Results and Information for Evaluating the Multi-Source Domain Adaptation with Matrix-based Discrepancy Distance
\item Additional Results and Information for Evaluating the Continual Learning by Representation Similarity Penalty
\end{enumerate}
\end{enumerate}

\clearpage
\section{Ethics Statement and Potential Societal Impacts} 
The aim of our work is to enable domain adaptation, which in its essence aims at learning from fewer data. This has a significant positive effect on the environment by reducing computational power and runtime to train models, i.e., less electricity consumption and less $CO_2$ emissions. This applies to both types of approaches proposed in this work: (i) multi-source domain adaptation, and (ii) continual learning.
Moreover, we also show that our motivated loss function, $J_{vN}$, enjoys the robustness property, one of the main functional properties required to achieve fairness. Even though we don't establish this connection, we believe that this line of work would make a foundation to achieve fairness when AI methods are consulted.

\section{Proofs and Additional Remarks to the Jeffery von Neumann Divergence $J_{vN}$}
\label{Proofs_and_Additional}

\subsection{$\sqrt{J_{vN}(\sigma_{\mathbf{x},f(\mathbf{x})}:\sigma_{\mathbf{x},\hat{f}(\mathbf{x})})}$ as a Loss Function}
\label{Interpretability}
For simplicity, we consider the argument of the square root, i.e., $J_{vN}(\sigma_{x,y}:\sigma_{x,\hat{y}})$. Let us consider a regression scenario, in which $\mathbf{x}\in\mathbb{R}^d$ and $y\in\mathbb{R}$, then both the joint covariance matrices $\sigma_{x,y}$ and $\sigma_{x,\hat{y}}$ are symmetric positive definite and of size $(d+1)\times(d+1)$. At first, one should note that $\sigma_{x,y}$ differs from $\sigma_{x,\hat{y}}$ only in the first row and the first column associated with $y$ (or $\hat{y}$). This is just because the remaining elements of both matrices is the covariance matrix $\sigma_x$ that only depends on the input. See Figure~\ref{fig:joint_cov} for an illustration.

\begin{figure}[h]
     \centering
     \subfloat[$\sigma_{\mathbf{x},y}$]{         
        \includegraphics[width=.35\linewidth]{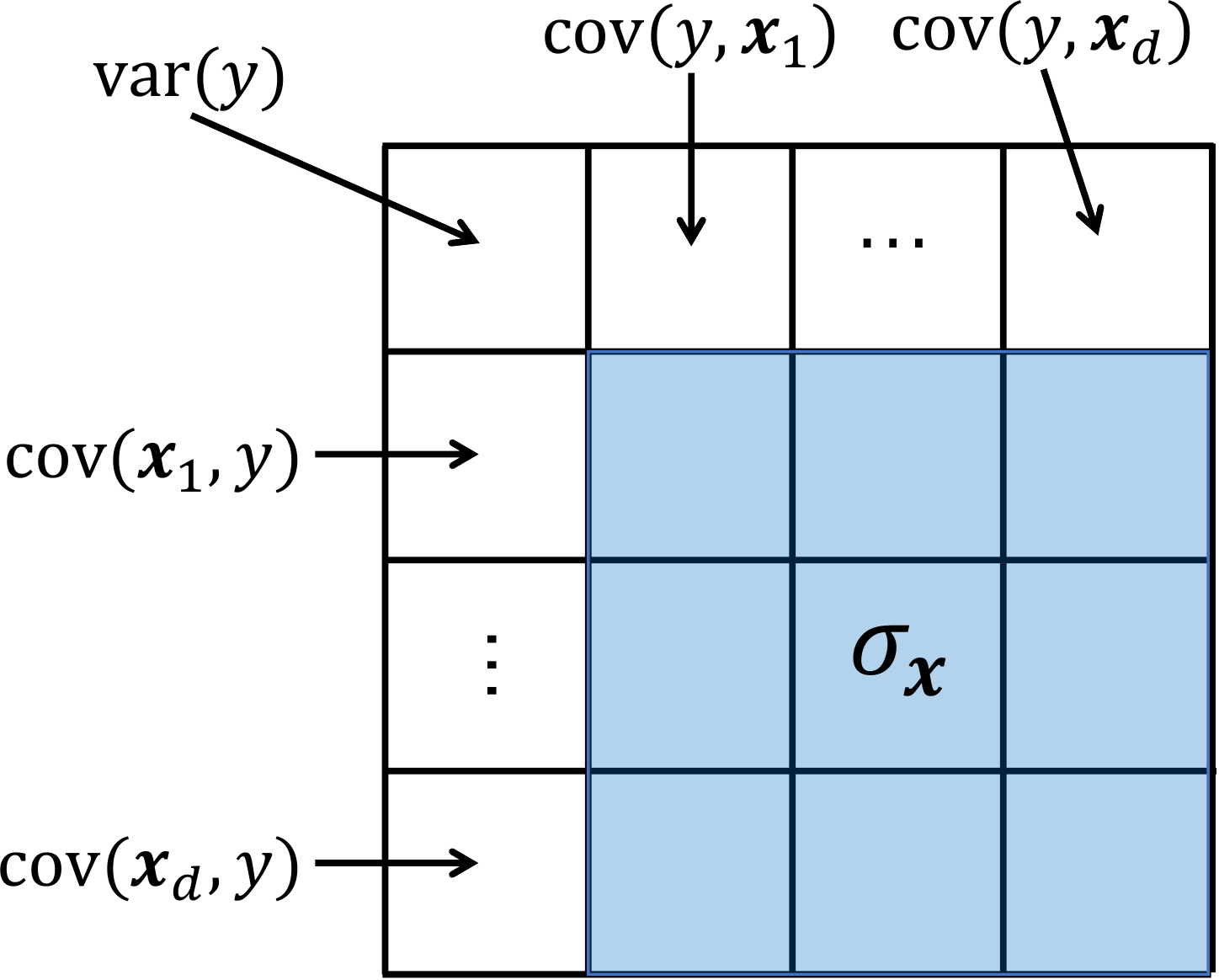}
     }
     \hspace{30pt}
     \subfloat[$\sigma_{\mathbf{x},\hat{y}}$]{         
        {\includegraphics[width=.35\linewidth]{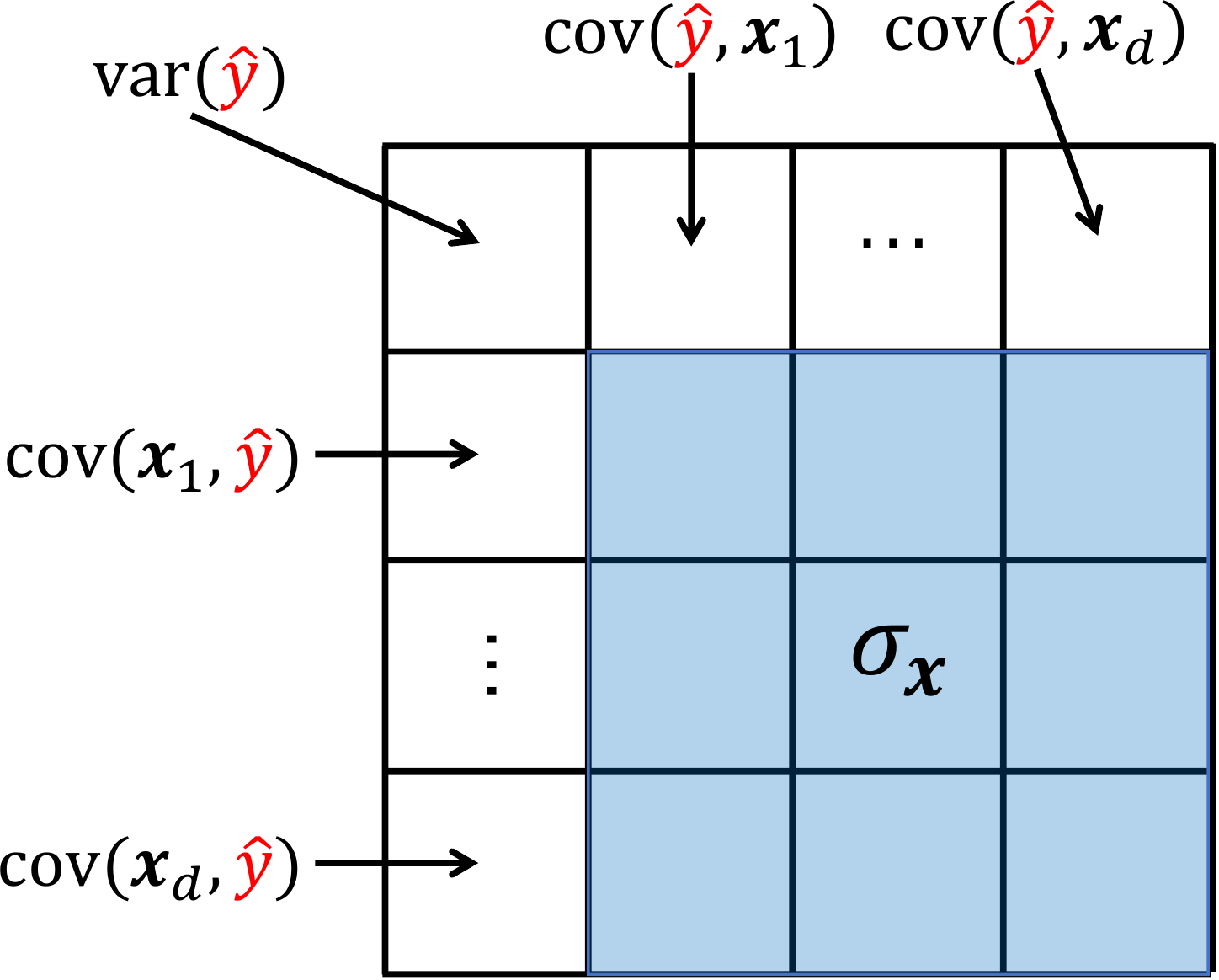}}
     }     
    \caption{The joint covariance matrices $\sigma_{\mathbf{x},y}$ (left) and $\sigma_{\mathbf{x},\hat{y}}$ (right). Two matrices only differ in the first column and row associated with $y$ or $\hat{y}$.}
    \label{fig:joint_cov} 
\end{figure}     

If we look deeper, the first row and column in $\sigma_{x,y}$ (or $\sigma_{x,\hat{y}})$ quantify the variance of $y$ (or $\hat{y}$) and the covariance between $y$ (or $\hat{y}$) and each dimension of $x$ (denote $x_i$ the $i$-th dimension of $x$). In this sense, our matrix-based loss reduces to zero if and only if \textit{(i)} the variance of $y$ and $\hat{y}$ are the same; and \textit{(ii)} for an arbitrary dimension $x_i$, the covariance $cov(y,x_i)$ is the same to the covariance $cov(\hat{y},x_i)$. On the other hand, suppose $y$ and $\hat{y}$ are Gaussian distributed\footnote{Note that, we did not make any distribution assumption on $p(y)$ or $p(\hat{y})$ when optimizing our objective. Here, we just take the Gaussian assumption for simplicity to build the connection between our loss and the classic cross-entropy loss.} with $y\sim N(\mu_y,\sigma_y)$ and $\hat{y}\sim N(\mu_{\hat{y}},\sigma_{\hat{y}})$, then the Kullback–Leibler (KL) divergence reduces to~\cite{cover1999elements}:
\begin{align}
    D_{KL}(p(y),p(\hat{y})) & =-\int{p(y)\log{\left(\frac{p(\hat{y})}{p(y)}\right)}dy}\nonumber \\
    & =\log{\frac{\sigma_{\hat{y}}}{\sigma_y}}+\frac{\sigma_y^2+{(\mu_y-\mu_{\hat{y}})}^2}{2\sigma_{\hat{y}}^2}.
\end{align}

If $y$ and $\hat{y}$ are mean centered, then the KL divergence only relies on the variance of $y$ and $\hat{y}$. Moreover, we have: 
\begin{align}\label{eq:KL_and_cross_entropy}
    D_{KL}(p(y),p(\hat{y}))&=-\int{p(y)\log{\left(\frac{p(\hat{y})}{p(y)}\right)}dy}\nonumber \\
    &=-\int{p(y)\log{\left(p(\hat{y})\right)}dy}+\int{p(y)\log{\left(p(y)\right)}dy}\nonumber \\
    &=H(p(y),p(\hat{y}))-H(p(y)).
\end{align}
The first term on the r.h.s. of Eq.~(\ref{eq:KL_and_cross_entropy}) is exactly the cross entropy, and the second term is the entropy of $p(y)$, a constant that only depends on the training data. In this context, we can view the cross-entropy and the KL divergence are optimizing the same quantity when they are used as loss functions.

To summarize, we can conclude that, in contrast to the popular KL divergence loss or cross-entropy loss that matches $p(y)$ to $p(\hat{y})$, our matrix-based loss adds an additional penalty on ${\cov(\hat{y},x_i)|}_{i=1}^d$. We know that the covariance can be interpreted as a linear dependence (although it is not upper bounded). In this sense, our matrix-based loss also encourages the dependence between each dimension of input and the predicted variable $\hat{y}$ matches to the ground truth. 
However, our loss has a limitation: it is less sensitive to the mean shift of $y$ or $\hat{y}$. That is, suppose the group-truth values are $y=\left[0~0~0~1~1~1\right]$, if our predictions are $\hat{y} = \left[1~1~1~2~2~2\right]$, our loss becomes zero whereas the prediction between $y$ and $\hat{y}$ has a bias term. This is just because for any two functions $f_1(\mathbf{x})$ and $f_2(\mathbf{x})$ that only differ by a constant $c$, the linear dependence between $y$ and $\mathbf{x}$ remains the same, regardless of the value of $c$. This weakness can be addressed by offsetting the mean shift (or bias) of estimated predictor in the training data as a post-processing procedure, as has been used in HSIC loss~\cite{greenfeld2020robust}. That is, given $n$ training samples, suppose the trained network is $f_\theta$, the bias $b$ can be simply estimated by:
\begin{equation}
    b = \frac{1}{n} \sum_{i=1}^n \left[y_i - f_\theta(x_i) \right].
\end{equation}
Finally, the bias-adjusted model $f$ can be represented as: $f = f_\theta(x) + b$. 

\subsection{Differentiability of $\sqrt{J_{vN}(X:Y)}$}
\label{Differentiability}
Again, we consider the argument of the square root, i.e., $J_{vN}(X:Y)$. By definition, we have:
\begin{equation}
    D_{vN}(X||Y)=\Tr (X\log{X}-X\log{Y}-X+Y),
\end{equation}
and
\begin{equation}
    J_{vN}(X:Y)=\frac{1}{2}\left(D_{vN}(X||Y)+D_{vN}(Y||X)\right)=\frac{1}{2}\Tr{\left(\left(X-Y\right)\left(\log{X}-\log{Y}\right)\right)}.
\end{equation}
We thus have~\cite[Chapter~6]{nielsen2013matrix}:
\begin{equation}
    \frac{\partial D_{vN}(X||Y)}{\partial X}=\log{X}-\log{Y},
\end{equation}
and 
\begin{equation}
    \frac{\partial D_{vN}(X||Y)}{\partial Y}=-XY^{-1}+I,
\end{equation}
where $I$ denotes an identity matrix with the same size as $X$. Therefore,
\begin{equation}
    \frac{\partial J_{vN}(X:Y)}{\partial X}=\frac{1}{2}\left(\log{X}-\log{Y}-YX^{-1}+I\right).
\end{equation}
Since $J_{vN}(X:Y)$ is symmetric, the same applies for $\frac{\partial J_{vN}(X:Y)}{\partial Y}$ with exchanged roles between $X$ and $Y$.

In practice, taking the gradient of $J_{vN}(X:Y)$ is simple with any automatic differentiation software, like PyTorch \cite{paszke2019pytorch} or Tensorflow~\cite{abadi2016tensorflow}. We use PyTorch in this work. 


\subsection{Triangle Inequality of $\sqrt{J_{vN}(\sigma_{\mathbf{x},f(\mathbf{x})}:\sigma_{\mathbf{x},\hat{f}(\mathbf{x})})}$}

In fact, for three symmetric positive definite (SPD) matrices $X,Y,Z$ of the same size, we have:
\begin{equation}
\sqrt{J_{vN}(X:Y)}\leq \sqrt{J_{vN}(X:Z)} + \sqrt{J_{vN}(Z:Y)}, 
\end{equation}
proof in~\cite{nielsen2009sided,taghia2019constructing}.

\subsection{Robustness of $\sqrt{J_{vN}(\sigma_{\mathbf{x},f(\mathbf{x})}:\sigma_{\mathbf{x},\hat{f}(\mathbf{x})})}$}
\label{Sec:Robustness_of_J_vN}

Our loss depends on the covariance or linear correlation between $y$ and each dimension of $\mathbf{x}$, which makes our loss more robust than MSE and CE. This is again because the dependence between $y$ and $\mathbf{x}$ stays the same if the additive noise is independent to $y$ or $\mathbf{x}$ (a common assumption in signal processing and machine learning). However, MSE suffers from additive noise on $y$.
This robustness can be observed in Figure~\ref{fig:mixture_of_noise_four_datasets}, which uses four regression benchmark data sets with a mixture of Gaussian noise. 
The source and a description of the used datasets are as follows:
\begin{itemize}
    \item 2dplanes: This is an artificial dataset that was described in \cite{breiman1984classification}.
    \item cal\_housing: California housing dataset is a real dataset that is generated from the 1990 Census in California\footnote{\url{https://www.dcc.fc.up.pt/~ltorgo/Regression/cal_housing.html}} \cite{pace1997sparse}. Each sample contains a block group averaged over the individuals in that group.
    \item bank8FM: This data is generated from a simulation on the customer behavior while choosing their banks; we obtained this data from the Delve repository\footnote{\url{https://www.cs.toronto.edu/~delve/data/datasets.html}}.
    \item puma8NH: This dataset is generated by simulating the movement of a Unimation Puma 560 robot arm; it was also obtained from the Delve repository\footnote{\url{https://www.cs.toronto.edu/~delve/data/datasets.html}}. 
\end{itemize}

\begin{figure}
     \centering
     \subfloat[2dplanes\label{fig:2dplanes}]{         
         \includegraphics[width=0.45\textwidth,trim=0 40 0 0, clip]{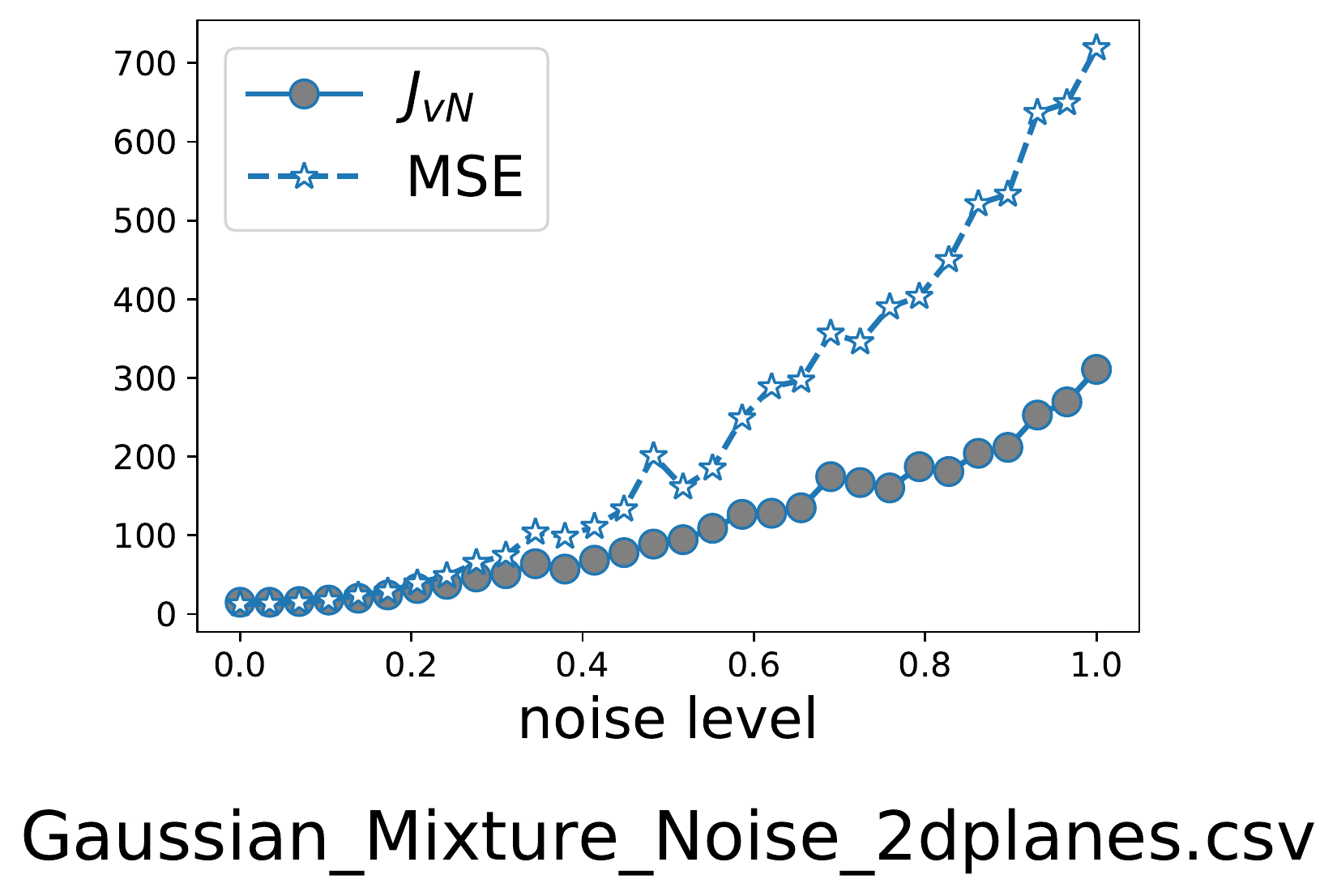}
     }
     \hfill
     \subfloat[bank8FM\label{fig:bank8FM}]{         
         \includegraphics[width=0.45\textwidth,trim=0 40 0 0, clip]{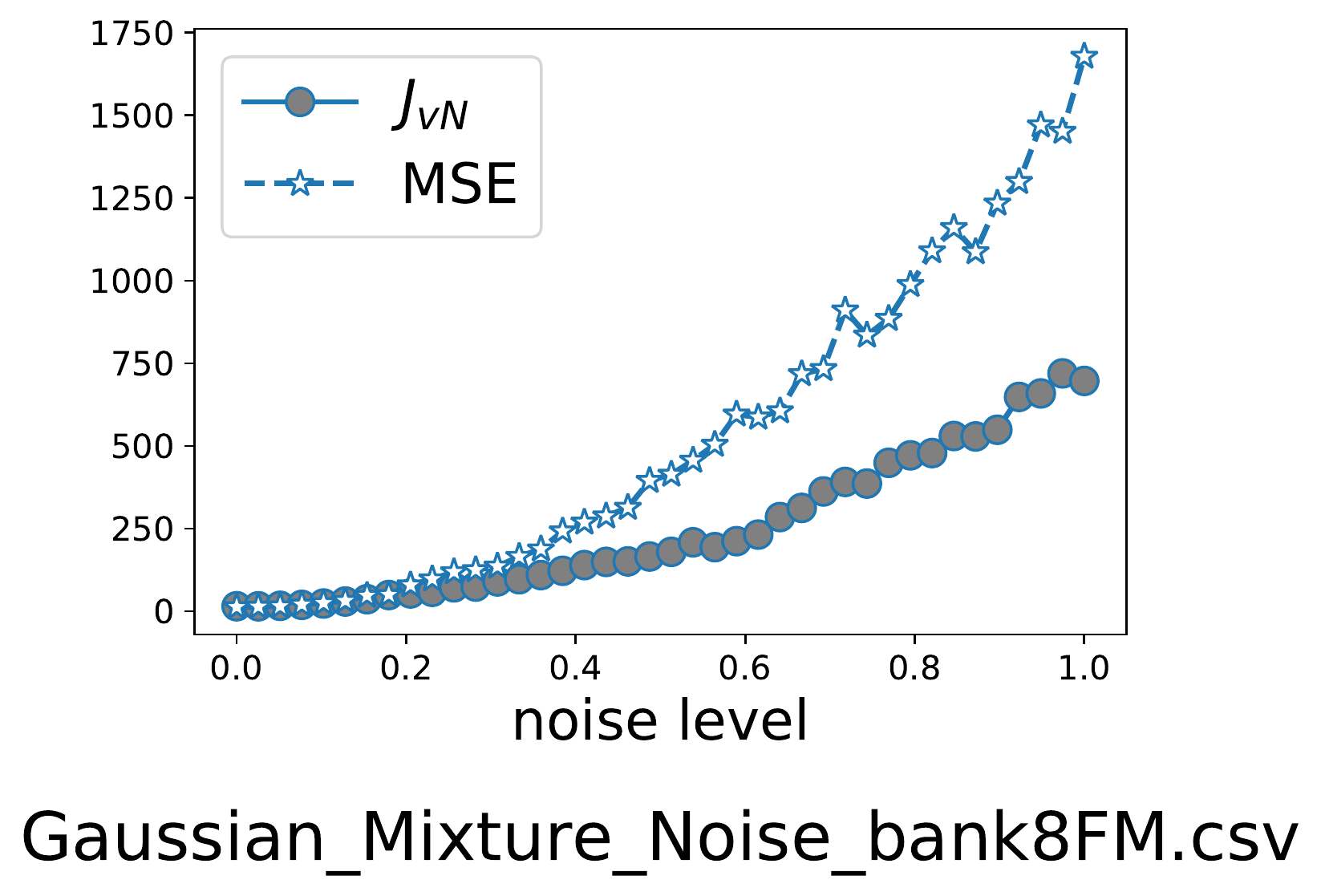}
     }
     \hfill
     \subfloat[calhousing\label{fig:calhousing}]{         
         \includegraphics[width=0.45\textwidth,trim=0 40 0 0, clip]{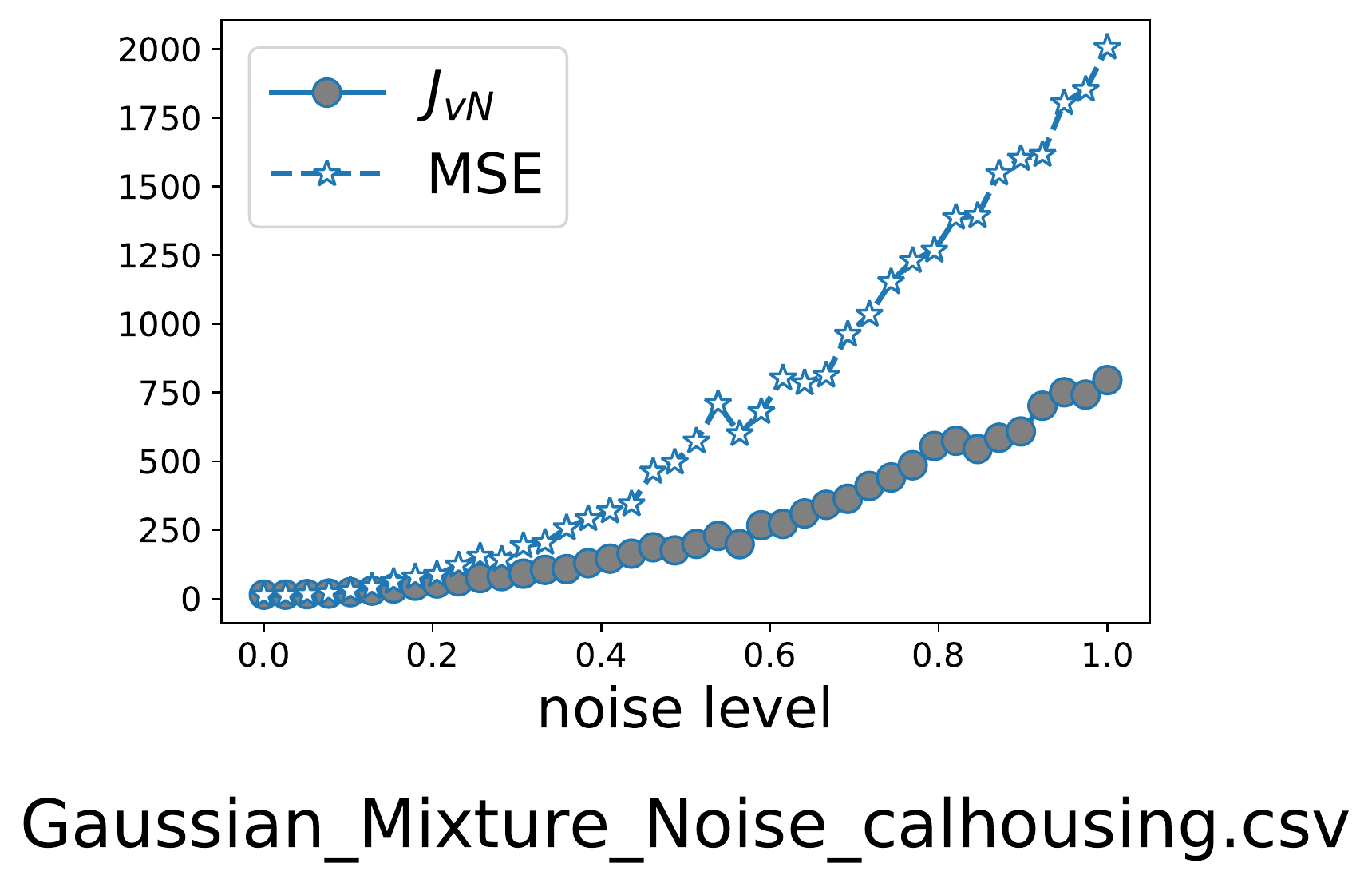}
     }
     \hfill
     \subfloat[puma8NH\label{fig:puma8NH}]{         
         \includegraphics[width=0.45\textwidth,trim=0 40 0 0, clip]{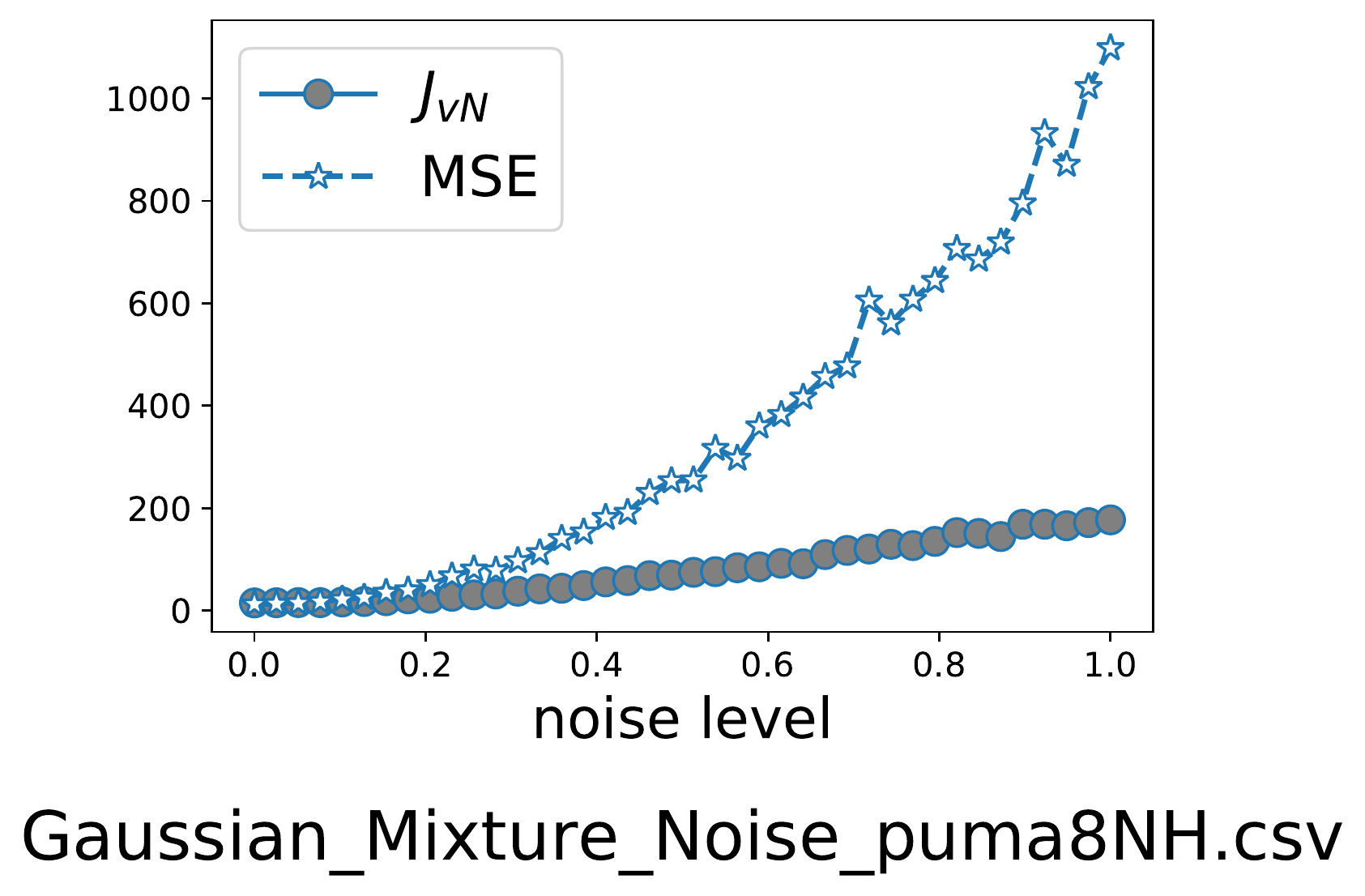}
     }
        \caption{The robustness of $\sqrt{J_{vN}}$ against MSE under MoG noises (added to the training data) on four regression benchmark data sets. (a) 2dplanes with MoG noises $\lambda[0.39\mathcal{N}(2.62,2.0)+0.37\mathcal{N}(5.98,2.1)+0.24\mathcal{N}(4.3,3.1)]$. 
(b) bank8FM with MoG noises $\lambda[0.4\mathcal{N}(3.935,4. ,)+0.38\mathcal{N}(5.693,0.979)+0.2\mathcal{N}(4.7,3.1)]$. 
(c) calhousing with MoG noises $\lambda[0.55\mathcal{N}(6.2,2.2)+0.4\mathcal{N}(4.5,3.9)+0.05\mathcal{N}(3.2,2.9)]$.(d) puma8NH with MoG noises $\lambda[0.58\mathcal{N}(4.2,0.8)+0.2\mathcal{N}(5.0,2.3)+0.22\mathcal{N}(2.1,1.1)]$. Our $\sqrt{J_{vN}}$ performs much more stable with the increase of noise level $\lambda$.}
        \label{fig:mixture_of_noise_four_datasets}
\end{figure}

\section{Convergence Behavior of the Matrix-based von Neumann Divergence}

To complement the matrix-based von Neumann divergence, we additionally provide the convergence behavior analysis of this new divergence on sample covariance matrix, which is missing in~\cite{yu2020measuring}.


First, the essence of the matrix-based divergence is to transform the problem on measuring probability distance as another problem on measuring the closeness of a few key characteristics associated with the underlying probability. For our case, we actually use the covariance matrix (i.e., the $2$nd order information) as a characterization of the underlying probability. Depending on the application, one can also use correntropy matrix to incorporate higher-order information, see~\cite{yu2020measuring}.

In this sense, the convergence behavior analysis of the matrix-based von Neumann divergence on sample covariance matrix actually includes two components: 1) how good/trustable is the covariance matrix as a complete characterization of the distribution?; and 2) how precise is the sample covariance matrix as an approximation to the ground truth covariance matrix?

For point 1, it is hard to give a bound because there are always counter-examples in which two different distributions have the same 2nd order information. If two distributions differ in the mean (i.e., 1st order information), our loss is still effective by simply offsetting the mean shift (or bias) of estimated predictor in the training data as a post-processing procedure (see Section 2.1 of supplementary material). In practice, we observed that the covariance matrix always works well. 

For point 2, we show in the following how the eigenvalues logarithmically control the convergence.

\begin{proposition}\label{proposition}

The convergence rate of the sample von Neumann divergence, $D_{vN}(\hat{\Sigma} || \hat{\Theta})$, to the true von Neumann divergence, $D_{vN}(\Sigma || \Theta)$, is controlled logarithmically by the eigenvalues of the sample covariance matrix whose distance to the true covariance matrix does not exceed $\epsilon$ with probability $1-\delta$, under the assumption of distributions with finite moments.
\end{proposition}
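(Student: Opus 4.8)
The plan is to factor the argument into a statistical part—how well the sample covariances $\hat\Sigma,\hat\Theta$ approximate their population counterparts $\Sigma,\Theta$—and a deterministic perturbation part—how sensitively $D_{vN}$ responds to changes in its two arguments—and then compose them. For the statistical part I would invoke a standard covariance-concentration result: under finite fourth moments (so that the entrywise variances of the sample covariance are finite), the sample covariance concentrates spectrally, yielding a radius $\epsilon = \epsilon(N,d,\delta) = O\!\big(\sqrt{(d+\log(1/\delta))/N}\big)$ such that $\mathbb{P}\big(\|\hat\Sigma-\Sigma\|_{\mathrm{op}}\le\epsilon\big)\ge 1-\delta/2$, and likewise for $\hat\Theta$. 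A union bound then produces a ``good event'' of probability at least $1-\delta$ on which both spectral deviations are at most $\epsilon$; this is exactly the $\epsilon$-with-probability-$(1-\delta)$ clause in the statement, and every subsequent estimate is carried out conditional on this event.

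On the good event I would first confine the spectra. Since $\Sigma,\Theta\succ 0$ by assumption, Weyl's inequality shows that for $\epsilon$ below a fixed threshold every eigenvalue of $\hat\Sigma$, $\hat\Theta$, and—by convexity of the spectrum under interpolation—of the segment matrices $\Sigma_t=(1-t)\Sigma+t\hat\Sigma$ and $\Theta_t=(1-t)\Theta+t\hat\Theta$ lies in a compact interval $[m,M]\subset(0,\infty)$ with $m$ bounded away from zero. This spectral confinement is what legitimizes the matrix-function manipulations below; without it the matrix logarithm is not Lipschitz.

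For the perturbation part I would reuse the exact gradient formulas already established in the excerpt, $\nabla_X D_{vN}(X\|Y)=\log X-\log Y$ and $\nabla_Y D_{vN}(X\|Y)=I-XY^{-1}$, and integrate along the straight segment joining $(\Sigma,\Theta)$ to $(\hat\Sigma,\hat\Theta)$:
\begin{equation}
D_{vN}(\hat\Sigma\|\hat\Theta)-D_{vN}(\Sigma\|\Theta)=\int_0^1\Big[\langle\log\Sigma_t-\log\Theta_t,\,\hat\Sigma-\Sigma\rangle+\langle I-\Sigma_t\Theta_t^{-1},\,\hat\Theta-\Theta\rangle\Big]\,dt.
\end{equation}
Taking absolute values and applying Cauchy--Schwarz in the trace inner product bounds the first term by $\|\log\Sigma_t-\log\Theta_t\|_F\cdot\|\hat\Sigma-\Sigma\|_F$ and the second by $\|I-\Sigma_t\Theta_t^{-1}\|_F\cdot\|\hat\Theta-\Theta\|_F$. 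The spectral confinement gives $\|\Theta_t^{-1}\|_{\mathrm{op}}\le 1/m$, controlling the second factor, while the first is handled by the operator-Lipschitz property of $\log$: through the Daleckii--Krein representation of its Fr\'echet derivative, the governing constant is the collection of divided differences $\frac{\log\lambda_i-\log\lambda_j}{\lambda_i-\lambda_j}$, each at most $1/m$. This is precisely the claimed logarithmic control by the eigenvalues. Collecting the estimates yields $|D_{vN}(\hat\Sigma\|\hat\Theta)-D_{vN}(\Sigma\|\Theta)|\le C(m,M,d)\,\epsilon$ on the good event, hence with probability at least $1-\delta$.

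The main obstacle I anticipate is not the concentration step but the clean control of $\|\log\Sigma_t-\log\Theta_t\|$: the matrix logarithm is not globally Lipschitz, so the whole estimate hinges on first proving that all interpolating matrices remain uniformly bounded away from singularity and then invoking the divided-difference bound for the log's Fr\'echet derivative. Making the final constant $C(m,M,d)$ explicit—so that the logarithmic dependence on the eigenvalue ratio is exhibited cleanly rather than buried in a crude $1/m$ factor—is the delicate point; the symmetrized statement for $J_{vN}$ then follows at once from the triangle inequality already recorded in the excerpt.
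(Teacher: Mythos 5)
Your proposal is sound in structure but follows a genuinely different route from the paper's own proof, so a comparison is in order. The paper also begins from Vershynin's covariance concentration for finite-moment distributions --- and this is the citation actually matching the proposition's hypothesis: the rate there is $C_{q,K,L,\delta}(\log\log n)^n(n/N)^{1/2-2/q}$, not the $O\big(\sqrt{(d+\log(1/\delta))/N}\big)$ you quote, which is a sub-Gaussian rate and needs stronger tail assumptions than ``finite moments.'' From there, however, the paper never touches Fr\'echet derivatives: it expands both divergences in eigendecompositions, $D_{vN}(\Sigma\|\Theta)=\sum_i\lambda_i\log\lambda_i-\sum_{i,j}(v_i^\top u_j)\lambda_i\log\theta_j-\sum_i(\lambda_i-\theta_i)$, drops the last sum (it cancels in the symmetrized divergence), and compares the remaining terms eigenvalue-by-eigenvalue using the scalar bounds $|\lambda_i-\hat{\lambda}_i|\le\epsilon$, ending with a bound dominated by $\epsilon\sum_i\log(\hat{\lambda}_i+\epsilon)+\epsilon\sum_{i,j}c_{i,j}\log(\theta_j+\epsilon)$ --- the literal ``logarithmic control by the eigenvalues.'' What the paper's route buys is that explicit log-of-eigenvalue form; what it costs is an assumption on the eigenvector overlaps ($c_{i,j}\le\hat{c}_{i,j}$, justified only by an appeal to symmetry) and no perturbation control of eigenvectors whatsoever. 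Your mean-value argument along the segment $(\Sigma_t,\Theta_t)$ avoids all eigenvector bookkeeping and treats both matrix arguments symmetrically, which is the cleaner and more defensible path; its price is a constant $C(m,M,d)$ of $1/m$ and $M/m$ flavor rather than an explicitly logarithmic one, as you yourself note. Two local repairs are needed before your argument is airtight. First, the formula $\nabla_Y D_{vN}(X\|Y)=I-XY^{-1}$, which you inherit from the supplement, is exact only when $X$ and $Y$ commute; the general gradient is the Daleckii--Krein form $I-U\big(\Phi\circ(U^\top XU)\big)U^\top$, where $U$ diagonalizes $Y$ and $\Phi_{ij}$ are the divided differences of $\log$ at the eigenvalues of $Y$. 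Your norm estimates survive this correction since $|\Phi_{ij}|\le 1/m$ on the confined spectrum. Second, invoking Daleckii--Krein to bound $\|\log\Sigma_t-\log\Theta_t\|_F$ is both unnecessary and counterproductive: $\Sigma_t$ and $\Theta_t$ are two different population-level matrices, not close to one another, so the Lipschitz bound only yields the crude $2M\sqrt{d}/m$; spectral confinement gives directly $\|\log\Sigma_t-\log\Theta_t\|_F\le 2\sqrt{d}\max(|\log m|,|\log M|)$, and this uniform bound is precisely where the logarithmic dependence on the eigenvalues --- the content of the proposition --- actually appears in your version of the argument.
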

\begin{proof} [Proof of Proposition~\ref{proposition}]	

Vershynin~\cite{vershynin2012close} shows that for distributions of finite moments, the sample complexity is of $O(N)$ to achieve a distance $\epsilon$ between the sample and the true covariance matrix, $\hat{\Sigma}$ and $\Sigma$, of an $n$-dimensional random variable $X$. For $X$ with $q$-th moment being constant with appropriate absolute constant and 
$\lvert\lvert X \rvert\rvert_2 < K \sqrt{n}$, $\mathbb{E} \lvert \langle X,x \rangle \rvert^q \leq L^q$  for $x \in S^{n-1}$, then with probability $1-\delta$ and for some $K$ and $L$ the following holds:
\begin{equation}
\lvert\lvert  \Sigma - \hat{\Sigma} \rvert\rvert_2 \leq C_{q,K,L,\delta} (\log \log n)^n (\frac{n}{N})^{\frac{1}{2}-\frac{2}{q}}, 
\end{equation}
where $C_{q,K,L,\delta}$ depends only on $q,K,L,\delta$, $\delta>0$, and $N$ is the number of samples.
Building on this result and assuming $\epsilon =C_{q,K,L,\delta} (\log \log n)^n (\frac{n}{N})^{\frac{1}{2}-\frac{2}{q}}$, we know that $\lvert \lambda_1(\Sigma) - \lambda_1(\hat{\Sigma}) \lvert \leq \epsilon$,
where $\lambda_1(\Sigma)$ and $\lambda_1(\hat{\Sigma})$ are the largest eigenvalues of $\Sigma$ and $\hat{\Sigma}$, respectively; And $\lambda_n(\Sigma)$ and $\lambda_n(\hat{\Sigma})$ are the smallest eigenvalues.

The von Neumann divergence between between $\Sigma$ and $\Theta$ is written as:
\begin{equation}\label{eq:convergence_1}
D_{vN}(\Sigma || \Theta ) = tr(\Sigma log \Sigma - \Sigma log \Theta - \Sigma + \Theta)= \sum_i \lambda_i \log \lambda_i - \sum_{i,j} (v_i^T u_j )\lambda_i \log \theta_j -\sum_i (\lambda_i -\theta_i),
\end{equation}
where $\lambda_i$ and $v_i$ are the eigenvalues and eigenvectors for $\Sigma$, 
and $\theta_i$ and $u_i$ are the eigenvalues and eigenvectors for $\Theta$. 
Focusing only on the first and second terms of Eq.~(\ref{eq:convergence_1}), since the final term is cancelled out when computing the symmetric divergence, we can derive the following contribution of $\lambda_i$ to the convergence between $D_{vN}(\Sigma || \Theta )$ and $D_{vN}(\hat{\Sigma} || \hat{\Theta})$:
\begin{equation}\label{eq:convergence_2}
\begin{aligned}
&\lambda_i \log(\lambda_i) -c_{i,j} \lambda_i \log \theta_j - \hat{\lambda}_i \log(\hat{\lambda}_i) + \hat{c}_{i,j} \hat{\lambda}_i \log \hat{\theta}_j \\
&=\lambda_i \log(\lambda_i) - \hat{\lambda}_i \log(\hat{\lambda}_i)
-c_{i,j} \lambda_i \log \theta_j + \hat{c}_{i,j} \hat{\lambda}_i \log \hat{\theta}_j\\
& \leq(\hat{\lambda}_i +\epsilon) \log(\hat{\lambda}_i +\epsilon) - \hat{\lambda}_i \log(\hat{\lambda}_i) -c_{i,j} \lambda_i \log \theta_j + c_{i,j} (\lambda_i +\epsilon) \log(\theta_j +\epsilon) \\
& \leq \epsilon \log(\hat{\lambda}_i +\epsilon) + \hat{\lambda}_i \log( \frac{\hat{\lambda}_i+\epsilon}{\hat{\lambda}_i}) + c_{i,j} \lambda_i \log \frac{\theta_j+\epsilon}{\theta_j} + \epsilon c_{i,j} \log(\theta_j +\epsilon) \\
& \leq \epsilon \log(\hat{\lambda}_i +\epsilon) + \hat{\lambda}_i \log( 1+\epsilon) + c_{i,j} \lambda_i \log(1+\epsilon) + \epsilon c_{i,j} \log(\theta_j +\epsilon),
\end{aligned}
\end{equation}
where $c_{i,j}=v_i^T u_j$, $\hat{c}_{i,j}=\hat{v}_i^T \hat{u}_j$, and assuming that $c_{i,j} \le \hat{c}_{i,j}$ (due to symmetry, the computation would still be valid when the $c_{i,j} < \hat{c}_{i,j}$).

It is clear that Eq.~(\ref{eq:convergence_2}) is dominated by $\epsilon \sum_i \log(\hat{\lambda_i} +\epsilon) +\sum_{i,j} \epsilon c_{i,j} \log(\theta_j +\epsilon)$. Hence, the convergence bound is controlled logarithmically by the eigenvalues of the studied matrix scaled by $\epsilon$. 
\end{proof}

\section{Multi-Source Domain Adaptation with Matrix-based Discrepancy Distance}
\label{Multi-Source Domain Adaptation with Matrix-based Discrepancy Distance}
\textit{Reminder: Theorem~\ref{theorem:Q_{B}_bound}.
Given a set of $K$ source domains $S=\{D_{s_1},\dots,D_{s_K}\}$ and denote the ground truth mapping function in $D_{s_i}$ as $f_{s_i}$. Let us attribute weight $w_i$ to source $D_{s_i}$ (subject to $\sum_{i=1}^{K}{w_i=1}$) and generate a weighted source domain $D_\alpha$, such that the source distribution $P_\alpha=\sum_{i=1}^{K}{w_i P_{s_i}}$ and the mapping function\\
$f_\alpha: x\rightarrow \left(\sum_{i=1}^{K}w_i P_{s_i}(x)f_{s_i}(x)\right)/\left(\sum_{i=1}^{K}w_i P_{s_i}(x)\right)$. For any hypothesis $h\in \mathcal{H}$, the square root of $J_{vN}$ on the target domain $D_t$ is bound in the following way:
	\begin{equation}
	\sqrt{J_{vN}(\sigma^t_{x,h(x)}:\sigma^t_{x,f_t(x)})} \leq \sum_{i=1}^K w_i \left( \sqrt{J_{vN}(\sigma^{s_i}_{x,h(x)}:\sigma^{s_i}_{x,f_{s_i}(x)})} \right) 
	+ D_{\text{M-disc}}(P_t,P_{\alpha};h) + \eta_{Q}(f_\alpha,f_t),
	\end{equation}
where $\eta_{Q}(f_\alpha,f_t)= \min_{h^{*} \in \mathcal{H}} \sqrt{J_{vN}(\sigma^t_{x,h^{*}(x)}:\sigma^t_{x,f_t(x)})} +  \sqrt{J_{vN}(\sigma^\alpha_{x,h^{*}(x)}:\sigma^\alpha_{x,f_{\alpha}(x)})}$ is the minimum joint empirical losses on source $D_\alpha$ and the target $D_t$, achieved by an optimal hypothesis $h^{*}$.}

\begin{proof} [Proof of Theorem~\ref{theorem:Q_{B}_bound}]	
	
	For the weighted source $D_\alpha$ with distribution $P_\alpha$ and true mapping function $f_\alpha$, the following bound holds for each $h \in \mathcal{H}$:
	\begin{align} 
	\sqrt{J_{vN}(\sigma^t_{x,h(x)}\|\sigma^t_{x,f_t(x)})} &\leq \sqrt{J_{vN}(\sigma^\alpha_{x,h(x)}\|\sigma^\alpha_{x,f_\alpha(x)})}+ \left|\sqrt{J_{vN}(\sigma^t_{x,h(x)}\|\sigma^t_{x,f_t(x)})} - \sqrt{J_{vN}(\sigma^\alpha_{x,h(x)}\|\sigma^\alpha_{x,f_\alpha(x)})}\right| \label{eq:singlesource1} \\
	&\leq \sqrt{J_{vN}(\sigma^\alpha_{x,h(x)}\|\sigma^\alpha_{x,f_\alpha(x)})}+  \color{red}{\left|\sqrt{J_{vN}(\sigma^t_{x,h(x)}\|\sigma^t_{x,h^*(x)})} - \sqrt{J_{vN}(\sigma^t_{x,h(x)}\|\sigma^t_{x,f_t(x)})}\right|} \nonumber \\
	& + \color{green}{\left|\sqrt{J_{vN}(\sigma^\alpha_{x,h(x)}\|\sigma^\alpha_{x,h^*(x)})} - \sqrt{J_{vN}(\sigma^\alpha_{x,h(x)}\|\sigma^\alpha_{x,f_\alpha(x)})}\right|} \nonumber \\
	& + \color{blue}{\left|\sqrt{J_{vN}(\sigma^t_{x,h(x)}\|\sigma^t_{x,h^*(x)})} - \sqrt{J_{vN}(\sigma^\alpha_{x,h(x)}\|\sigma^\alpha_{x,h^*(x)})}\right|} \label{eq:singlesource2}  \\
	&\leq \sqrt{J_{vN}(\sigma^\alpha_{x,h(x)}\|\sigma^\alpha_{x,f_\alpha(x)})} +  \eta_{Q}(f_\alpha,f_t) +  D_{\text{M-disc}}(P_t,P_\alpha;h) \label{eq:singlesource3},
	\end{align}
	where $\eta_{Q}(f_\alpha,f_t)= \min_{h^{*} \in \mathcal{H}} \sqrt{J_{vN}(\sigma^t_{x,h^{*}(x)}\|\sigma^t_{x,f_t(x)})} +  \sqrt{J_{vN}(\sigma^\alpha_{x,h^{*}(x)}\|\sigma^\alpha_{x,f_{\alpha}(x)})}$ is the minimum joint empirical losses on source $D_\alpha$ and the target $D_t$, achieved by an optimal hypothesis $h^{*}$.
	
	Inequality~(\ref{eq:singlesource1}) holds since $\sqrt{J_{vN}}$ is always non-negative. Inequality~(\ref{eq:singlesource3}) follows from the triangular inequality of $\sqrt{J_{vN}}$ (i.e., $\textcolor{red}{\left|\sqrt{J_{vN}(\sigma^t_{x,h(x)}\|\sigma^t_{x,h^*(x)})} - \sqrt{J_{vN}(\sigma^t_{x,h(x)}\|\sigma^t_{x,f_t(x)})}\right|} \leq \sqrt{J_{vN}(\sigma^t_{x,h^{*}(x)}\|\sigma^t_{x,f_t(x)})}$ and \\ $\textcolor{green}{\left|\sqrt{J_{vN}(\sigma^\alpha_{x,h(x)}\|\sigma^\alpha_{x,h^*(x)})} - \sqrt{J_{vN}(\sigma^\alpha_{x,h(x)}\|\sigma^\alpha_{x,f_\alpha(x)})}\right|} \leq \sqrt{J_{vN}(\sigma^\alpha_{x,h^{*}(x)}\|\sigma^\alpha_{x,f_\alpha(x)})}$) and \\ $\textcolor{blue}{\left|\sqrt{J_{vN}(\sigma^t_{x,h(x)}\|\sigma^t_{x,h^*(x)})} - \sqrt{J_{vN}(\sigma^\alpha_{x,h(x)}\|\sigma^\alpha_{x,h^*(x)})}\right|} \leq D_{\text{M-disc}}(P_t,P_\alpha;h)$ by definition of matrix-based discrepancy distance.
	
	On the other hand, by definition we have: 
	\begin{equation}
	f_\alpha(x) = \sum_{i=1}^K w_i f_{s_i}(x), \text{s.t.}, \sum_i^K w_i=1.
	\end{equation}
	
	Therefore, for each $h\in \mathcal{H}$, we have:
	\begin{equation}
	    f_\alpha(x) - h(x) = \sum_{i=1}^K w_i \left(f_{s_i}(x)-h(x)\right),
	\end{equation}
	hence, the prediction residual on domain $D_\alpha$ is also a weighted combination of the prediction residual from each source domain $D_{s_i}$. If one evaluates prediction residual with a convex function $\epsilon$, such as the mean absolute error (MAE) loss, the mean squared error (MSE) loss or the loss defined by von Neumann divergence~\cite{bauschke2001joint,nielsen2013matrix}, it follows that:
	\begin{equation}
	    \epsilon_\alpha(f_\alpha(x),h(x)) \leq \sum_{i=1}^K w_i \epsilon_i\left(f_{s_i}(x),h(x)\right).
	\end{equation}
	
	In our case, it suggests that:
	\begin{equation}
	    \sqrt{J_{vN}(\sigma^\alpha_{x,h(x)}\|\sigma^\alpha_{x,f_\alpha(x)})} \leq \sum_{i=1}^K w_i \left( \sqrt{J_{vN}(\sigma^{s_i}_{x,h(x)}:\sigma^{s_i}_{x,f_{s_i}(x)})} \right)
	    \label{eq:convex_loss}
	\end{equation}
	
	Combining inequalities~(\ref{eq:singlesource3}) and (\ref{eq:convex_loss}), we conclude the proof.
\end{proof}

\section{Further Note on EWC}
\label{On the Relation between RSP and EWC}

\subsection{Elastic Weight Consolidation}
	\label{EWC}
	Kirkpatrick et. al argue, in EWC \cite{kirkpatrick2017overcoming}, from a Bayesian point of view that the log-posterior probability of the parametrization $\theta$, after observing two consequentive tasks $T_A$ and $T_B$, can be decomposed into the log-likelihood of the task $T_B$ given the current network and the log-prior $\log p(\theta|T_A)$ (which is the same as the log-posterior given the previous task $T_A$), i.e., 
	\[
	\log p(\theta|T_A,T_B)  = \log p(T_B|\theta) + \log p(\theta|T_A) - \log p(T_B|T_A) .
	\]
	Using Laplace approximation, the log-posterior distribution $\log p(\theta|T_A)$ is approximated by a Gaussian distribution with mean $\theta_{A,i}^*$, and the inverse of the Hessian of the negative log-likelihood $-\log p(\theta|T_A)$ gives the variance. This is further simplified by taking the precision matrix as the diagonal Fisher information matrix $F_\theta$. As a result, the loss function is re-written as $
	\mathcal{L}(\theta) = \mathcal{L}_B(\theta) + \sum_i \frac{\lambda}{2} \mathcal{F}_{\theta_i} (\theta_i - \theta_{A,i}^*)^2$, with $\mathcal{L}_B(\theta)$ being the loss for task $T_B$, and $\lambda$ is the importance of the previous task.
\subsection{Special Relation to EWC and Fisher Information}	
\label{Special Relation to EWC and Fisher Information}	
Chaudhry et. al \cite{chaudhry2018riemannian} show that the KL-divergence $D_{KL}(p_{\theta}(y|x)||p_{\theta + \Delta\theta}(y|x))$ between conditional likelihoods of two neural networks parametrized by $\theta$ and $\theta + \Delta\theta$ can be approximated as $D_{KL}(p_{\theta}(y|x)||p_{\theta + \Delta\theta}(y|x)) \approx  \frac{1}{2} \Delta{\theta}^T \mathcal{F}_\theta \Delta\theta$ where $F_\theta$ is the Fisher information matrix at $\theta$, assuming that $\Delta\theta \to 0$, see the proof in Appendix A1 of \cite{chaudhry2018riemannian}. Since it is infeasible to compute $\mathcal{F}_\theta$ when the number of parameters is in the order of millions, parameters are assumed to be independent and only the diagonal of $F_\theta$ is computed, as a result, the divergence becomes $D_{KL}(p_{\theta}(y|x)||p_{\theta + \Delta\theta}(y|x)) \approx \sum_{\theta_i}  \frac{1}{2} \mathcal{F}_{\theta_i} \Delta\theta_i^2$ which collides with the regularization term of EWC, i.e., the second term in
\begin{align}
	\mathcal{L}(\theta) = \mathcal{L}_B(\theta) + \sum_i \frac{\lambda}{2} \mathcal{F}_{\theta_i} (\theta_i - \theta_{A,i}^*)^2	. 
	\label{eq:EWC_supp}
	\end{align}

\section{Illustrations and Complexity Analysis}
\subsection{Multi-Source Domain Adaptation with Matrix-Based Discrepancy Distance}
Figure~\ref{fig:Illustrations_MSDA} depicts an illustration of our method MDD. $X_{s_i}$ and $Y_{s_i}$ are the input samples and the ground truth from the source domain $s_i$; $X_t$ holds the input samples of the target domain without labels. Passing through the feature extractor layers $f_\theta$, the representations $f_\theta(X_{s_i})$ and $f_\theta(X_t)$ are produced.
While the hypothesis $h$ is being trained to be a good predictor (reducing weighted source risk $L_s$), the hypothesis $h^\prime$ tries to increase the matrix-based discrepancy distance between the target distribution and the weighted combination of source domains (i.e., $D_{M-disc}$).
The pseudo-code, illustrating the loss computation in the forward propagation and the parameters' update in the backward propagation, is presented in Algorithm~\ref{algo:Pseudo_algorithmMDD}.

\begin{figure}[h]
 \centering
\includegraphics[width=0.7\linewidth,trim = 150 140 170 0,clip]{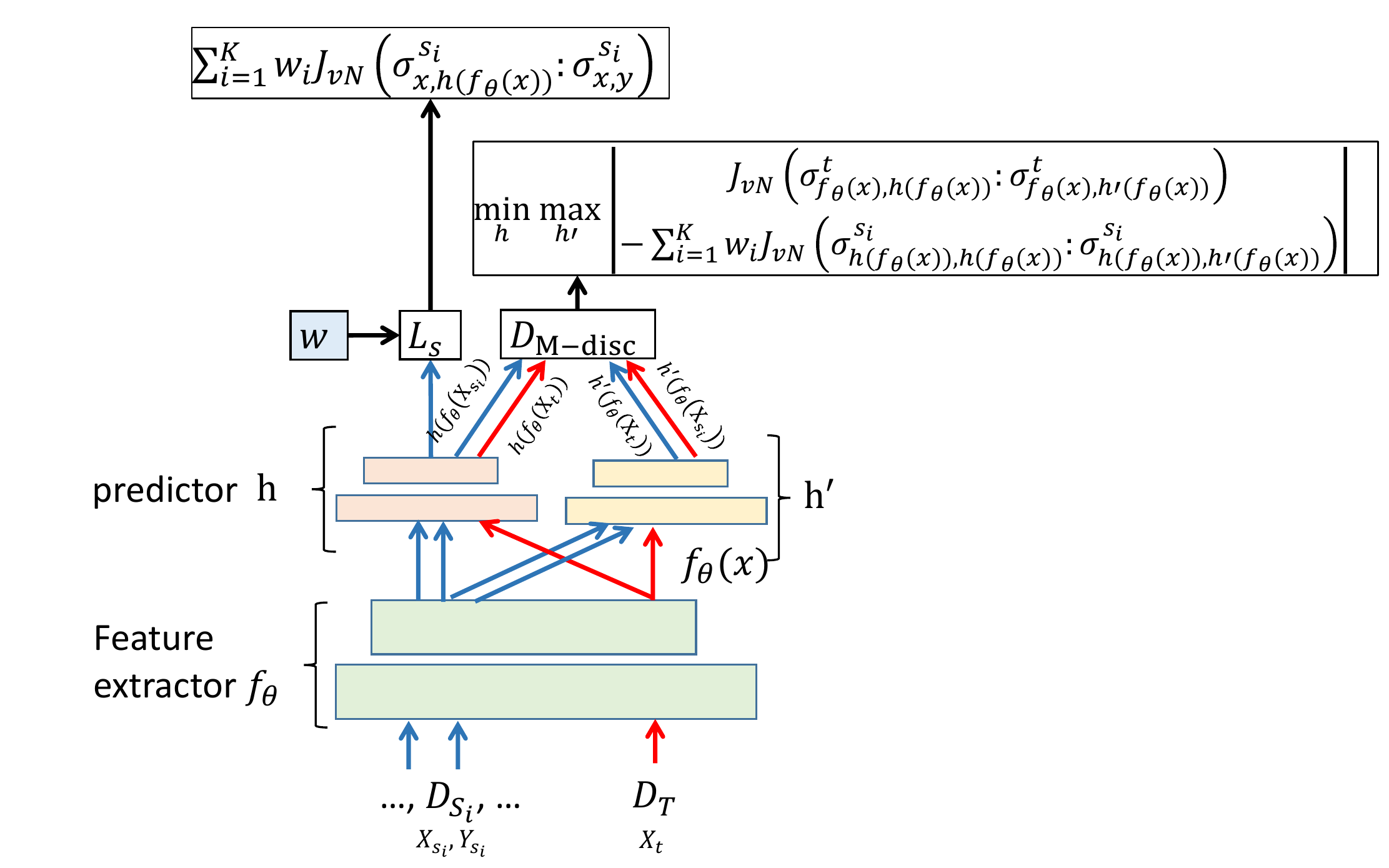}
\caption{An illustration of how the Jeffery von Neumann divergence is employed in our multi-source domain adaptation method, MDD. The objective includes two terms: 1) minimization of the weighted risk from all source domains, i.e., $L_s$; and 2) minimization of the matrix-based discrepacny distance $D_{M-disc}$ between target distribution and the weighted source distribution.}
\label{fig:Illustrations_MSDA}
\end{figure}

\begin{algorithm}[h]
\caption{Pseudo algorithm for \textbf{MDD}}
\label{algo:Pseudo_algorithmMDD}
 $h$: the predictor, $h^{\prime}$: the adversarial hypothesis\\
 $f_\theta$: the feature extractor, ${\eta}$: the learning rate, $K$: number of sources\\
 initialize $w_s$= $\frac{1}{K}$\\
\For{$i$ = $\rm{1}$...$epochs$}
{
$\textbf{Forward propagation}$\\
$e_j = w_j \sqrt{J_{vN}(\sigma^{s_j}_{x,h(f_\theta(x))}:\sigma^{s_j}_{x,y})}$ for each source $s_j$, $j \in \{1,\dots,K\}$\\
$D_{\text{M-disc}}(P_t,P_{\alpha};h) = \Big|\sqrt{J_{vN}(\sigma^t_{f_\theta(\mathbf{x}),h(f_\theta(\mathbf{x}))}:\sigma^t_{f_\theta(\mathbf{x}),h^\prime(f_\theta(\mathbf{x}))})} -\sum_{j=1}^{K}    
    w_k \sqrt{J_{vN}(\sigma^{s_j}_{f_\theta(\mathbf{x}),h(f_\theta(x))}:\sigma^{s_j}_{f_\theta(\mathbf{x}),h^\prime(f_\theta(x))})}\Big|$  \quad\quad\quad\quad\quad\quad\quad $\triangleright (\ast)$\\

$\textbf{Backward propagation}$\\

$h^{(i+1)} = h^{(i)}-\eta$($\sum_{j=1}^{K}w_j^{(i)}$$\nabla_h e_j)$ \\

$h^{\prime(i+1)} = h^{\prime(i)}+\eta(\sum_{j=1}^{K}w_j^{(i)}\nabla_{h^\prime}D_{\text{M-disc}}(P_t,P_{\alpha};h)$\\

$\theta^{(i+1)} =\theta^{(i)}-\eta(\sum_{j=1}^{K}w_j^{(i)}\nabla_{\theta}e_j+{\nabla_{\theta}} D_{\text{M-disc}}(P_t,P_{\alpha};h)$\\

$w_j^{(i+1)} = w_j^{(i)}-\eta(\nabla_{w_j}
D_{\text{M-disc}}(P_t,P_{\alpha};h)$, \quad\quad$j \in \{1,\dots,K\}$\\
$w^{(i+1)} = w^{(i+1)}/\vert\vert w^{(i+1)}\vert\vert_1$
}

$(\ast)$ $D_\alpha$ is the weighted source domain assuming that the source distribution is $P_\alpha=\sum_{j=1}^{K}{w_j P_{s_j}}$ and the mapping function is $f_\alpha: x\rightarrow \left(\sum_{j=1}^{K}w_j P_{s_j}(x)f_{s_j}(x)\right)/\left(\sum_{j=1}^{K}w_j P_{s_j}(x)\right)$. 
\end{algorithm}

The complexity for computing the von Neumann divergence $J_{vN}(\sigma^s_{f_\theta(\mathbf{x}),h(f_\theta(\mathbf{x}))}:\sigma^s_{f_\theta(\mathbf{x}),h^\prime(f_\theta(\mathbf{x}))})$ on the domain $s$ and the two hypotheses $h,h^\prime \in \mathcal{H}$ constitutes the following: \textit{(i)} computing the covariance matrix $\sigma^s_{f_\theta(\mathbf{x}),h(f_\theta(\mathbf{x}))}$ takes $O(N(d+1)^2)$ where $d$ is the size of the final layer of the feature extractor $f_\theta$ and $N$ is the batch size. \textit{(ii)} computing $J_{vN}$ on two covariance matrices from $\mathcal{R}^{(d+1)\times(d+1) }$ requires the eigenvalue decomposition which is 
$(O(d+1)^3)$. Hence, the final complexity for a batch is $O((d+1)^3+ N(d+1)^2)$. 
Notice that this complexity is independent of the dimensionality of the data and is only controlled by the dimensionality of the extracted features.


\subsection{Continual Learning by Representation Similarity Penalty}

\begin{figure}[ht]
 \centering
\includegraphics[width=0.7\linewidth,trim = 150 190 440 150,clip]{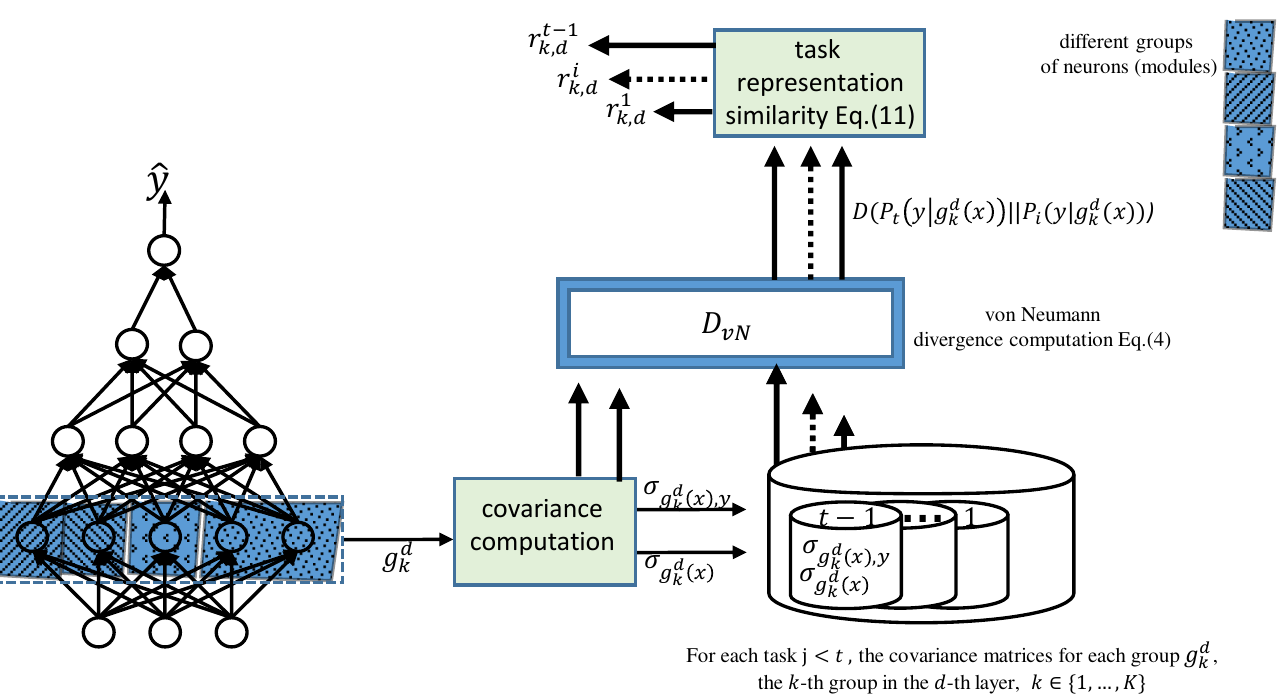}
\caption{An illustration of how the von Neumann divergence is employed in our continual learning method, RSP. The illustration depicts how the relatedness in Eq.(11) between the current task $T_t$ and each previous task $T_j$ ($j<t$) is computed, for each identified group of neurons $g_k^d$ (module) for each layer $d$, where $k\in\{1,\dots,K\}$.}
\label{fig:Illustrations_CL}
\end{figure}

Figure~\ref{fig:Illustrations_CL} depicts an illustration of our continual learning method RSP.
After training the initial network's parameters $\theta$ on the first task, the continual aspect takes place after training the initial network's parameters $\theta$ on the first task, $T_1$. Thereafter, the modular groups $\{g_1^d,\dots,g_{K_d}^d\}$ are formulated for each layer $d \in \{2,\dots,D-1\}$ (we employ the community detection method proposed in \cite{watanabe2018modular}).
On the first task $T_1$ and each following task $T_j$, the covariance matrices
$\sigma_{g_k^d (x)y}$ and $\sigma_{g_k^d (x)}$ are computed for each group $g_k^d$ in each layer $d$, where $k\in\{1,\dots,K\}$.
The matrices $\sigma_{g_k^d (x)y}$ and $\sigma_{g_k^d (x)}$ for task $T_j$ characterize $P_j(G_k^d (x),y)$ and $P_j(G_k^d (x))$. The computed matrices are maintained in the memory for a future use. 

For each forthcoming task $T_t$, for each group, $g_{i}^d$, we compute the discrepancy $D(P_t(y|g_k^d (x)||P_j(y|g_k^d (x)))$ between the conditional distributions of the current task $T_t$ and the previous tasks $T_j$ ($j<t$). The computed discrepancy shows how each pair of tasks is related given the respective module (group of neurons). The relatedness in Eq.(11), which is employed in the regularization term in Eq.(10), is computed based on the discrepancy.

For number of groups $G$ after modularization, the complexity becomes $O(G(\lceil \frac{d}{G} \rceil+1)^3+  G \cdot N(\lceil \frac{d}{G} \rceil+1)^2)$.

\section{Evaluation Details and Additional Experiments}
\label{Additional Experiments}	
\subsection{Additional Results and Information for Evaluating the Multi-Source Domain Adaptation with Matrix-based Discrepancy Distance}
\label{Additional information for Evaluating Multi-Source Domain Adaptation}
In our experiments, we used the following multi-source domain adaptation libraries:
\begin{itemize}
    \item MDAN: No specified license. \url{https://github.com/hanzhaoml/MDAN}
    \item ADisc-MSDA: No specified license. \url{https://github.com/GRichard513/ADisc-MSDA}
    \item DARN: MIT License. \url{https://github.com/junfengwen/DARN}
\end{itemize}

\subsubsection{Vehicle Counting}

TRaffic ANd COngestionS (TRANCOS) \cite{guerrero2015extremely} dataset is a public benchmark dataset for extremely overlapping vehicle counting. It contains images that were collected from 11 video surveillance cameras, that monitor different highways in the Madrid area. The images show traffic jam scenes with different scenarios, light conditions, and perspectives. The dataset contains a total of 1244 images and about 46700 manually annotated vehicles with a considerable grade of overlap. 

The images are of size $480 \times 640$ with three color channels. 
Each image is labeled using the dotting annotation method \cite{lempitsky2010learning} creating density images, besides, a mask, depicting the road's region of interest, is provided. The ground truth is turned into density maps by placing a Gaussian centered at each annotated point $p_i \in P$, where $P$ is the set of annotated vehicle positions for a single image. The resulting density map for each pixel $q$ is defined as $D(q) =\sum_{p_i}^{P} \mathcal {N}(q; p_i ,\sigma)$,
where $\sigma$ is a constant parameter that represents the smoothness of the Gaussian and it should roughly cover the area of the object. Following \cite{onoro2018learning}, we set $\sigma=10$. The total number of vehicles can be easily obtained by integrating over the density map defined by all pixels. Figure \ref{fig:Sample_Camera206} depicts a sample image with its density map and mask.

\begin{figure}[h]
     \subfloat[Original image\label{fig:Original1}]{         
        \includegraphics[scale=0.5]{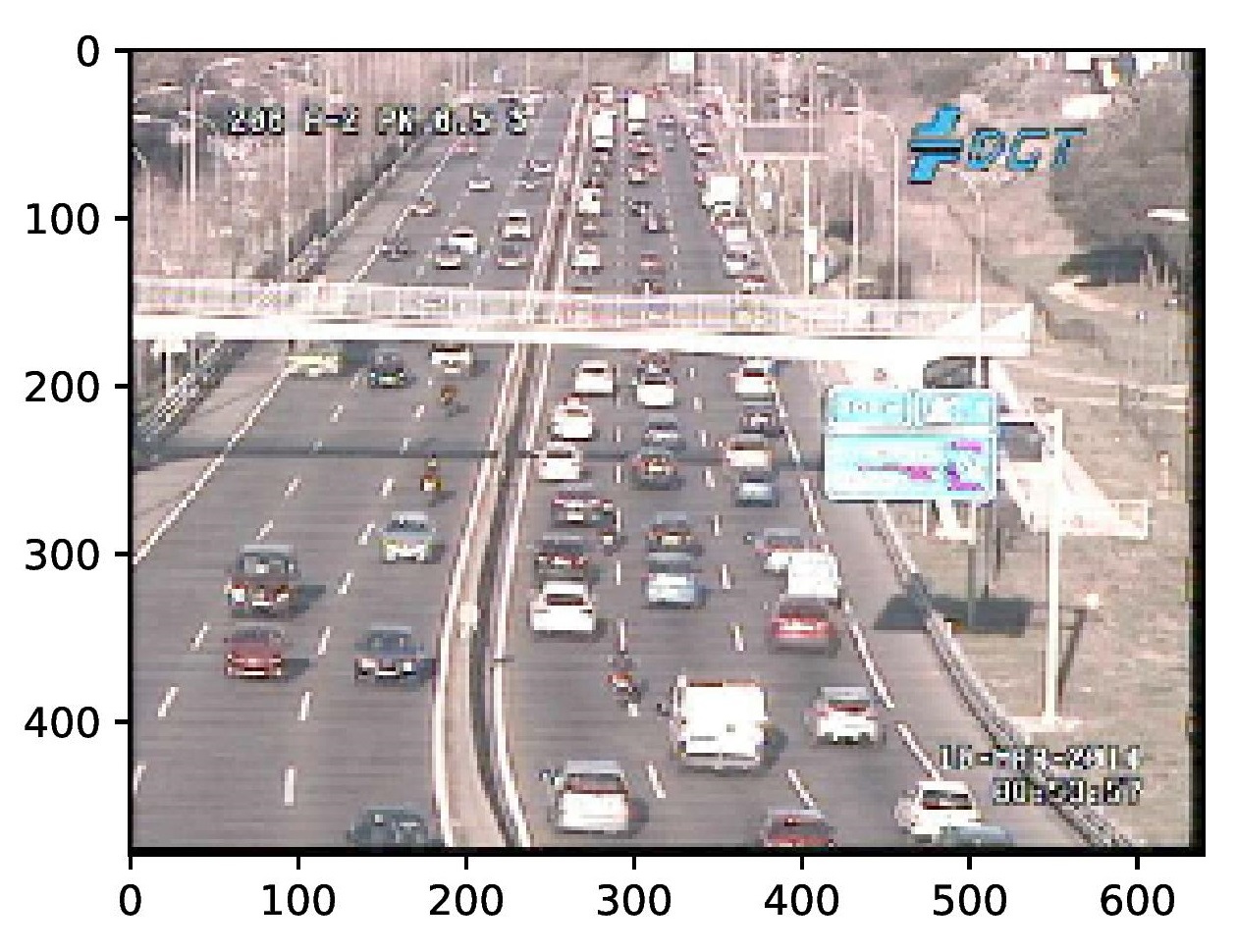}
     }
     \hfill
     \subfloat[Density map\label{fig:Original2}]{         
        \includegraphics[scale=0.5]{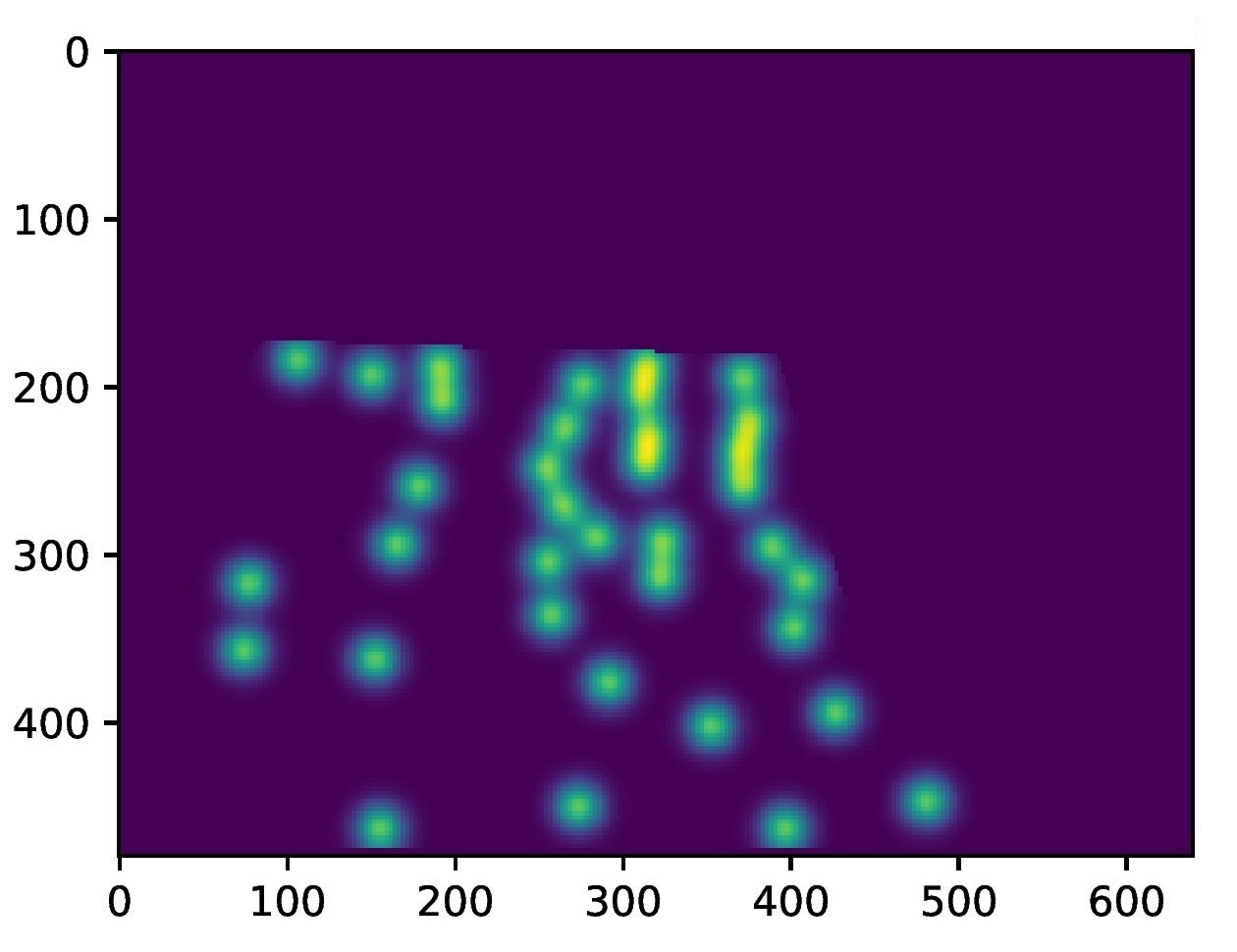}
     }
     \hfill
     \subfloat[Mask\label{fig:Original3}]{         
        \includegraphics[scale=0.5]{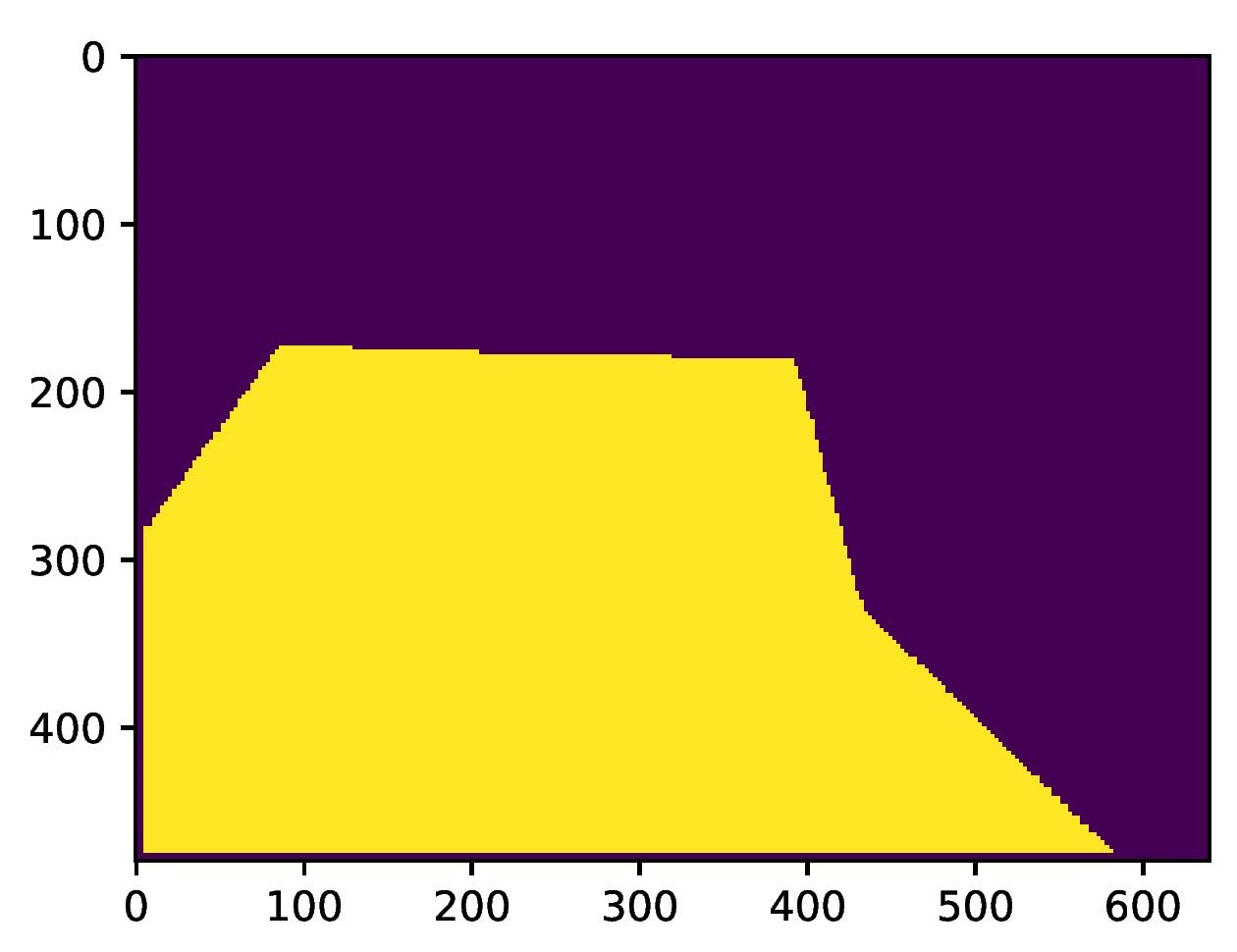}
     }
    \caption{A sample image with the corresponding density map and mask taken by camera 206.}
    \label{fig:Sample_Camera206} 
\end{figure}

\begin{figure}[h]
     \subfloat[A heatmap showing the pairwise distances between cameras.\label{fig:cameras_heatmap}]{         
        \includegraphics[scale=0.3]{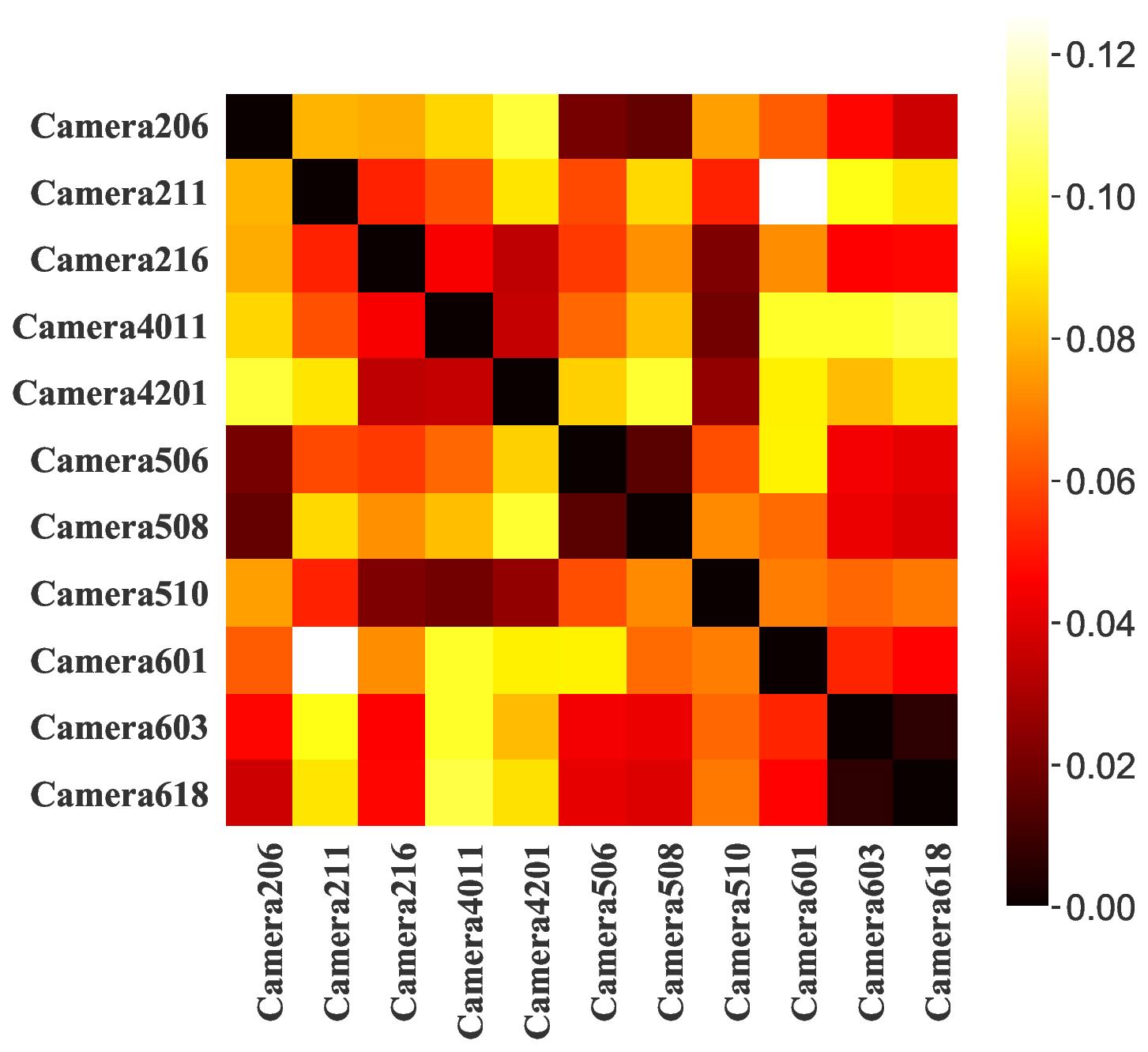}
     }
     \hfill
     \subfloat[Hierarchical clustering performed on the heatmap depicted in Figure \ref{fig:cameras_heatmap}.\\ Notice block-diagonal sub-matrices.\label{fig:clustermap}]{         
        \includegraphics[scale=0.3]{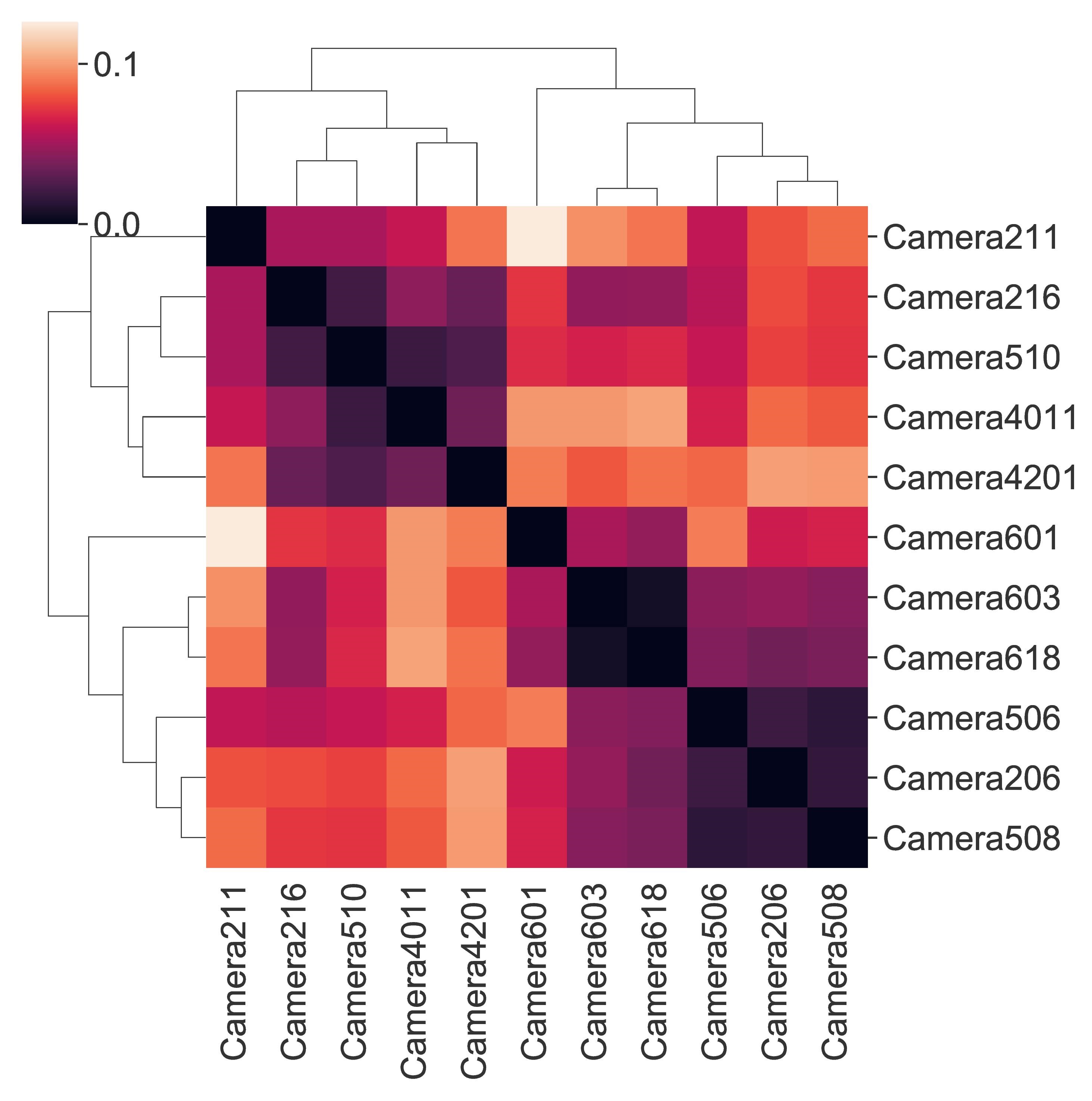}
     }     
    \caption{The heatmap and the corresponding hierarchical clustering to obtain the five domains.}
    \label{fig:cameras_heatmap_clustermap} 
\end{figure}     

\begin{figure}[h]
\centering
\includegraphics[width=0.8\linewidth]{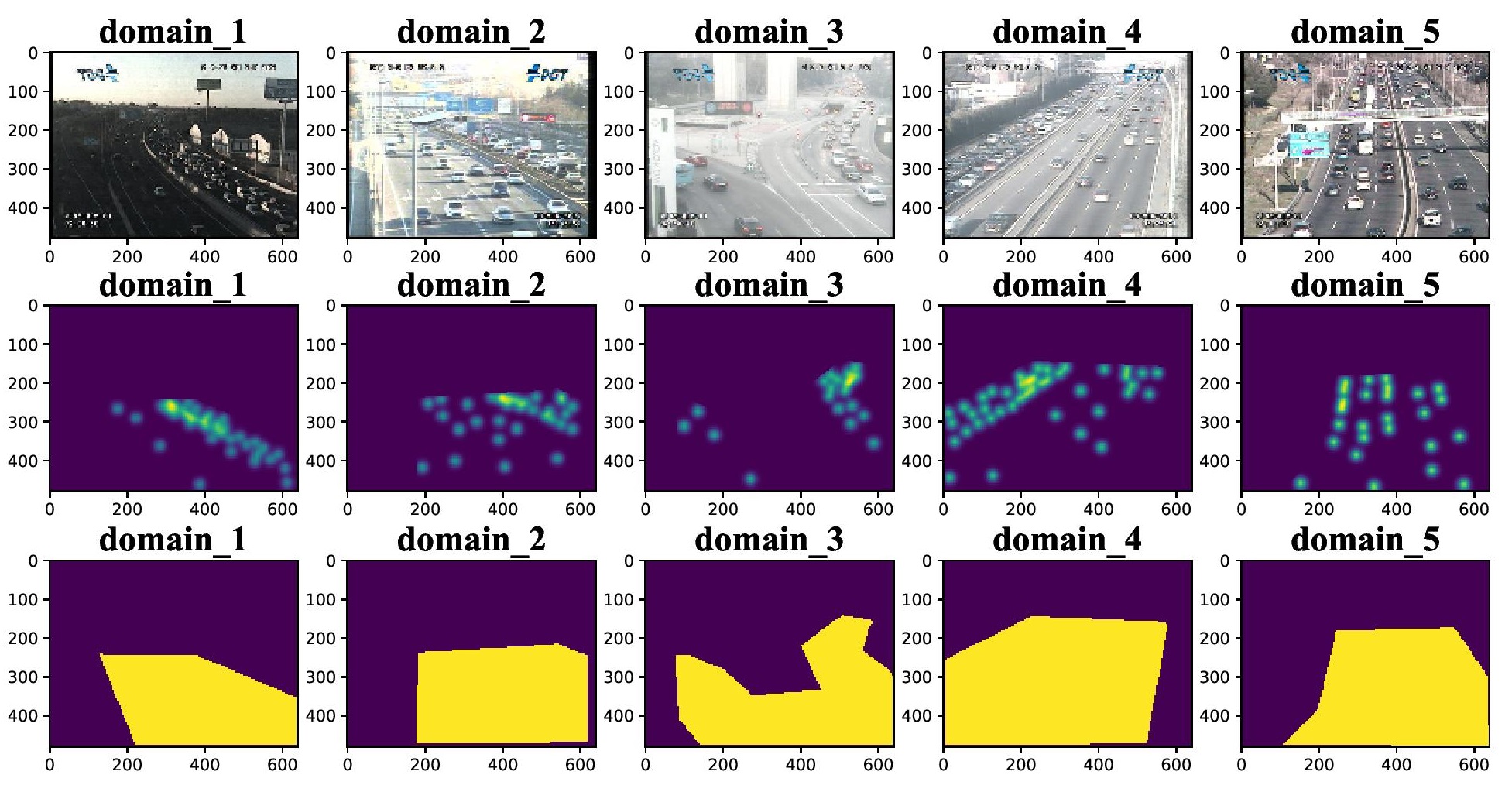}
\caption{A sample images with their corresponding density map and mask, taken from the obtained domains.}
\label{fig:Sample_All_Domains}
\end{figure}

We create source domains for the vehicle counting by grouping over the available cameras. To this end, we compute the mean mask for each camera and measure the distance between each pair of cameras (see Figure \ref{fig:cameras_heatmap}). Applying a hierarchical clustering, we identify five domains shown as block-diagonal in Figure \ref{fig:clustermap} with clear overlaps between domains. The final domains, their cameras and number of images are depicted in Table~\ref{tab:counting_domains}. Figure~\ref{fig:Sample_All_Domains} shows a sample image and its corresponding density map and  mask for each of the five discovered domains.

\begin{table}[h!]
\caption{TRANCOS: The final domains after applying hierarchical clustering on the cameras.}
\label{tab:counting_domains} 
\centering
\begin{tabular}{c|l|c}
\hline \hline
Domain & Cameras         & Num. Images \\ \hline
1      & 211, 216        & 166         \\ \hline
2      & 510, 4011, 4201 & 295         \\ \hline
3      & 601             & 274         \\ \hline
4      & 603, 618        & 252         \\ \hline
5      & 206, 506, 508   & 257         \\ \hline \hline
\end{tabular}
\end{table}

We designed an hourglass network \cite{newell2016stacked} such that the encoder is composed of 6 blocks of 2D convolution with 32 channels and $3 \times 3$ kernels, batch normalization, relu, and max-pooling operations. The decoder spatially expands and reconstructs the ground truth. It contains 6 blocks of transpose convolution layers of 32 channels and $4 \times 4$ kernels, batch normalization, and relu activation. We run each method for 30 iterations use batches of size 2 for each source. 
\begin{table*}[t]
	\caption{Performance comparison in terms of mean absolute error (MAE) over three iterations on TRANCOS data (with standard error in brackets). The best performance is marked in boldface. DARN fails to generalize on the source domains, hence, performs very poorly on the target domains.}\label{tab:Camer_counting_supp}
	  \centering
	\begin{tabular}{p{0.04\textwidth}cccc|cc|c}
		\toprule
		&AHD&DANN&AHD-&DARN&\multicolumn{2}{c|}{MDAN}&MDD\\
		&-1S&-1S&MSDA&&-Max&-Dyn& \\
		\midrule
		Dom1&46.87 \small{(12.89)}&16.19 \small{(0.42)}&57.19 \small{(22.93)}&---&32.17 \small{(7.98)}&29.35 \small{(3.96)}&\textbf{14.73} \small{(0.52)}\\
		Dom2&27.39 \small{(4.8)}&21.7 \small{(0.86)}&33.8 \small{(6.51)}&---&18.02 \small{(0.34)}&\textbf{14.34}\small{(0.24)}&15.27 \small{(0.92)}\\
		Dom3&63.69 \small{(31.62)}&28.43 \small{(5.63)}&63.27 \small{(24.77)}&---&38.5 \small{(11.77)}&26.81 \small{(4.61)}&\textbf{24.67} \small{(3.43)}\\
		Dom4&23.02 \small{(3.71)}&21.54 \small{(5.64)}&88.07\small{(52.72)} &---&19.89 \small{(3.83)}&22.86 \small{(1.04)}&\textbf{14.25} \small{(1.64)}\\
		Dom5&65.89 \small{(22.71)}&57.12 \small{(29.74)}&38.02 \small{(11.7)}&---&57.28 \small{(36.24)}&22.73\small{(4.72)}&\textbf{17.34} \small{(1.43)}\\
		\hline
	\end{tabular}
\end{table*}
To achieve a fair comparison between MDD and the other competitors, we use the encoder part network as a feature extractor, and two decoder networks one for the predictor and the other one for the discriminator. As for MDAN, each domain classifier is defined as the first four layers of the decoder followed by a linear layer. The predicted vehicle count is computed by integrating over the predicted density map after applying the ground truth mask, thereafter, the mean absolute error is computed on the predicted counts. We run each method for 30 iterations and use batches of size 2 for each domain. The reason for the small batch size is the limited computational power and memory (~16GB) our GPU has. With the employed architecture, it was possible to propagate the gradient for maximally ten images at once: $2\times4$ source domains and $2$ for the target domain.

The quantitative results are summarized in Table~\ref{tab:Camer_counting_supp}. Our MDD always achieves the smallest mean absolute error on all target domains, except for "Dom2" of the counting problem. DARN fails to generalize on the source domains of TRANCOS and, hence, performs poorly on the target domain. A close inspection of DARN's weak performance showed that the sparse source weights chosen by DARN caused the algorithm to learn from the small batch of a single random domain each time instead of exploiting all available batches of all domains. Hence, DARN failed on the source domains.

\subsubsection{The YearPredictionMSD dataset}
\label{sec:More_on_YearPredictionMSD}
The task beyond the YearPredictionMSD dataset is to predict the release year of a song based on $90$ ``timbre" features. It includes about 515k songs with release year ranging from 1922 to 2011. We obtain the version hosted at the UCI repository \cite{Dua:2019}. In order to create a multi-source problem, we try to create a set of distinctive domains. To this end, we apply $k$-means on the first 30 features and, thereafter, assign the songs of each cluster to a domain. The resulting domains are $\{$Dom$1,\dots,$Dom$5\}$.  
Figure \ref{fig:YearPredictionMSD} presents how this approach creates five distinguishable domains when shown in a t-distributed stochastic neighbor embedding (t-SNE) applied on the whole features of the dataset \cite{hinton2002stochastic}; similarly, the histograms depict how the target distributions vary considerably between the different domains.

The quantitative results on the YearPredictionMSD dataset are summarized in Table~\ref{tab:YearPredictionMSD}. Our MDD always achieves the smallest mean absolute error in all target domains, expect for "Dom3".

\begin{table*}[t]
	\caption{Performance comparison in terms of mean absolute error (MAE) over five iterations on YearPredictionMSD data (with standard error in brackets). The best performance is marked in boldface. On Dom$1$ and Dom$5$, DANN-1S fails to generalize and performs very poorly, hence, we omit its results on these domains.}\label{tab:YearPredictionMSD}
	  \centering
	\begin{tabular}{p{0.04\textwidth}cccc|cc|c}
		\toprule
		&AHD&DANN&AHD-&DARN&\multicolumn{2}{c|}{MDAN}&MDD\\
		&-1S&-1S&MSDA&&-Max&-Dyn& \\
		\midrule
		Dom1&7.10 \small{(0.07)}&---&7.04 \small{(0.07)}&7.0 \small{(0.06)}&18.1 \small{(9.2)}&16.8\small{(8.6)}&\textbf{6.91} \small{(0.08)}\\
		Dom2&8.42 \small{(0.07)}&34.9 \small{(14)}&8.28 \small{(0.02)}&8.27 \small{(0.02)}&42.8 \small{(14)}&43.4 \small{(14)}&\textbf{8.23} \small{(0.03)}\\
		Dom3&7.95 \small{(0.09)}&30.2 \small{(0.04)}&7.8 \small{(7.4)}&\textbf{7.78} \small{(0.04)}&33.4 \small{(9)}&33.8 \small{(9.4)}&7.95 \small{(0.13)}\\
		Dom4&7.74 \small{(0.04)}&22.3 \small{(7.6)}&7.61 \small{(0.04)}&7.60 \small{(0.02)} &28.5 \small{(10)}&29.9 \small{(11)}&\textbf{7.54} \small{(0.06)}\\
		Dom5&7.56 \small{(0.06)}&---&7.5 \small{(0.05)}&7.41 \small{(0.06)}&23.5 \small{(8.3)}&24.6 \small{(8)}&\textbf{7.31} \small{(0.09)}\\
		\hline
	\end{tabular}
\end{table*}

\begin{figure}[h]
     \subfloat[]{         
        \includegraphics[width=.49\linewidth]{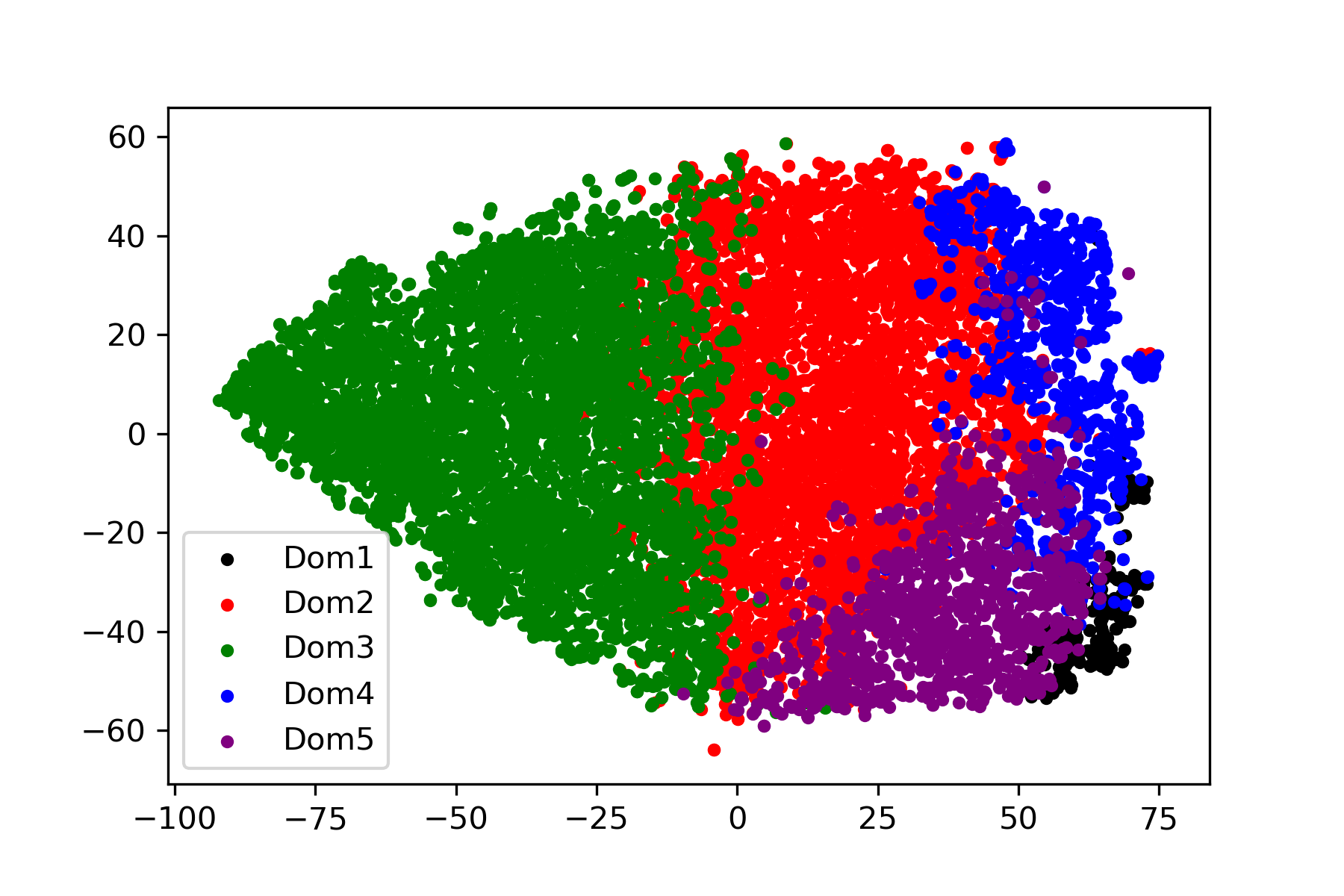}}
     \hfill
     \subfloat[]{         
        \includegraphics[width=.49\linewidth]{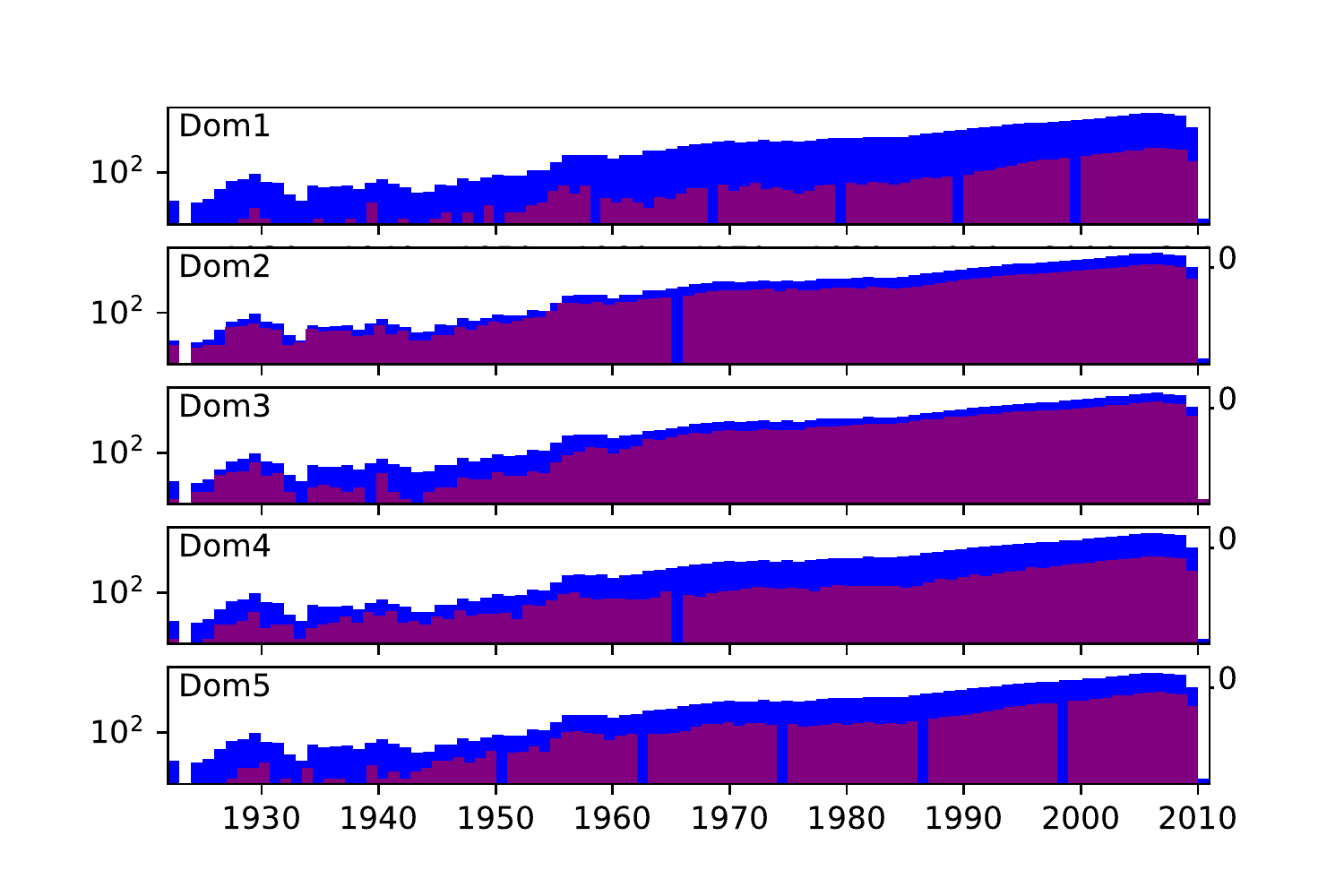}}
    \caption{The transformation of the YearPredictionMSD data into multiple domains. (Left) $t$-SNE visualization of the different domains discoverd in the YearPredictionMSD data. (Right) The histograms of the release year in each domain compared to that of the whole dataset (seen in the background in blue). The log scale is applied on the year in the y-axis.}
    \label{fig:YearPredictionMSD}
\end{figure}    

\subsubsection{The relative location of CT slices on axial axis dataset}
\label{sec:relative_CT}
The task beyond the relative location of CT \cite{graf20112d} is to predict the location of an image on the axial axis based on two histograms in polar space. It includes a set of 53500 CT images for 74 different patients.
We obtain the version hosted at the UCI repository \cite{Dua:2019}. As for the YearPredictionMSD data, we create a multi-source problem by assigning each patient randomly to a group; thereafter, we consider each group as a domain. The resulting domains are $\{$Dom$1,\dots,$Dom$5\}$. Figure \ref{fig:CT} presents the resulting five domains shown in a t-SNE embedding; the histograms show how the distribution of the target attribute differs between the five domains (groups of patients).

The quantitative results on relative CT dataset are summarized in Table~\ref{tab:CT}. We use the same baselines and state-of-the-art methods as described in the manuscript, except for DANN-1S, which fails to generalize and performs very poorly on all domains; hence, we omit it. The results confirm that our MDD always commits the slightest mean absolute error for all target domains.

\begin{figure}[h]
     \subfloat[]{
        \includegraphics[width=.49\linewidth]{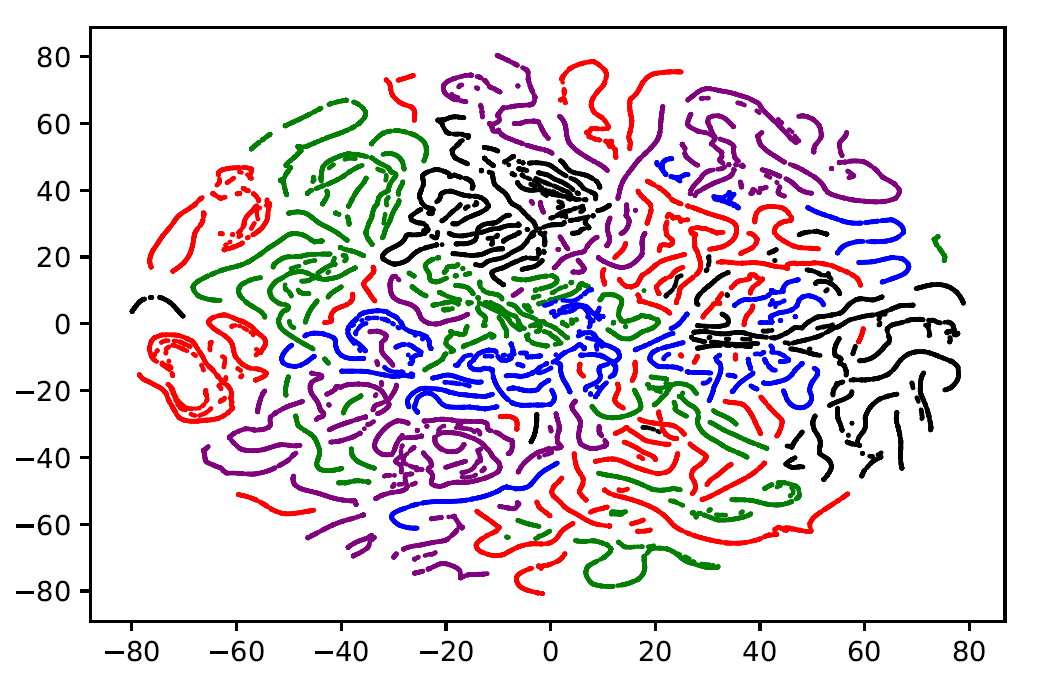}}
     \hfill
     \subfloat[]{         
        \includegraphics[width=.49\linewidth]{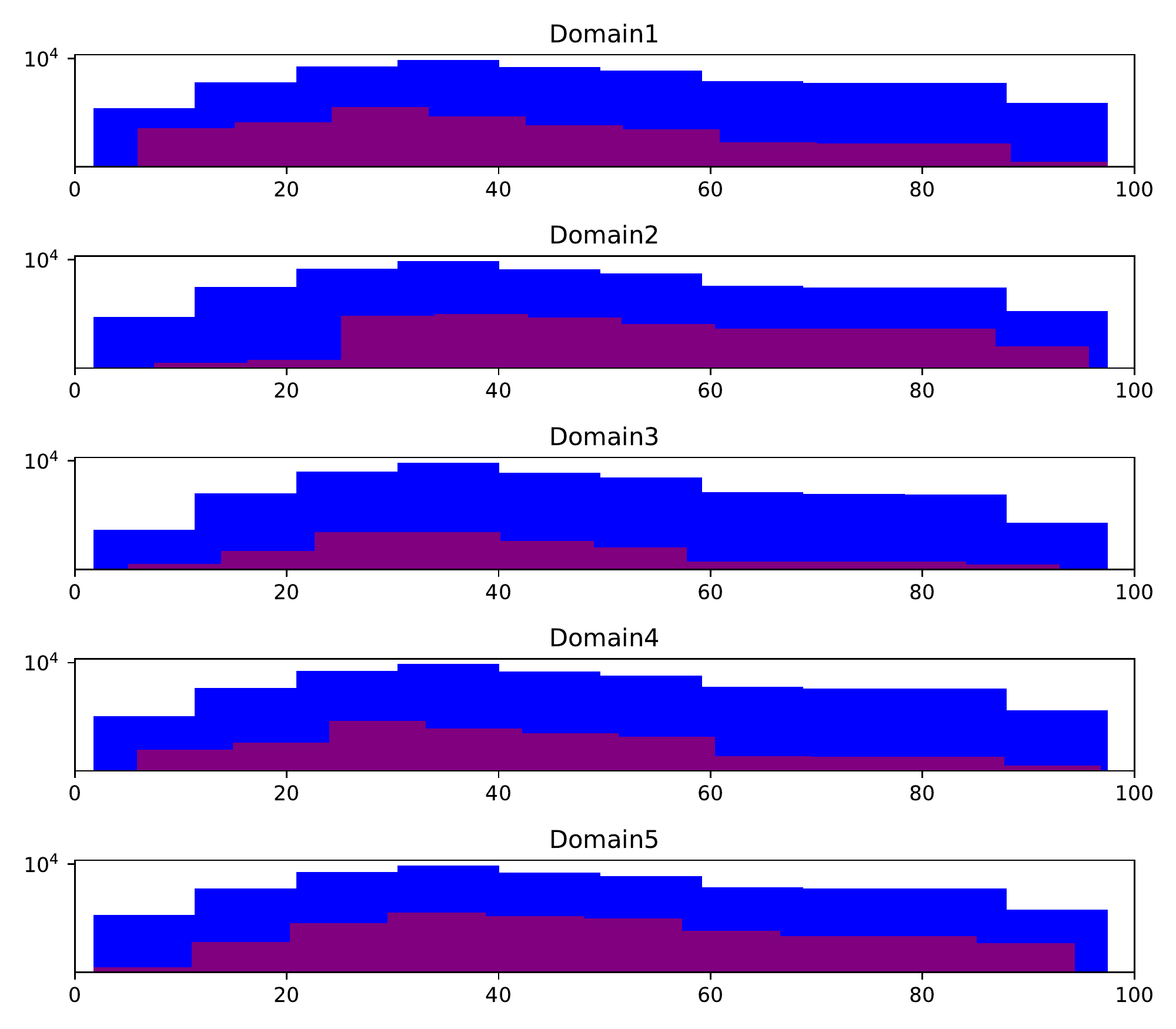}}
    \caption{The transformation of the CT data into multiple domains. (Left) $t$-SNE visualization of the different domains discovered in the relative CT data. (Right) The histograms of the 
    locations in the axial axis in each domain compared to that of the whole dataset (seen in the background in blue).}
    \label{fig:CT}
\end{figure}

\begin{table}[h]
	\caption{Performance comparison in terms of mean absolute error (MAE) over five iterations on relative CT data (with standard error in brackets).}\label{tab:CT}
	  \centering
	\begin{tabular}{p{0.04\textwidth}ccc|cc|c}
		\toprule
		&AHD&AHD-&DARN&\multicolumn{2}{c|}{MDAN}&MDD\\
		&-1S&MSDA&&-Max&-Dyn& \\
		\midrule
		Dom1&5.92 \small{(0.07)} & 5.95 \small{(0.17)} & 19.83 \small{(0.08} & 5.47 \small{(0.07)}&4.89 \small{(0.07)}&\textbf{4.45} \small{(0.09)}\\
		Dom2&5.52 \small{(0.12)} & 5.01 \small{(0.13)} & 18.36 \small{(0.11} & 5.13 \small{(0.06)}&4.32 \small{(0.04)}&\textbf{4.27} \small{(0.04)}\\
		Dom3&6.33 \small{(0.08)} & 5.39 \small{(0.10)} & 19.49 \small{(0.07} & 5.20 \small{(0.13)}&4.55 \small{(0.10)}&\textbf{4.46} \small{(0.08)}\\
		Dom4&6.03 \small{(0.12)} & 5.69 \small{(0.14)} & 18.81 \small{(0.06} & 5.32 \small{(0.05)}&4.59 \small{(0.06)}&\textbf{4.35} \small{(0.07)}\\
		Dom5&6.02 \small{(0.14)} & 5.60 \small{(0.17)} & 17.89 \small{(0.02} & 5.45 \small{(0.06)}&4.98 \small{(0.07)}&\textbf{4.4} \small{(0.05)}\\
		\hline
	\end{tabular}
\end{table}

\subsubsection{Learned Weights by MDD}

In fact, the interpretability of the learned weights is hard to justify for the real-world data in which the ground truth relations between tasks are not available. This is also the motivation why we use synthetic data (in which the ground truth on the strength of domain relatedness is known) to judge if our MDD can learn meaningful weights. The results in subsection ``Visualizing Domain Importance in Synthetic Data" suggest that the weights learned by our MDD are indeed more interpretable than that learned by DARN and AHD-MSDA.

We also plot the weights in each training epoch. Fig.~\ref{fig:weights_epochs} shows that DARN found weights oscillate from zero to one back and forth until reducing the altitude of the jumps around the 30th epoch. This explains the bad generalization observed on the vehicle counting problem as discussed above.

MDD, on the other hand, has a smooth development of the found weights, as seen on test domains 0, 2, and 5 (subfigures (A), (C), and (F) ). Another important observation that can be seen is that sometimes weights get stuck in local minima, which is eventually overcome in the following epochs. See, for example, Subfigure (D), where domain 1 gains a significant weight before being discovered as irrelevant and then gets down-weighted.

On the other hand, it is worth noting that, the relatively smooth evolution results of DARN in the classification case (Fig. 5 in~\cite{wen2020domain}) are actually generated by ``exponential moving averages with a decay rate of $0.95$".

\begin{figure}
	\centering	
	\includegraphics[width=0.85\textwidth]{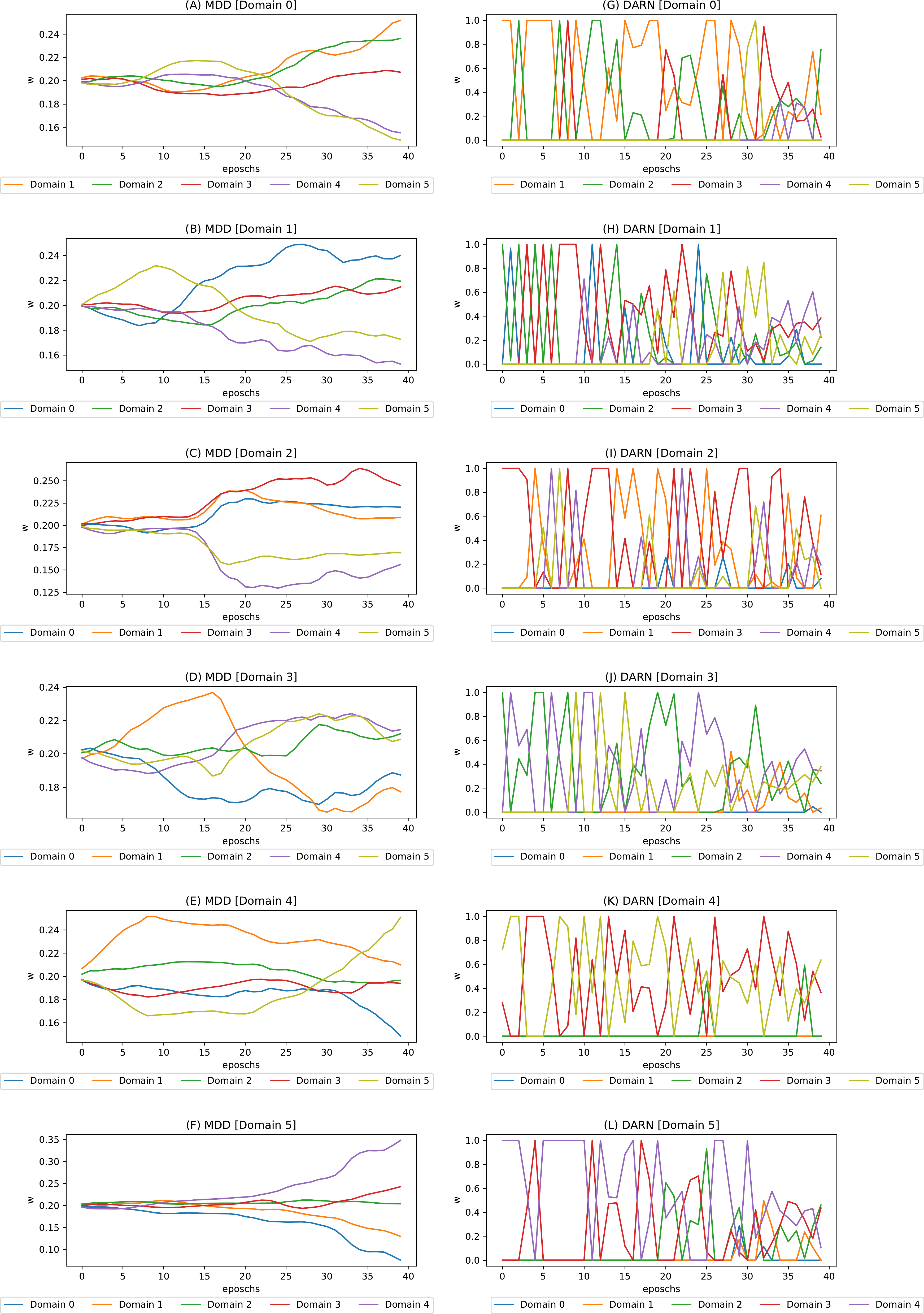}
	\caption{The learned source weight by MDD and DARN on the synthetic dataset in different training epochs.}
	\label{fig:weights_epochs}
\end{figure}

Here, we additionally plot the weights that our MDD learned in each adaptation scenario on Amazon Review dataset. Not surprisingly, we observed similar observations as on the synthetic data. For example, when the target is ``computer$\&$video-games”, our MDD selects ``electronics” as the source with the richest information (see Fig.~\ref{fig:alphas}). These two domains have more semantic similarity, because they have overlapping products.

\begin{figure}
	\centering	
	\includegraphics[width=0.3\textwidth]{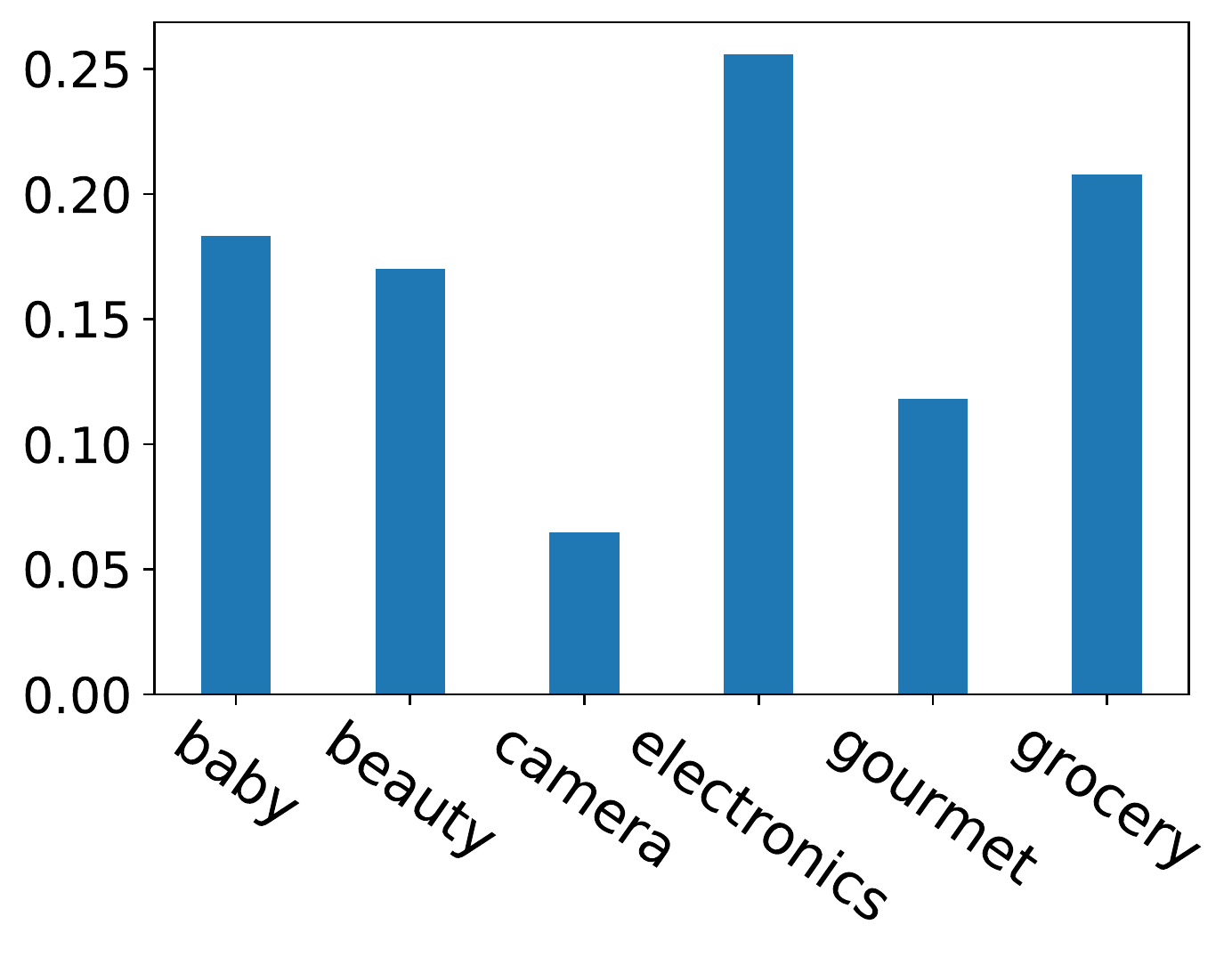}
	\caption{The learned source weight by MDD when target is ``computer\& games".}
	\label{fig:alphas}
\end{figure}

\subsection{Additional Results and Information for Evaluating the Continual Learning by Representation Similarity Penalty}
\label{Additional information for Evaluating CL}
In our experiments, we used the following continual learning libraries:
\begin{itemize}
    \item MER: Apache License, Version 2.0. \url{https://github.com/mattriemer/MER/blob/master/LICENSE}.\\
    This repository also offers the implementation of EWC and GEM. We used GEM's implementation as a basis to implement AGEM.
    \item REWC: MIT License. \url{https://github.com/xialeiliu/RotateNetworks}
\end{itemize}

\subsubsection{Hyperparameter Search}	
To ensure a fair comparison, we start with a grid-based hyperparameter search for each of the methods on each of the datasets using a sample of 5 tasks and 300 samples per task. The found parameters are reported in the following:\\ 	\begin{itemize}
	\item EWC found hyperparameters:
	\begin{itemize}
	\item learning rate: $lr \in \{$0.001(Omni), 0.003 (notmnistP), 0.01 (mnistR, mnistP, fashionP), 0.03, 0.1, 0.3, 1.0 $\}$ 
	\item regularization: $\lambda \in \{$1 (notmnistP), 3 (mnistR), 10 (Omni), 30, 100 (mnistP, fashionP), 300, 1000, 3000, 10000, 30000$\}$
	\end{itemize}
	\item R-EWC found hyperparameters:
	\begin{itemize}
	\item learning rate: $lr \in \{$0.001 (fashionP, mnistP, mnistR, notmnistP), 0.003, 0.01, 0.03, 0.1, 0.3, 1.0 $\}$ 
	\item regularization: $\lambda \in \{$1, 3, 10, 30 (mnistP, mnistR), 100, 300, 1000 (fashionP), 3000, 10000 (notmnistP), 30000$\}$
	\end{itemize}
	
	\item Meta-Experience Replay found hyperparameters:
    \begin{itemize}
	\item learning rate: $lr \in \{$0.001, 0.003, 0.005 (Omni), 0.01, 0.03, 0.1 (fashionP, mnistP, mnistR, notmnistP) $\}$ 
	\item across batch meta-learning rate: $\gamma = 1$
	\item within batch meta-learning rate: $\beta \in \{$0.01 (fashionP, mnistP, mnistR), 0.03 (notmnistP), 0.1, 0.3, 1.0 (Omni)$\}$
	\end{itemize}
	
	\item AGEM found hyperparameters:
	\begin{itemize}
	\item learning rate: $lr \in \{$0.001, 0.003, 0.005 (Omni), 0.01 (notmnistP,mnistR, mnistP, fashionP), 0.03, 0.1, $\}$ 
	\item memory strength: $ms \in 
		\{$0.0 \text{(notmnistP)}, 0.1, 0.5  \text{(mnistR, mnistP, fashionP, Omni)}, 1.0$\}$
	\end{itemize}
	\end{itemize}
	
    Without any further tuning, we adopt the same found parameters to our proposed modification, except for the memory strength, in RSP, that we force to be less than $10$.
    
\subsubsection{Sensitivity Analysis on the Number of Groups}
In this experiment, we study the sensitivity on the number of groups used by RSP. The analysis considers different numbers of groups, i.e., $K_d \in \{5, 10, 15, 20\}$ for all $d$. 
Table~\ref{tb:EWCDiffNumGroups} shows that RSP is insensitive to the number of groups. This can be inferred by the 
very small slope of the performance curve when increasing the number of groups $K_d$.

\begin{table}
		\centering
		\caption{Retained accuracy for RSP when a different number of groups is used. Five iterations are used. The numbers in parentheses are the standard error.}
		\begin{tabularx}{\columnwidth}{p{0.15\columnwidth}| p{0.16\columnwidth} p{0.16\columnwidth} p{0.16\columnwidth} p{0.16\columnwidth}}
		\hline		
			\text{Data} & \text{5} &  \text{10} &  \text{15} &  \text{20}\\
			\hline		
			\multirow{1}{*}{notmnistP}&71.0\small{(0.7)}&71.27\small{(0.7)}&71.01\small{(0.6)}&71.45\small{(0.7)}\\
			\multirow{1}{*}{fashionP}&63.16\small{(0.6)}&63.2\small{(0.5)}&63.62\small{(0.5)}&64.31\small{(0.6)}\\
			\multirow{1}{*}{mnistR}&61.43\small{(0.5)}&61.02\small{(0.5)}&61.63\small{(0.4)}&61.85\small{(0.3)}\\
			\multirow{1}{*}{mnistP}&71.91\small{(0.8)}&72.34\small{(0.5)}&72.08\small{(0.6)}&71.78\small{(0.6)}\\
			\hline
		\end{tabularx}
		\label{tb:EWCDiffNumGroups}
	\end{table}

\subsubsection{Experiment on Omniglot}	
We also explore the ability of RSP to overcome forgetting on the Omniglot dataset \cite{lake2011one}. We restrict the experiment on the first ten alphabets, and, unlike the online setting used in our previous experiments, we allow 500 epochs per task and a block size of $40$ samples. 

We follow \cite{vinyals2016matching,riemer_learning_2018} and use an architecture containing four blocks each of which contains a $3 \times 3$ convolution with 64 filters, a Relu activation and $2\times2$ max-pooling. The blocks are followed by two-fully connected layers and then multiple heads, one for each task. 
RSP operates by first applying grouping on each of the fully-connected layers, and then computing the parameter penalties based on the induced tasks' representation similarities by the groups they belong to, as explained earlier.
For EWC, we use the suggested parameters by \cite{riemer_learning_2018} ($lr=0.001$ and $ms=10$), and find 
$lr=0.005$ and $ms=0.005$) for RSP.
For MER, we set the $samples\_per\_block=10$ and reduce the number of epochs by $10$ for a fair comparison.
Table \ref{tb:omniglot} shows that RSP, despite the drop in performance, still presents better retained and learning accuracies.

In this experiment, we also tried to compare with REWC. Still, unlike the other methods, REWC requires a lot of engineering effort to introduce the rotation layers needed before and after each network's layer. This drawback makes it laborious to adopt REWC to new architectures. After a successful adaptation, REWC's performance was not competitive with the other methods, hence, we omitted its results.
\begin{table}[h]
	\centering
	\caption{Performance comparison between RSP and EWC on the Omniglot dataset. The numbers in parentheses are the standard errors (SE) of the means in the former row.}		
	\begin{tabularx}{0.9\columnwidth}{p{0.08\columnwidth} |rrr|rrr|rrr|rrr}
		\hline
		& \multicolumn{3}{c|}{AGEM} & \multicolumn{3}{c|}{MER} &  \multicolumn{3}{c|}{EWC} & \multicolumn{3}{c}{RSP} \\
		\text{} & \text{RA} &  \text{LA} &  \text{BT}& \text{RA} &  \text{LA} &  \text{BT}& \text{RA} &  \text{LA} &  \text{BT}& \text{RA} &  \text{LA} &  \text{BT} \\
		\hline
		\multirow{2}{*}{Omniglot}
&.095 &.236 &.142 &.072 &.109 &.037 &.054&.221&-.17&\textbf{.105}&\textbf{.434}&-.33\\
    &\small{(.004)} &\small{(.011)} &\small{(.01)}&\small{(.003)} &\small{(.003)} &\small{(.004)}&\small{(.005)}&\small{(.002)}&\small{(.005)}&\small{(.007)}&\small{(.006)}&\small{(.0)}\\  
		\hline
	\end{tabularx}
	\label{tb:omniglot}
\end{table}

\end{document}